%% file: neurips_2026.tex
\documentclass{article}

\usepackage[preprint]{neurips_2026}

\usepackage[utf8]{inputenc}
\usepackage[T1]{fontenc}
\usepackage{hyperref}
\usepackage{url}
\usepackage{booktabs}
\usepackage{amsfonts}
\usepackage{nicefrac}
\usepackage{microtype}
\usepackage{xcolor}

\newcommand{\papertitle}{Simplex Relaxation for Discrete Diffusion}
\title{\papertitle}
\input{author_block}
\input{preamble}

\begin{document}
\maketitle
\input{paper_body}

\end{document}

%% file: author_block.tex
\author{
  \vspace{-15pt}\\
  \textbf{
    Jinya Sakurai\textsuperscript{1 2 4}\thanks{This work was done while the author was a visiting student at A*STAR.}\quad~
    Patrick Pynadath\textsuperscript{3}\quad~
    Satoshi Hayakawa\textsuperscript{2}
    }\\
    \textbf{
    Jaehong Yoon\textsuperscript{1}\quad~
    Xulei Yang\textsuperscript{4}\quad
    Nancy F. Chen\textsuperscript{4,5}\quad
        Xun Xu\textsuperscript{4,5}\quad~
  }
  \vspace{5pt} \\
  \textsuperscript{1}NTU Singapore\quad~
  \textsuperscript{2}The University of Tokyo \\
  \textsuperscript{3}Purdue University\quad~
  \textsuperscript{4}Institute for Advanced Intelligence and Computing (IAIC), A*STAR\\
\textsuperscript{5}Centre for Frontier AI Research (CFAR), A*STAR
  \vspace{8pt} \\
}

%% file: preamble.tex
\usepackage{graphicx}
\usepackage[table]{xcolor}
\usepackage{booktabs}
\usepackage{multirow}
\usepackage{array}
\usepackage{tabularx}
\usepackage{placeins}
\usepackage{wrapfig}

\usepackage{amsmath}
\usepackage{amssymb}
\usepackage{amsfonts}
\usepackage{amsthm}
\usepackage{bm}
\usepackage{bbm}
\usepackage{mathtools}

\usepackage{algorithm}
\usepackage{algorithmic}

\usepackage{pifont}
\usepackage[most]{tcolorbox}
\usepackage{caption}
\usepackage{tikz}
\usepackage{enumitem}
\usepackage{hyperref}
\usepackage{cleveref}

\setlist[itemize]{leftmargin=2em}
\setlist[enumerate]{leftmargin=2em}

\hypersetup{
  colorlinks=true,
  linkcolor=red,
  citecolor=blue,
  urlcolor=magenta
}

\definecolor{citecolor}{RGB}{100,100,200}
\definecolor{lrssyellow}{RGB}{255,245,204}
\definecolor{lightgraytext}{gray}{0.55}

\definecolor{owtpanel}{HTML}{F3F6F8}
\definecolor{owtgray}{HTML}{68777F}
\newcolumntype{Y}{>{\raggedright\arraybackslash}X}


\newtheorem{proposition}{Proposition}
\newtheorem{restate_proposition}{Proposition}

\newtheorem{lemma}{Lemma}
\newtheorem{corollary}{Corollary}

\theoremstyle{definition}

\theoremstyle{remark}

\DeclareMathOperator{\Cat}{Cat}
\DeclareMathOperator{\Dir}{Dir}

\newcommand{\KL}[2]{D_{\mathrm{KL}}\!\left[#1 \,\middle\|\, #2\right]}

\newcommand{\EE}[2]{\mathbb{E}_{#1}\!\left[#2\right]}

\newcommand{\Loss}[1]{\mathcal{L}_{#1}}
\newcommand{\bLoss}[1]{\bar{\mathcal{L}}_{#1}}

\newcommand{\inner}[2]{\left\langle #1,\, #2 \right\rangle}

\newcommand{\Vocab}{\mathcal{V}}
\newcommand{\simplex}{\Delta}
\newcommand{\one}{\mathbf{1}}
\newcommand{\ek}{\mathbf{e}_k}

\newcommand{\ba}{\mathbf{a}}
\newcommand{\bb}{\mathbf{b}}
\newcommand{\be}{\mathbf{e}}

\newcommand{\bp}{\mathbf{p}}
\newcommand{\bw}{\mathbf{w}}
\newcommand{\bx}{\mathbf{x}}
\newcommand{\by}{\mathbf{y}}
\newcommand{\bz}{\mathbf{z}}

\newcommand{\balpha}{\boldsymbol{\alpha}}

\newcommand{\bpi}{\boldsymbol{\pi}}

\newcommand{\pt}[1]{\bp_t(#1)}
\newcommand{\ps}[1]{\bp_s(#1)}
\newcommand{\rst}[2]{\mathbf{r}_{s\mid t}(#1,#2)}
\newcommand{\rhost}[2]{\boldsymbol{\rho}_{s\mid t}(#1,#2)}

%% file: paper_body.tex
\input{sec/0_abstract}
\input{sec/1_introduction}
\input{sec/2_preliminaries}
\input{sec/3_method}
\input{sec/5_experiment}
\input{sec/6_related_work}
\input{sec/7_conclusion}

\section*{Acknowledgments}
We thank Chanhyuk Lee, Jaehoon Yoo, and Jinwoo Kim for insightful discussions.


\bibliographystyle{plainnat}
\bibliography{main}

\clearpage
\appendix

\begin{center}
  {\Large\bfseries\papertitle\par}
  \vspace{0.5em}
  {\Large Supplementary Material\par}
\end{center}
\input{sec_app/a_full_hierarchy}
\input{sec_app/b_rdb_closed_form}
\input{sec_app/c_continuous_time}
\input{sec_app/d_sampling}
\input{sec_app/e_alternative_objectives}

\input{sec_app/f_owt_experiment}
\input{sec_app/g_sudoku_experiment}

%% file: sec/0_abstract.tex
\begin{abstract}
Discrete diffusion models for categorical generation are defined by a corruption kernel, which determines the intermediate state space and the associated reverse prediction problem. We study uniform discrete diffusion and ask whether its training objective and reverse transitions can be enriched without changing the underlying categorical corruption process. We introduce Simplax, an exact Dirichlet--categorical augmentation that couples each corrupted categorical state with an auxiliary simplex-valued variable while preserving the original uniform diffusion process as its categorical marginal. This augmentation yields a tractable Rao--Blackwellized reverse-bridge objective and a corresponding stochastic reverse sampler, while retaining the corrupted categorical state as the denoiser input. Empirically, Simplax improves the generative perplexity--entropy tradeoff on unconditional OpenWebText generation. On Sudoku, a model trained exclusively on $30$-clue puzzles achieves the highest accuracy among the compared methods across all evaluated clue densities, including the minimum uniquely solvable $17$-clue regime, and also achieves the highest validity in unconditional generation.
\end{abstract}

%% file: sec/1_introduction.tex
\section{Introduction}
\label{sec:introduction}

\begin{figure}[t]
\centering
\includegraphics[width=\textwidth]{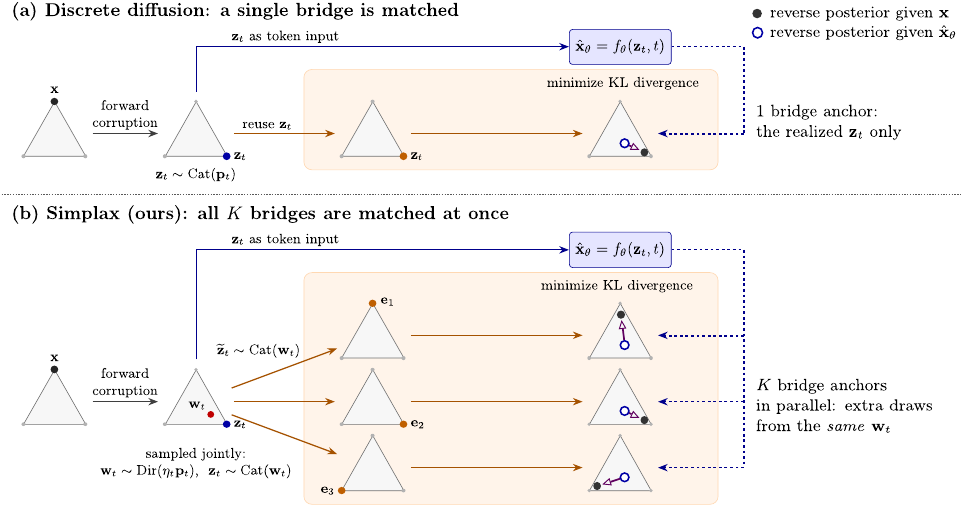}
\caption{%
Reverse-bridge matching, illustrated for $K=3$: one-hot states are vertices of
$\simplex^{K-1}$ and $\bw_t$ is an interior point.
(a) The corrupted state $\bz_t$ is both the denoiser input and the sole bridge anchor,
so a single bridge is matched per sample.
(b) Simplax samples $(\bw_t,\bz_t)$ jointly; $\bz_t$ remains the denoiser input, while
extra draws $\widetilde{\bz}_t\sim\Cat(\bw_t)$ anchor all $K$ bridges, which are matched
at once and marginalized exactly to give \eqref{eq:rdb_closed_form}.%
}
\label{fig:bridge_comparison}
\end{figure}

Discrete diffusion models have emerged as a promising framework for generative modeling over categorical data, including text, biological sequences, and other symbolic domains \citep{hoogeboom2021argmax,austin2021structured,campbell2022a,lou2024discrete,zhang2025target}. Compared with autoregressive generation, they offer a conceptually different route to parallel prediction by learning to invert a noising process on full categorical states. A central design choice in this framework is the corruption kernel. This choice determines the intermediate state space and the semantics of reverse updates, and shapes the form of the training objective used to approximate the reverse process \citep{austin2021structured,campbell2022a,lou2024discrete}.

Existing discrete diffusion models instantiate this design choice in different ways. Masked diffusion corrupts tokens toward a distinguished mask state, yielding intermediate sequences that can be interpreted as partially observed data \citep{austin2021structured,sahoo2024simple,shi2024simplified,ou2025your,zheng2025masked}. Uniform diffusion instead replaces tokens toward the uniform distribution over the original categorical alphabet, treating all categories symmetrically without introducing a distinguished absorbing state \citep{austin2021structured,schiff2025simple,sahoo2025duality,deschenaux2026duality2}. Recent work has studied uniform diffusion in connection with guidance, repeated token revision, few-step generation, self-correction, and scaling \citep{schiff2025simple,sahoo2025duality,rutte2025generalized,rutte2026scaling,sahoo2026scaling}.

These developments leave open a methodological question for uniform discrete diffusion: can one enrich its training objectives and samplers while keeping the categorical corruption process unchanged? In standard uniform diffusion, the reverse update between two noise levels is expressed directly through sampled categorical states. This preserves the discrete generative process, but it also means that both training and sampling are formulated through categorical intermediate states. We ask whether an auxiliary probabilistic structure can be introduced around these transitions so that tractable objectives and samplers can be derived without changing the forward process itself.

In this work, we consider a probabilistic augmentation of uniform discrete
diffusion that leaves its categorical corruption process unchanged while
introducing an auxiliary simplex-valued state. We introduce Simplax, an exact
Dirichlet--categorical augmentation in which each corrupted categorical state
$\mathbf{z}_t$ is coupled to an auxiliary simplex variable $\mathbf{w}_t$
through a shifted Dirichlet conditional. The resulting augmented hierarchy
preserves the original uniform diffusion process as its categorical marginal
and admits the exact decoder
\[
q(\mathbf{z}_t \mid \mathbf{w}_t)
=
\Cat(\mathbf{z}_t;\mathbf{w}_t).
\]
Thus, the simplex variable is not introduced as a replacement for the
categorical state, but as an auxiliary random variable that is probabilistically
coupled to it and can be used to construct the training objective and reverse
transition.

A direct construction of the reverse objective in the augmented space is
complicated by the fact that the induced simplex reverse bridges are mixtures
of shifted Dirichlet components, whose KL divergence is generally intractable.
We therefore derive a categorical reverse-bridge surrogate by averaging the
standard discrete reverse KL divergence over an auxiliary categorical decode
from $\mathbf{w}_t$. This expectation admits an analytic Rao--Blackwellized
form. We further derive a stochastic ancestral sampler from the same augmented
hierarchy, using the denoiser prediction together with the auxiliary simplex
state to parameterize the reverse update.

We evaluate Simplax on unconditional text generation with OpenWebText and
constrained categorical generation with Sudoku. On OpenWebText, Simplax
achieves favorable generative perplexity--entropy tradeoffs across a wide
range of inference budgets, outperforming the compared methods at most
reported operating points. On Sudoku, all models are trained exclusively
on puzzles with $30$ clues and evaluated in-distribution, under transfer to
both easier and harder clue densities, and in unconditional generation.
Simplax achieves the highest performance among the compared methods across
all evaluated Sudoku settings.

Our main contributions are as follows. First, we introduce an exact Dirichlet--categorical augmentation of uniform discrete diffusion that preserves the original categorical process as a marginal. Second, we derive a tractable categorical reverse-bridge surrogate whose auxiliary categorical expectation admits a Rao--Blackwellized closed form, together with a stochastic ancestral sampler derived from the same augmented hierarchy. Third, we empirically demonstrate that the resulting framework improves generation across both open-ended text modeling and constrained categorical generation, including broad inference-budget regimes and distribution shifts in Sudoku.

%% file: sec/2_preliminaries.tex
\section{Preliminaries}
\label{sec:preliminaries}

\paragraph{Notation.}
Let
$
\Vocab
=
\{
\bx\in\{0,1\}^K
\;:\;
\sum_{i=1}^K x_i = 1
\}
$
denote the set of one-hot vectors over $K$ categories, and let
$\simplex^{K-1}$ denote the probability simplex over $K$ categories.
We represent scalar discrete random variables taking $K$ values as one-hot column vectors in $\Vocab$. 
We write
$\Cat(\cdot;\bpi)$
for the categorical distribution with class probabilities
$\bpi\in\simplex^{K-1}$.
Results involving Dirichlet densities assume
$\pi_k>0$ for every category $k$; the uniform base distribution used in our
experiments satisfies this condition.
We write
$\Dir(\cdot;\balpha)$
for the Dirichlet distribution with concentration vector
$\balpha\in\mathbb{R}_{>0}^K$. We use
$\one\in\mathbb{R}^K$
for the all-ones vector,
$\inner{\ba}{\bb}$
for the inner product,
$\ba\odot\bb$
for the Hadamard product, and
$\ba\oslash\bb$
for elementwise division.
For sequences of length $L$, we write
$\bx^{1:L}\in\Vocab^L$.

\subsection{Dirichlet distribution}
The Dirichlet distribution is a distribution over the probability simplex.
Its mean is given by the normalized concentration vector, while the sum of the concentration parameters controls how concentrated the distribution is around its mean.
In particular, for
$\bp\in\simplex^{K-1}$
and
$\eta>0$,
$\Dir(\cdot;\eta \bp)$
denotes the Dirichlet distribution centered at $\bp$, with $\eta$ controlling its concentration.


\subsection{Discrete diffusion}
We consider a discrete diffusion process with prior
$\bpi\in\simplex^{K-1}$ and noise schedule
$\alpha_t\in[0,1]$.
Following the standard parameterization, the noisy categorical state at time $t$ is distributed as
\begin{equation}
q(\bz_t \mid \bx)
=
\Cat\!\left(
\bz_t;
\pt{\bx}
\right),\quad \pt{\bx}
\coloneq
\alpha_t \bx + (1-\alpha_t)\bpi.
\label{eq:qzt_given_x}
\end{equation}

We use a schedule with $\alpha_0=1$, $\alpha_1=0$, and $\alpha_t<1$ for every $t>0$. Hence $\bp_t(\bx)$ is strictly positive for $t>0$.

For two times $s<t$, the forward transition can be written as
\begin{equation}
q(\bz_t \mid \bz_s)
=
\Cat\!\left(
\bz_t;
\alpha_{t\mid s}\bz_s + (1-\alpha_{t\mid s})\bpi
\right),
\quad
\alpha_{t\mid s}
\coloneq
\frac{\alpha_t}{\alpha_s}.
\label{eq:forward_transition}
\end{equation}
The corresponding reverse posterior has the usual closed form
\begin{equation}
q(\bz_s \mid \bz_t,\bx)
=
\Cat\!\left(
\bz_s;
\rst{\bx}{\bz_t}
\right),
\label{eq:reverse_posterior}
\end{equation}
where
\begin{equation}
\rst{\bx}{\bz_t}
\coloneq
\frac{
\left[
\alpha_{t\mid s}\bz_t
+
(1-\alpha_{t\mid s})
\inner{\bz_t}{\bpi}
\one
\right]
\odot
\ps{\bx}
}{
\inner{\bz_t}{\pt{\bx}}
}.
\label{eq:reverse_posterior_vector}
\end{equation}

Standard discrete diffusion training minimizes the categorical reverse KL divergence
\begin{equation}
\Loss{z_s\mid z_t}(\bz_t,\hat{\bx}_\theta, \bx)
=
\KL{q(\bz_s\mid \bz_t,\bx)}{q(\bz_s\mid \bz_t,\hat{\bx}_\theta)}.
\label{eq:disc_kl}
\end{equation}
where $\hat{\bx}_\theta = f_\theta(\bz_t, t)\in\simplex^{K-1}$ is the model prediction of the clean-token distribution.
We use the shorthand $\bp_t \coloneq \pt{\bx},\ \hat{\bp}_t \coloneq \pt{\hat{\bx}_\theta}$.

%% file: sec/3_method.tex
\section{Method}
\label{sec:method}

We construct Simplax by augmenting the uniform discrete diffusion process with an auxiliary simplex-valued variable. The construction leaves the categorical corruption process unchanged, but introduces an exact Dirichlet--categorical hierarchy around each corrupted state. We first define this hierarchy and derive its reverse bridge identities. We then use these identities to obtain a tractable Rao--Blackwellized training objective and a sampler induced by the same bridge structure.

\subsection{Simplex relaxation}
\label{sec:simplex_relaxation}

Building upon \eqref{eq:forward_transition}, we consider the following joint factorization for two times $s<t$:
\begin{equation}
q(\bx,\bz_s,\bz_t,\bw_s,\bw_t)
=
q(\bx)\,
q(\bz_s\mid \bx)\,
q(\bw_s\mid \bz_s,\bx)\,
q(\bz_t\mid \bz_s)\,
q(\bw_t\mid \bz_t,\bx).
\label{eq:joint_graphical_model}
\end{equation}

\Cref{fig:graphical_model} illustrates the graphical model implied by this factorization.

\begin{wrapfigure}{r}{0.42\textwidth}
\centering
\includegraphics[width=\linewidth]{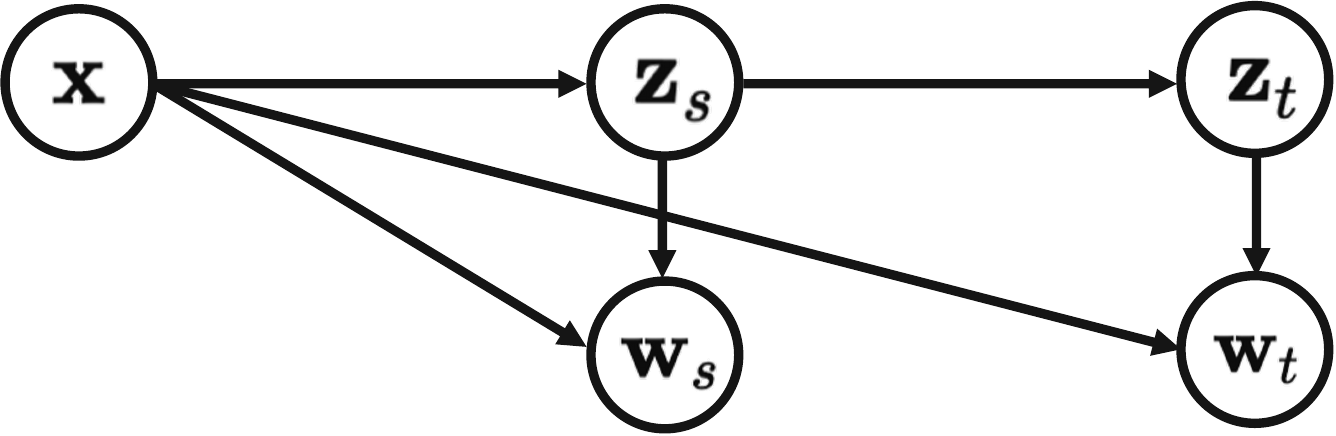}
\caption{Graphical model corresponding to the factorization in \eqref{eq:joint_graphical_model}.}
\label{fig:graphical_model}
\end{wrapfigure}
For $t\in(0,1]$, we introduce a simplex-valued variable $\bw_t \in \simplex^{K-1}$ through
\begin{equation}
q(\bw_t \mid \bz_t,\bx)
=
\Dir\!\left(
\bw_t;
\eta_t \bp_t + \bz_t
\right),
\label{eq:dir_bridge}
\end{equation}
where $\eta_t>0$ is a concentration parameter.
The mean of this Dirichlet distribution is centered at the diffusion-time categorical marginal, while the additive one-hot count $\bz_t$ anchors the relaxed state to the sampled discrete token.

This construction yields an exact Dirichlet--categorical hierarchy.

\begin{proposition}
\label{prop:dir_cat_hierarchy}
Assume $t>0$. The Dirichlet--categorical hierarchy satisfies the following properties; statements involving $\bw_s$ additionally require $s>0$.
\begin{enumerate}
    \item The marginal distribution of the relaxed state is
    \begin{equation}
        q(\bw_t \mid \bx)
        =
        \Dir\!\left(
        \bw_t;
        \eta_t \bp_t
        \right).
        \label{eq:w_marginal}
    \end{equation}

    \item Given $\bw_t$, the variables $\bx$ and $\bz_t$ are conditionally independent, and the discrete state can be recovered from the relaxed state via
    \begin{equation}
        q(\bz_t \mid \bw_t,\bx)
        =
        q(\bz_t \mid \bw_t)
        =
        \Cat\!\left(
        \bz_t;
        \bw_t
        \right).
        \label{eq:z_given_w}
    \end{equation}

    \item For $s<t$, the reverse conditional posterior of $\bz_s$ given $\bw_t$ and $\bx$ is categorical:
    \begin{equation}
        q(\bz_s \mid \bw_t,\bx)
        =
        \Cat\!\left(
        \bz_s;
        \rhost{\bx}{\bw_t}
        \right),
        \label{eq:zs_given_wtx}
    \end{equation}
    where
    \begin{equation}
    \rhost{\bx}{\bw_t}
    \coloneq
    \bp_s
    \odot
    \left[
    \alpha_{t\mid s}
    \bigl(\bw_t \oslash \bp_t\bigr)
    +
    (1-\alpha_{t\mid s})
    \,
    \inner{\bw_t}{\bpi \oslash \bp_t}\,
    \one
    \right].
    \label{eq:rho_def}
    \end{equation}

    \item For $0<s<t$, the reverse conditional posterior of $\bw_s$ given $\bw_t$ and $\bx$ is a Dirichlet mixture:
    \begin{equation}
        q(\bw_s \mid \bw_t,\bx)
        =
        \sum_{k=1}^K
        \rho_{s\mid t,k}(\bx,\bw_t)\,
        \Dir\!\left(
        \bw_s;
        \eta_s \bp_s + \ek
        \right).
        \label{eq:ws_given_wtx}
    \end{equation}
\end{enumerate}
\end{proposition}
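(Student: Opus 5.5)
The whole proposition rests on one Dirichlet--categorical conjugacy identity, which I would establish first. For $\bp\in\simplex^{K-1}$ with positive entries, $\eta>0$, and a one-hot $\ek$, write $B(\cdot)$ for the multivariate Beta function. Using $B(\eta\bp+\ek)=B(\eta\bp)\,p_k$, which follows from $\Gamma(a+1)=a\Gamma(a)$ together with $\sum_j\eta p_j=\eta$, one gets
\[
\Cat(\ek;\bp)\,\Dir(\bw;\eta\bp+\ek)=p_k\,\frac{\prod_j w_j^{\eta p_j-1}\,w_k}{B(\eta\bp)\,p_k}=\Dir(\bw;\eta\bp)\,\Cat(\ek;\bw).
\]
I would apply this with $\bp=\bp_t$ and $\eta=\eta_t$; positivity of $\bp_t$ holds for $t>0$, as noted in the preliminaries. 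The identity reads $q(\bz_t\mid\bx)\,q(\bw_t\mid\bz_t,\bx)=\Dir(\bw_t;\eta_t\bp_t)\,\Cat(\bz_t;\bw_t)$.

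Items 1 and 2 then follow at once. Summing over $\bz_t$ and using $\sum_k w_{t,k}=1$ gives \eqref{eq:w_marginal}. Dividing by this marginal gives $q(\bz_t\mid\bw_t,\bx)=\Cat(\bz_t;\bw_t)$. The right-hand side does not depend on $\bx$, which proves the conditional independence and \eqref{eq:z_given_w}.

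For item 3, I would first read off from the factorization \eqref{eq:joint_graphical_model} that, once $\bx$ is fixed, $\bw_t$ depends on $(\bz_s,\bz_t)$ only through $\bz_t$. Hence
\[
q(\bz_s\mid\bw_t,\bx)=\sum_{\bz_t}q(\bz_s\mid\bz_t,\bx)\,q(\bz_t\mid\bw_t,\bx)=\sum_k w_{t,k}\,\rst{\bx}{\ek},
\]
where the last step uses item 2 and \eqref{eq:reverse_posterior}. Substituting \eqref{eq:reverse_posterior_vector} with $\bz_t=\ek$, we have $\inner{\ek}{\bp_t}=p_{t,k}$ and $\inner{\ek}{\bpi}=\pi_k$. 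The $k$-th summand becomes $\bp_s\odot[\alpha_{t\mid s}\ek+(1-\alpha_{t\mid s})\pi_k\one]\,w_{t,k}/p_{t,k}$. Summing over $k$ gives $\alpha_{t\mid s}\,\bp_s\odot(\bw_t\oslash\bp_t)$ plus $(1-\alpha_{t\mid s})\inner{\bw_t}{\bpi\oslash\bp_t}\,\bp_s$, which is exactly \eqref{eq:rho_def}. This vector sums to one automatically, since it is a convex combination of probability vectors.

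For item 4, I would use the factorization again: given $(\bz_s,\bx)$, $\bw_s$ is conditionally independent of $(\bz_t,\bw_t)$. This holds because $\bw_t$ depends only on $\bz_t$ and $\bx$, while $\bz_t$ depends on $\bz_s$ alone. Therefore
\[
q(\bw_s\mid\bw_t,\bx)=\sum_{\bz_s}q(\bz_s\mid\bw_t,\bx)\,q(\bw_s\mid\bz_s,\bx),
\]
and inserting item 3 and \eqref{eq:dir_bridge} at time $s$ yields \eqref{eq:ws_given_wtx}. This step needs $s>0$ so that $\bp_s$ is strictly positive and $\eta_s$ is defined.

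The only delicate points are the Beta-function cancellation and a careful reading of the conditional independences from the graphical model; the remaining steps are bookkeeping. The main thing to verify is that $q(\bz_s\mid\bz_t,\bw_t,\bx)=q(\bz_s\mid\bz_t,\bx)$. It holds because $\bw_t$ is generated from $(\bz_t,\bx)$ alone.
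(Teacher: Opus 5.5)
Your proposal is correct and follows essentially the same route as the paper. Your Beta-function identity $p_k\,\Dir(\bw;\eta\bp+\ek)=w_k\,\Dir(\bw;\eta\bp)$ is exactly the paper's Lemma~\ref{lem:dir_shift} combined with Corollary~\ref{cor:dir_weighted_sum}, and the paper then proves items 1--4 just as you do: it marginalizes over $\bz_t$, applies Bayes' rule, and lifts the reverse posterior first through $\bz_t$ and then through $\bz_s$, using the same conditional independences read off the factorization.
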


See \Cref{app:full_hierarchy} for the proof.
These identities show that $\bw_t$ is not an ad hoc surrogate.
It is an exact auxiliary variable whose marginal remains native to the simplex and whose decoder back to $\bz_t$ is simply categorical sampling from $\bw_t$.

\subsection{Training objective}
\label{sec:training_objective}

A natural starting point is to match the simplex bridge directly:
\begin{equation}
\Loss{w_s \mid w_t}(\bw_t,\hat{\bx}_\theta,\bx; s,t)
=
\KL{q(\bw_s\mid \bw_t,\bx)}{q(\bw_s\mid \bw_t,\hat{\bx}_\theta)}.
\label{eq:direct_w_kl}
\end{equation}
This is the most direct objective associated with the relaxed bridge, but it is generally intractable because $q(\bw_s\mid \bw_t,\bx)$ is a Dirichlet mixture, as shown in \eqref{eq:ws_given_wtx}.

At training time, we sample the augmented noisy state as
\[
\mathbf{w}_t
\sim
q(\mathbf{w}_t\mid\mathbf{x}),
\qquad
\mathbf{z}_t
\sim
q(\mathbf{z}_t\mid\mathbf{w}_t)
=
\operatorname{Cat}(\mathbf{z}_t;\mathbf{w}_t),
\]
and predict the clean-token distribution from the categorical state:
\[
\widehat{\mathbf{x}}_\theta
=
f_\theta(\mathbf{z}_t,t).
\]

To define the relaxed discrete bridge objective, let
$\widetilde{\mathbf{z}}_t$
denote a second categorical variable satisfying
\[
\widetilde{\mathbf{z}}_t
\sim
q(\widetilde{\mathbf{z}}_t\mid\mathbf{w}_t)
=
\operatorname{Cat}
(\widetilde{\mathbf{z}}_t;\mathbf{w}_t),
\]
conditionally independently of the network input
$\mathbf{z}_t$
given
$\mathbf{w}_t$.
We optimize
\begin{equation}
\begin{aligned}
\bLoss{z_s\mid z_t,w_t}
(
\mathbf{w}_t,
\mathbf{z}_t,
\mathbf{x};
s,t
)
\coloneq
\mathbb{E}_{
q(\widetilde{\mathbf{z}}_t\mid\mathbf{w}_t)
}
\Big[
\operatorname{KL}
\big(
q(\mathbf{z}_s\mid\widetilde{\mathbf{z}}_t,\mathbf{x})
\mathrel{\|}
q(\mathbf{z}_s\mid\widetilde{\mathbf{z}}_t,\widehat{\mathbf{x}}_\theta)
\big)
\Big],
\end{aligned}
\label{eq:rdb_objective}
\end{equation}
where
$\widehat{\mathbf{x}}_\theta=f_\theta(\mathbf{z}_t,t)$.
This objective averages the standard discrete reverse KL divergence over an auxiliary
decoder sample from $\mathbf{w}_t$, while retaining $\mathbf{z}_t$ as the denoiser input
(\Cref{fig:bridge_comparison}).

\noindent\textbf{Rao--Blackwellized form.}
The expectation over the auxiliary decoder sample $\widetilde{\bz}_t$ in \eqref{eq:rdb_objective} can be marginalized exactly.
Equivalently, the resulting expression is the Rao--Blackwellized form of the Monte Carlo estimator obtained by first sampling $\widetilde{\bz}_t \sim q(\widetilde{\bz}_t\mid \bw_t)$ and then evaluating the discrete reverse KL. The independently sampled $\bz_t$ remains the denoiser input and is not marginalized by this step.

\begin{proposition}
\label{prop:rdb_closed_form}
The relaxed discrete bridge objective in \eqref{eq:rdb_objective} admits the exact closed form
\begin{equation}
\bLoss{z_s \mid z_t, w_t}(\bw_t,\hat{\bx}_\theta,\bx; s,t)
=
\inner{\bw_t}{\log \hat{\bp}_t - \log \bp_t}
+
\inner{\rhost{\bx}{\bw_t}}{\log \bp_s - \log \hat{\bp}_s}.
\label{eq:rdb_closed_form}
\end{equation}
\end{proposition}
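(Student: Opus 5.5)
The approach is to expand the discrete KL inside the expectation using the Bayes form of the reverse posterior, and then average over $\widetilde{\bz}_t\sim\Cat(\bw_t)$ term by term. For a one-hot $\widetilde{\bz}_t$, write $\mathbf{c}(\widetilde{\bz}_t)\coloneq \alpha_{t\mid s}\widetilde{\bz}_t+(1-\alpha_{t\mid s})\inner{\widetilde{\bz}_t}{\bpi}\one$. This vector is the likelihood $q(\widetilde{\bz}_t\mid\bz_s)$ viewed as a function of $\bz_s$, and it does not depend on $\bx$. By \eqref{eq:reverse_posterior_vector},
\[
\log \rst{\bx}{\widetilde{\bz}_t}=\log \mathbf{c}(\widetilde{\bz}_t)+\log\bp_s-\log\inner{\widetilde{\bz}_t}{\bp_t}\one,
\]
and the analogous identity holds with $\hat{\bx}_\theta$. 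The factor $\mathbf{c}$ therefore cancels in the log-ratio. Using $\inner{\mathbf{r}}{\one}=1$, this gives
\[
\KL{q(\bz_s\mid\widetilde{\bz}_t,\bx)}{q(\bz_s\mid\widetilde{\bz}_t,\hat{\bx}_\theta)}
=\inner{\rst{\bx}{\widetilde{\bz}_t}}{\log\bp_s-\log\hat{\bp}_s}+\inner{\widetilde{\bz}_t}{\log\hat{\bp}_t-\log\bp_t}.
\]
Here $\log\inner{\widetilde{\bz}_t}{\bp_t}=\inner{\widetilde{\bz}_t}{\log\bp_t}$ because $\widetilde{\bz}_t$ is one-hot. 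Strict positivity of $\bp_t$ and $\bp_s$ for positive times keeps every logarithm finite. If $s=0$, the $\bp_s$ terms need the usual $0\log 0$ convention, and I would note that this is harmless.

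Next I take the expectation over $\widetilde{\bz}_t\sim\Cat(\bw_t)$. The second term is linear in $\widetilde{\bz}_t$, so its expectation is $\inner{\bw_t}{\log\hat{\bp}_t-\log\bp_t}$. The first term is linear in $\rst{\bx}{\widetilde{\bz}_t}$. So it suffices to show
\[
\EE{\Cat(\widetilde{\bz}_t;\bw_t)}{\rst{\bx}{\widetilde{\bz}_t}}=\rhost{\bx}{\bw_t}.
\]

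For this identity I would use the expansion $\rst{\bx}{\widetilde{\bz}_t}=\bp_s\odot\bigl[\alpha_{t\mid s}\widetilde{\bz}_t\oslash\bp_t+(1-\alpha_{t\mid s})\inner{\widetilde{\bz}_t}{\bpi\oslash\bp_t}\one\bigr]$. It holds for one-hot $\widetilde{\bz}_t=\ek$ because $\ek/\inner{\ek}{\bp_t}=\ek\oslash\bp_t$ and $\inner{\ek}{\bpi}/\inner{\ek}{\bp_t}=\inner{\ek}{\bpi\oslash\bp_t}$. This expression is linear in $\widetilde{\bz}_t$, so replacing $\widetilde{\bz}_t$ by its mean $\bw_t$ gives exactly \eqref{eq:rho_def}.

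An alternative route is probabilistic. Given $\bx$, the variable $\bz_s$ depends on $\bw_t$ only through $\widetilde{\bz}_t$, by item 2 of \Cref{prop:dir_cat_hierarchy}. Hence $q(\bz_s\mid\bw_t,\bx)=\sum_{\widetilde{\bz}_t}q(\widetilde{\bz}_t\mid\bw_t)\,q(\bz_s\mid\widetilde{\bz}_t,\bx)$, and item 3 of \Cref{prop:dir_cat_hierarchy} then identifies this mixture with $\Cat(\rhost{\bx}{\bw_t})$. Combining the two expectations yields \eqref{eq:rdb_closed_form}.

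The main obstacle is bookkeeping rather than a conceptual step. One must check that the normalizer $\inner{\widetilde{\bz}_t}{\bp_t}$ enters the log-ratio only through the one-hot linearization, so that no nonlinear expectation of a logarithm appears. One must also check that the $\hat{\bx}_\theta$-dependent normalizer and the $\bp_s$ factor separate cleanly. Both rely on $\widetilde{\bz}_t$ being one-hot together with the cancellation of $\mathbf{c}(\widetilde{\bz}_t)$.
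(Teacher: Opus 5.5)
Your proposal is correct and takes essentially the same route as the paper. The paper also expands the expectation over $\widetilde{\bz}_t=\be_j$ with weights $w_{t,j}$, cancels the shared factor $\alpha_{t\mid s}\delta_{j,k}+(1-\alpha_{t\mid s})\pi_j$ in the log-ratio, uses normalization of $\rst{\bx}{\be_j}$ to collapse the $\bp_t$ term, and identifies $\sum_j w_{t,j}\,\rst{\bx}{\be_j}$ with $\rhost{\bx}{\bw_t}$; your vector $\mathbf{c}(\widetilde{\bz}_t)$ is a compact form of that componentwise computation.
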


The proof is given in \Cref{app:rao_blackwellized_objective}.
Equation~\eqref{eq:rdb_closed_form} is fully tractable and eliminates sampling noise associated with the auxiliary decoder sample $\widetilde{\bz}_t$.
Operationally, it shows that the reverse-bridge loss depends on $\bw_t$ only through two quantities: the decoded current-time marginal term $\inner{\bw_t}{\log \hat{\bp}_t}$ and the induced reverse posterior $\rhost{\bx}{\bw_t}$. The network prediction itself remains conditioned on the separately sampled categorical input $\bz_t$.

\noindent\textbf{Continuous-time limit.}
The closed form in \eqref{eq:rdb_closed_form} yields a non-degenerate infinitesimal limit.
Let $s=t-\Delta$ with $\Delta \downarrow 0$.

\begin{proposition}
\label{prop:continuous_time_limit}
Up to $\theta$-independent additive terms, the relaxed discrete bridge objective satisfies
\begin{equation}
\bLoss{z_s \mid z_t, w_t}(\bw_t,\hat{\bx}_\theta,\bx; t - \Delta,t)
=
\Delta\,
\ell_{\mathrm{ct}}(\bw_t,\hat{\bx}_\theta,\bx,t)
+
o(\Delta),
\label{eq:ct_expansion}
\end{equation}
where
\begin{align}
\ell_{\mathrm{ct}}(\bw_t,\hat{\bx}_\theta,\bx,t)
=
\lambda(t)\Bigg[
&\inner{\bw_t}{\bpi \oslash \hat{\bp}_t}
-
\inner{\bw_t}{\bpi \oslash \bp_t}\,
\inner{\bp_t}{\log \hat{\bp}_t}
+
\inner{\bpi \odot (\bw_t \oslash \bp_t)}{\log \hat{\bp}_t}
\Bigg].
\label{eq:ct_density_final}
\end{align}
and $\lambda(t)\coloneq -\frac{\mathrm{d}}{\mathrm{d}t}\log \alpha(t)$.
Consequently, the corresponding continuous-time objective is
\begin{equation}
\mathcal{L}_{\mathrm{ct}}
=
\int_0^1
\EE{q(\bx)}{
\EE{q(\bw_t\mid \bx)}{
\ell_{\mathrm{ct}}(\bw_t,\hat{\bx}_\theta,\bx,t)
}
}
\,\mathrm{d}t.
\label{eq:ct_objective}
\end{equation}
\end{proposition}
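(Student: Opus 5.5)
The plan is to start from the exact closed form of \Cref{prop:rdb_closed_form} and Taylor-expand every quantity in it to first order in $\Delta$, with $s=t-\Delta$. I will keep only the $\theta$-dependent part,
\[
F(\Delta)\coloneq\inner{\bw_t}{\log\hat{\bp}_t}-\inner{\rhost{\bx}{\bw_t}}{\log\hat{\bp}_s}.
\]
The remaining terms $-\inner{\bw_t}{\log\bp_t}+\inner{\rhost{\bx}{\bw_t}}{\log\bp_s}$ do not involve $\theta$ and are absorbed into the additive constant the statement allows.

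\textbf{Ingredients.} Assuming $\alpha$ is differentiable and $\alpha_t>0$ for the $t$ under consideration, I use three expansions.
\begin{itemize}
\item The conditional schedule satisfies $\alpha_{t\mid s}=\alpha_t/\alpha_{t-\Delta}=1-\lambda(t)\Delta+o(\Delta)$.
\item From \eqref{eq:qzt_given_x}, $\frac{\mathrm{d}}{\mathrm{d}t}\bp_t=\dot\alpha_t(\bx-\bpi)=-\lambda(t)(\bp_t-\bpi)$, using $\alpha_t(\bx-\bpi)=\bp_t-\bpi$. Hence $\bp_s=\bp_t+\lambda\Delta(\bp_t-\bpi)+o(\Delta)$.
\item The same computation with $\hat{\bx}_\theta$ in place of $\bx$ gives $\log\hat{\bp}_s=\log\hat{\bp}_t+\lambda\Delta(\one-\bpi\oslash\hat{\bp}_t)+o(\Delta)$. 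This relies on $\hat{\bp}_t$ being strictly positive for $t>0$.
\end{itemize}

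\textbf{Expanding $\boldsymbol\rho_{s\mid t}$.} I substitute these into \eqref{eq:rho_def} and write $c\coloneq\inner{\bw_t}{\bpi\oslash\bp_t}$. At $\Delta=0$, $\bp_s\odot(\bw_t\oslash\bp_t)=\bw_t$, so $\boldsymbol\rho_{t\mid t}=\bw_t$. Collecting the first-order terms gives
\[
\rhost{\bx}{\bw_t}=\bw_t+\lambda\Delta\bigl[c\,\bp_t-\bpi\odot(\bw_t\oslash\bp_t)\bigr]+o(\Delta).
\]
The $\pm\bw_t$ contributions from $\alpha_{t\mid s}$ and from $\bp_s$ cancel.

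\textbf{Assembling $F(\Delta)$.} Plugging both expansions into $F(\Delta)$, the zeroth-order terms cancel, since $\inner{\bw_t}{\log\hat{\bp}_t}-\inner{\bw_t}{\log\hat{\bp}_t}=0$. This cancellation is what makes the limit non-degenerate. The order-$\Delta$ coefficient is
\[
\lambda\bigl[\inner{\bw_t}{\bpi\oslash\hat{\bp}_t}-c\inner{\bp_t}{\log\hat{\bp}_t}+\inner{\bpi\odot(\bw_t\oslash\bp_t)}{\log\hat{\bp}_t}\bigr]-\lambda\inner{\bw_t}{\one}.
\]
The last term equals $-\lambda$ because $\bw_t\in\simplex^{K-1}$. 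It is therefore $\theta$-independent and can be dropped, which leaves exactly \eqref{eq:ct_density_final}.

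\textbf{Passing to the continuous objective.} Summing over a partition of $[0,1]$ with mesh $\Delta$ and taking expectations under $q(\bx)q(\bw_t\mid\bx)$ (\Cref{prop:dir_cat_hierarchy}) turns the sum into the Riemann integral \eqref{eq:ct_objective}.

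The main obstacle is bookkeeping rather than ideas: getting the first-order term of $\boldsymbol\rho$ right, including the $\pm\bw_t$ cancellation, and justifying that the $o(\Delta)$ remainders are uniform enough to exchange with the expectation. For the latter I would invoke smoothness of $\alpha$ and the positivity of $\bp_t$ and $\hat{\bp}_t$ for $t>0$. I expect the endpoint $t\to0$, where $\lambda$ may blow up, to need only a remark.
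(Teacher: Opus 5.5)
Your proposal is correct and follows essentially the same route as the paper's proof. You drop the $\theta$-independent terms of the closed form, expand $\alpha_{t\mid t-\Delta}$, $\bp_{t-\Delta}$, $\boldsymbol{\rho}_{t-\Delta\mid t}$ and $\log\hat{\bp}_{t-\Delta}$ to first order, use the zeroth-order cancellation, and discard the constant $-\Delta\lambda(t)\inner{\bw_t}{\one}=-\Delta\lambda(t)$; your first-order term for $\boldsymbol{\rho}$ and the resulting density match the paper's exactly, and your Riemann-sum step is a slightly more explicit version of the paper's one-line passage to the integral.
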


The proof is deferred to \Cref{app:proof_continuous_time_limit}.
This proposition identifies \eqref{eq:ct_objective} as the continuous-time counterpart of \eqref{eq:rdb_objective}.

The structure of \eqref{eq:rdb_objective} and \eqref{eq:ct_objective} is closely related to UDLM~\citep{schiff2025simple}.
UDLM derives a continuous-time reverse-KL objective directly from the discrete corrupted state \eqref{eq:disc_kl}.
Our construction introduces the exact auxiliary variable $\bw_t$, averages the same reverse-KL bridge over an auxiliary draw $\widetilde{\bz}_t\sim q(\widetilde{\bz}_t\mid \bw_t)=\Cat(\bw_t)$, and then takes the infinitesimal limit.
In this sense, \eqref{eq:ct_objective} can be understood as a simplex-relaxed continuous-time analogue of the UDLM objective.

\subsection{Sampling}
\label{sec:sampling}
At inference time, we run the reverse process on a grid
$1=t_N>t_{N-1}>\cdots>t_0=0$. For a denoiser conditioned on $\bz_t$, the
Dirichlet--categorical hierarchy yields a stochastic sampler that maintains the
augmented state $(\bz_t,\bw_t)$ at positive times. Since the endpoint marginal is
$q(\bw_1)=\Dir(\bw_1;\eta_1\bpi)$ and the exact decoder is
$q(\bz_1\mid\bw_1)=\Cat(\bz_1;\bw_1)$, generation starts from
\begin{equation}
\bw_{t_N}\sim\Dir(\eta_{t_N}\bpi),
\qquad
\bz_{t_N}\sim\Cat(\bw_{t_N}).
\label{eq:sampler_initialization}
\end{equation}
Given adjacent times $s=t_{n-1}<t=t_n$ and the current pair
$(\bz_t,\bw_t)$, the denoiser predicts
\begin{equation}
\hat{\bx}_\theta=f_\theta(\bz_t,t),
\label{eq:sampler_prediction}
\end{equation}
which induces the bridge marginals $\hat{\bp}_s$ and $\hat{\bp}_t$ and the
reverse categorical posterior $\rhost{\hat{\bx}_\theta}{\bw_t}$.
Our default sampler is the stochastic ancestral sampler implied by the
Dirichlet--categorical hierarchy. Each reverse step draws
\begin{equation}
\bz_s
\sim
\Cat\!\left(\rhost{\hat{\bx}_\theta}{\bw_t}\right),
\qquad
\bw_s
\sim
\Dir\!\left(\eta_s \hat{\bp}_s+\bz_s\right).
\label{eq:stochastic_sampler}
\end{equation}
The sampled $\bz_s$ is used as the network input at the next reverse step,
while $\bw_s$ carries the auxiliary bridge information required by the next
reverse posterior. Repeating \eqref{eq:stochastic_sampler} from $t_N=1$ to
$t_0=0$ yields the final categorical sample $\bz_0$.
The categorical input also has a computational advantage. In a standard token
model, $\bz_t$ is stored as an integer token index and its embedding is obtained
by lookup. Feeding the dense simplex vector $\bw_t$ instead requires computing
$\bw_t^{\mathsf T}E$ for the vocabulary embedding matrix $E$ at every sequence
position, adding a vocabulary-sized dense matrix multiplication and the
associated memory traffic. The main experiments therefore use the
$\bz_t$-input formulation above.


%% file: sec/5_experiment.tex
\section{Experiments}
\label{sec:experiments}

We evaluate Simplax on unconditional text generation with OpenWebText
\citep{gokaslan2019openwebtext} and constrained categorical generation with
Sudoku \citep{lee2026flm,sflm}. We first report compact design diagnostics on
OpenWebText and then present the main comparisons.

\subsection{Experimental setup}
\label{sec:exp_settings}

\paragraph{OpenWebText.}
We tokenize OpenWebText with the GPT-2 BPE tokenizer
\citep{radford2019language}, giving $|\mathcal{V}|=50{,}257$, and use sequence
length $L=1,024$. All methods use the $179\mathrm{M}$-parameter diffusion
transformer of \citet{sahoo2024simple}: $12$ transformer blocks, rotary
position embeddings \citep{rope}, AdaLN time conditioning \citep{dit}, and a
softmax output head. Models are trained with Adam \citep{adam}, learning rate
$3\times10^{-4}$, batch size $512$, and a total budget of $1\mathrm{M}$
iterations. Unless stated otherwise, Simplax uses $\bz_t$ as the denoiser
input and a constant concentration $\eta_t\equiv0.01$.

\paragraph{Sudoku.}
We build on the Sudoku benchmark of \citet{sflm}, while using a
cross-clue generalization protocol in which all models are trained only on
puzzles with $30$ clues.
The dataset contains $48{,}000$ training and $2{,}000$ validation puzzles,
each constructed to have a unique solution.
A Sudoku instance is represented as a $180$-token sequence consisting of a
$91$-token puzzle prefix and an $89$-token solution.
The puzzle prefix contains a \texttt{BOS} token, all $81$ cells with
unobserved cells represented by a blank token, eight row separators, and a
second \texttt{BOS} token.
The solution contains the $81$ completed cells and eight row separators.
The training loss is applied only to the solution tokens.

All methods use Transformer backbones with eight blocks, hidden dimension
$512$, eight attention heads, and dropout $0.1$.
Their parameter counts range from $25.21$M to $28.59$M; the principal
differences are the training objective, time conditioning, and inference
procedure.
Models are trained for $20{,}000$ steps using Adam with learning rate
$3\times10^{-4}$ and global batch size $256$.
Further architectural and optimization details are provided in
\Cref{app:sudoku_experiments}.

At inference time, the same $30$-clue-trained checkpoint is evaluated with
$40$, $35$, $30$, $25$, $20$, and $17$ clues.
The $30$-clue setting matches the training distribution, while the remaining
settings evaluate transfer across clue densities.
The $40$- and $35$-clue settings provide more conditioning information than
observed during training, whereas the $25$-, $20$-, and $17$-clue settings
provide progressively less conditioning information.
In particular, $17$ is the minimum number of clues for which a standard
$9\times9$ Sudoku puzzle can admit a unique solution
\citep{mcguire2014no16,lin2013specific}, making the $17$-clue setting the
most sparsely conditioned regime in our evaluation.
We additionally evaluate generation from an all-blank puzzle prefix, which
contains no clue information and is treated as unconditional Sudoku
generation.

\paragraph{Metrics.}
For OpenWebText, we draw $1,024$ sequences and report generative unigram entropy
and generative perplexity under GPT-2 Large, GPT-2 XL
\citep{radford2019language}, and Llama-2 7B \citep{touvron2023llama}. The
OpenWebText unigram entropy is $5.44$ nats. For conditional Sudoku, we report
solving accuracy. For unconditional Sudoku, we report validity, the fraction of
generated boards satisfying all Sudoku constraints.

\subsection{Design diagnostics on OpenWebText}
\label{sec:exp_ablations}

The input and self-conditioning diagnostics use $50\mathrm{k}$-step runs with
the same tokenizer, sequence length, and backbone as the main experiment. The
initialization comparison matches the total training budget at $1\mathrm{M}$
iterations. These experiments characterize individual design choices rather
than provide the main method comparison.

\begin{figure*}[t]
  \centering
  \includegraphics[width=\textwidth]{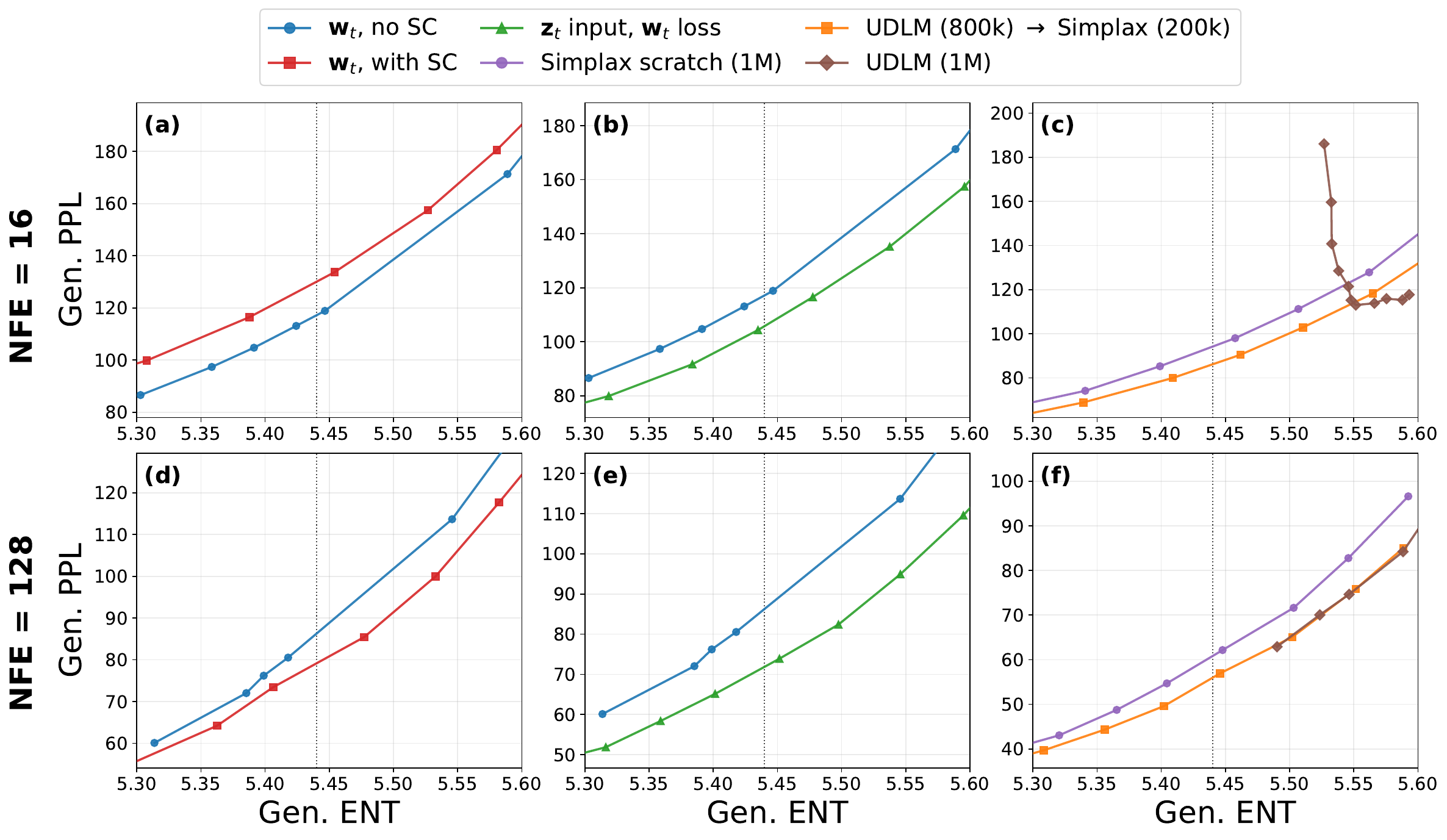}
  \caption{
  Design diagnostics on OpenWebText. The rows show temperature-swept generation
  frontiers at NFE $=16$ and $128$. The columns compare self-conditioning
  (\textbf{a}, \textbf{d}), denoiser input $\bw_t$ versus $\bz_t$
  (\textbf{b}, \textbf{e}), and initialization from a pretrained UDLM checkpoint
  under a matched $1\mathrm{M}$-iteration budget (\textbf{c}, \textbf{f}). The
  dotted line marks the OpenWebText entropy, $5.44$. Gen.\ PPL is evaluated with
  GPT-2 Large; lower is better at comparable Gen.\ ENT.
  }
  \label{fig:owt_ablations}
\end{figure*}

\paragraph{Self-conditioning.}
For the $\bw_t$-input diagnostic, the preferred setting depends on the inference
budget: omitting self-conditioning is better near the data-entropy operating
point at NFE $=16$, whereas using it is better at NFE $=128$
\Cref{fig:owt_ablations}(\textbf{a}, \textbf{d}). The experiment therefore does
not support a budget-independent conclusion.

\paragraph{Denoiser input.}
The bridge and objective do not require the auxiliary simplex state itself to
be the denoiser input. With the same $\bw_t$-based objective, the $\bz_t$-input
model attains lower Gen.\ PPL at comparable Gen.\ ENT at both NFE values
\Cref{fig:owt_ablations}(\textbf{b}, \textbf{e}). It also avoids an additional
dense projection at the input layer: token indices use embedding lookup,
whereas a simplex input requires $\bw_t^{\mathsf T}E$ with the vocabulary
embedding matrix $E$ at every sequence position. We therefore use $\bz_t$ as
the denoiser input in the main experiments while retaining $\bw_t$ in the
objective and reverse update.

\paragraph{UDLM initialization.}
We compare Simplax trained from scratch for $1\mathrm{M}$ iterations with a
model trained as UDLM for $800\mathrm{k}$ iterations and then with the Simplax
objective for $200\mathrm{k}$ iterations. UDLM initialization improves the
Gen.\ PPL--Gen.\ ENT frontier at both NFE values
\Cref{fig:owt_ablations}(\textbf{c}, \textbf{f}).

\subsection{Unconditional generation on OpenWebText}
\label{sec:exp_owt}

We compare Simplax with MDLM \citep{sahoo2024simple}, UDLM
\citep{schiff2025simple}, Duo \citep{sahoo2025duality}, CANDI
\citep{pynadath2025candi}, FLM \citep{lee2026flm}, LangFlow \citep{langflow},
and S-FLM \citep{sflm}. For each method and NFE budget, we sweep $15$
temperatures from $0.84$ to $1.12$ in increments of $0.02$ and select the
operating point whose generated entropy is closest to $5.44$ nats.

\begin{table*}[t]
  \centering
  \caption{
  OpenWebText unconditional generation at selected NFE values. Gen.\ ENT is
  generative unigram entropy and should be compared with the data entropy
  $5.44$. Gen.\ PPL is evaluated by the indicated external language model. The
  best and second-best values in each column are shown in bold and underlined,
  respectively.
  }
  \label{tab:owt_generation_selected_nfe}
  \scriptsize
  \setlength{\tabcolsep}{2.0pt}
  \resizebox{\textwidth}{!}{
  \begin{tabular}{@{}lcccccccccccc@{}}
    \toprule
    & \multicolumn{4}{c}{\textbf{NFE = 16}}
    & \multicolumn{4}{c}{\textbf{NFE = 128}}
    & \multicolumn{4}{c}{\textbf{NFE = 1,024}} \\
    \cmidrule(lr){2-5}
    \cmidrule(lr){6-9}
    \cmidrule(lr){10-13}
    Method
      & Ent. & GPT-2 L & GPT-2 XL & Llama-2
      & Ent. & GPT-2 L & GPT-2 XL & Llama-2
      & Ent. & GPT-2 L & GPT-2 XL & Llama-2 \\
    \midrule
    CANDI~\citep{pynadath2025candi}
      & 5.43 & \underline{97.2} & \underline{99.6} & \underline{56.0}
      & 5.45 & 67.2 & 69.2 & 36.8
      & 5.46 & 74.3 & 76.6 & 39.2 \\
    UDLM~\citep{schiff2025simple}
      & 5.53 & 186.0 & 190.1 & 79.0
      & 5.49 & 62.9 & 64.9 & 34.6
      & 5.46 & 59.2 & 61.2 & 32.2 \\
    MDLM~\citep{sahoo2024simple}
      & 5.47 & 117.9 & 120.6 & 63.8
      & 5.46 & 65.4 & 67.2 & 37.3
      & 5.40 & \underline{55.1} & \underline{56.6} & 33.9 \\
    Duo~\citep{sahoo2025duality}
      & 5.45 & 166.1 & 168.3 & 87.1
      & 5.45 & 93.9 & 95.9 & 53.3
      & 5.43 & 88.9 & 90.6 & 50.4 \\
    FLM~\citep{lee2026flm}
      & 5.58 & 259.5 & 262.6 & 139.3
      & 5.42 & 112.0 & 113.7 & 66.5
      & 5.45 & 125.6 & 127.2 & 74.4 \\
    LangFlow~\citep{langflow}
      & 5.42 & 115.1 & 116.6 & 59.9
      & 5.42 & \underline{60.2} & \underline{61.7} & \textbf{30.0}
      & 5.41 & 68.3 & 70.0 & \underline{28.2} \\
    S-FLM~\citep{sflm}
      & 5.45 & 124.6 & 126.4 & 62.9
      & 5.43 & 103.4 & 105.1 & 52.4
      & 5.46 & 108.5 & 110.2 & 53.7 \\
    \midrule
    \textbf{Simplax}
      & 5.46 & \textbf{90.5} & \textbf{93.1} & \textbf{49.3}
      & 5.45 & \textbf{56.9} & \textbf{58.9} & \underline{31.4}
      & 5.44 & \textbf{45.1} & \textbf{46.8} & \textbf{25.5} \\
    \bottomrule
  \end{tabular}
  }
\end{table*}

Simplax has the lowest Gen.\ PPL under all three evaluators at NFE $=16$ and
$1,024$. At NFE $=128$, it is best under GPT-2 Large and GPT-2 XL, while
LangFlow is best under Llama-2 7B. The temperature-swept Llama-2 7B frontiers
across all evaluated NFE budgets are shown in
\Cref{fig:frontier_llama2_7b_grid}.

\begin{figure*}[t]
  \centering
  \includegraphics[width=\textwidth]{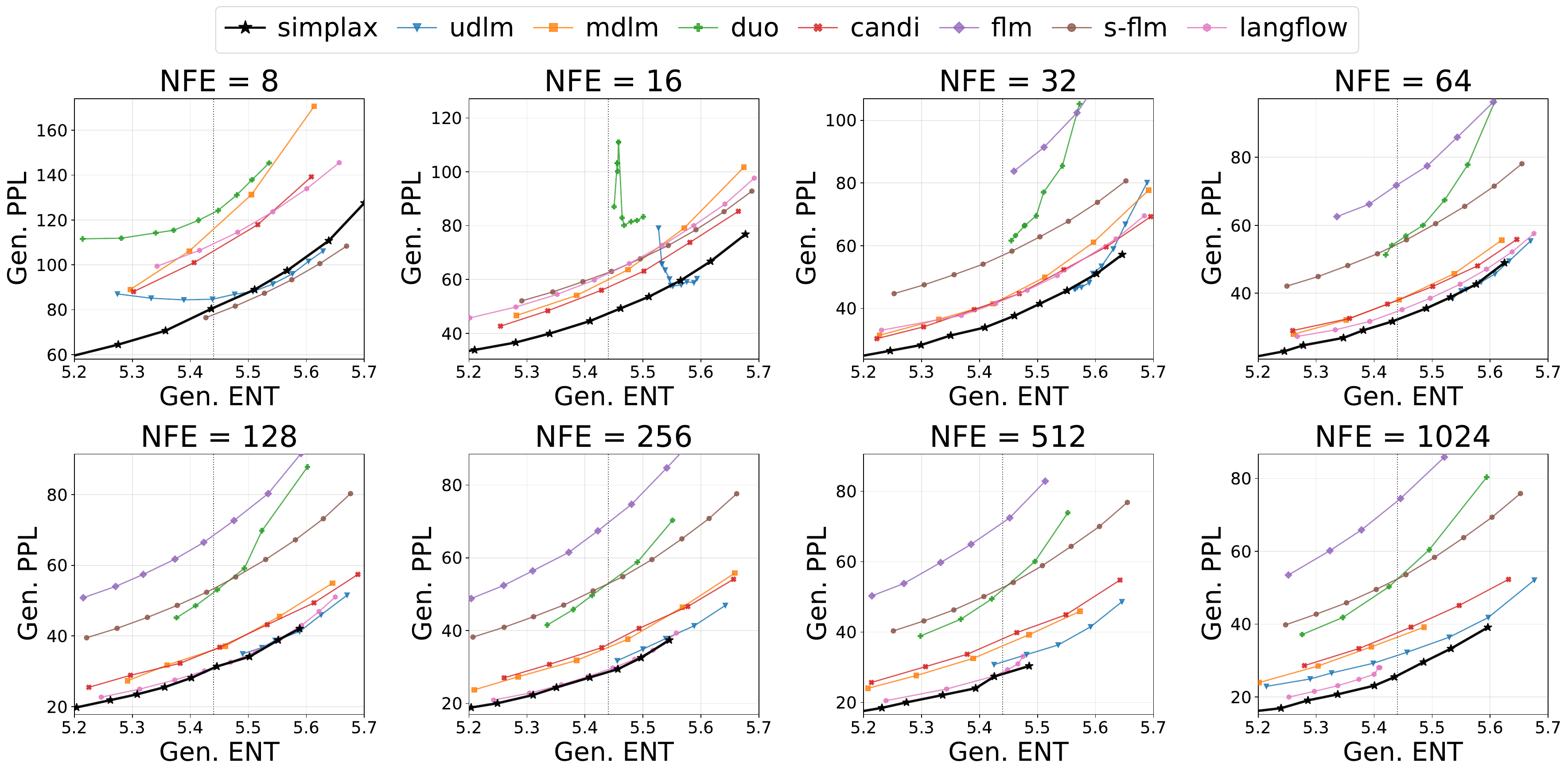}
  \caption{
  Llama-2 7B generative frontiers on OpenWebText for
  $\mathrm{NFE}\in\{8,16,32,64,128,256,512,1,024\}$. Each panel shows the
  temperature-swept Gen.\ PPL--Gen.\ ENT tradeoff. Lower Gen.\ PPL is better,
  and the reference data entropy is $5.44$.
  }
  \label{fig:frontier_llama2_7b_grid}
\end{figure*}

\subsection{Constrained categorical generation on Sudoku}
\label{sec:exp_sudoku}

\begin{table*}[t]
  \centering
  \caption{
    Conditional Sudoku solving accuracy and unconditional Sudoku validity in
    percent.
    All models are trained with $30$ clues.
    The $40$- and $35$-clue settings evaluate transfer to more heavily
    conditioned inputs, the $30$-clue setting matches the training clue
    density, and the $25$-, $20$-, and $17$-clue settings evaluate transfer
    to progressively less heavily conditioned inputs.
    The best and second-best results in each column are shown in bold and
    underlined, respectively.
  }
  \label{tab:sudoku_results}
  \footnotesize
  \setlength{\tabcolsep}{3.8pt}
  \begin{tabular}{@{}lccccccc@{}}
    \toprule
    & \multicolumn{6}{c}{Conditional accuracy (\%)}
    & \multicolumn{1}{c}{Unconditional validity (\%)} \\
    \cmidrule(lr){2-7}
    \cmidrule(lr){8-8}
    Method
      & 40 clues
      & 35 clues
      & 30 clues
      & 25 clues
      & 20 clues
      & 17 clues
      & 0 clues \\
    \midrule
    AR
      & 16.00
      & 4.05
      & 0.70
      & 0.05
      & 0.00
      & 0.00
      & 8.15 \\

    Duo~\citep{sahoo2025duality}
      & 97.00
      & 84.15
      & \underline{48.85}
      & \underline{16.00}
      & \underline{4.80}
      & 0.40
      & \underline{80.95} \\

    FLM~\citep{lee2026flm}
      & 96.45
      & 83.80
      & 48.30
      & 14.05
      & 3.30
      & \underline{0.45}
      & 34.05 \\

    MDLM~\citep{sahoo2024simple}
      & \underline{98.45}
      & \underline{85.15}
      & 48.00
      & 11.80
      & 2.70
      & 0.20
      & 22.35 \\

    S-FLM
      & 77.70
      & 42.40
      & 10.75
      & 1.15
      & 0.05
      & 0.00
      & 1.25 \\

    S-FLM (truncated-adaptive)
      & 94.70
      & 78.70
      & 41.60
      & 11.90
      & 2.95
      & 0.25
      & 3.45 \\

    \midrule
    \textbf{Simplax}
      & \textbf{98.55}
      & \textbf{91.05}
      & \textbf{61.75}
      & \textbf{25.90}
      & \textbf{8.80}
      & \textbf{1.20}
      & \textbf{95.85} \\
    \bottomrule
  \end{tabular}
\end{table*}

Simplax achieves the highest performance across all conditional and
unconditional settings in \Cref{tab:sudoku_results}.
Its advantage extends beyond the $30$-clue training distribution to both more
and less conditioned inputs, including the challenging low-clue regimes.
For unconditional generation, Simplax achieves $95.85\%$ validity, compared
with $80.95\%$ for the strongest baseline.

%% file: sec/6_related_work.tex
\section{Related Work}
\label{sec:related_work}

\paragraph{Discrete diffusion for categorical data.}
Discrete diffusion for categorical data was developed through early multinomial formulations and later unified and substantially generalized by D3PM, which introduced structured transition kernels such as uniform and absorbing corruptions and established the standard variational training recipe \citep{hoogeboom2021argmax,austin2021structured}. This framework was subsequently extended to continuous-time formulations and alternative reverse objectives \citep{campbell2022a,lou2024discrete,zhang2025target}, and has since supported a broad line of language-modeling work covering masked, absorbing, and uniform-state diffusion \citep{sahoo2024simple,shi2024simplified,ou2025your,zheng2025masked,sahoo2025duality,deschenaux2026duality2}. Our method stays within this discrete-diffusion lineage: we keep the original categorical forward process and reverse posterior, and do not replace the primary generative state.

\paragraph{Auxiliary-variable and hybrid formulations.}
A parallel line of work enriches discrete diffusion through auxiliary variables or structured reverse distributions. Di4C \citep{di4c} uses mixtures of product models to capture dimensional correlations, VADD \citep{vadd} introduces a Gaussian latent into masked denoising, and CoDD couples factorized outputs with probabilistic circuits \citep{codd}. Continuous and hybrid constructions include Gaussian-relaxed views in Duo and Duo++ \citep{sahoo2025duality,deschenaux2026duality2}, Euclidean denoising over one-hot states in FLM \citep{lee2026flm}, and discrete--continuous diffusion in CADD and CANDI \citep{zheng2025cadd,pynadath2025candi}. Simplax instead introduces a simplex-valued auxiliary variable while preserving categorical diffusion. Unlike methods that use the auxiliary variable as the denoiser input or primary generative state, the $\bz_t$-input Simplax formulation retains the categorical state as the network input and uses the simplex variable to define the reverse-bridge objective and sampler.

\paragraph{Diffusion and flow on the simplex.}
Our method is also related to work that defines the generative process itself on the simplex. This includes simplex diffusion based on softmax-transformed continuous processes \citep{floto2023diffusion}, simplex diffusion via categorical SDEs and Cox--Ingersoll--Ross dynamics \citep{richemond2022categorical}, Dirichlet-based score models such as DDSM \citep{avdeyev2023dirichlet}, Dirichlet Flow Matching \citep{stark2024dirichlet}, and recent unifying views of discrete, Gaussian, and simplicial diffusion \citep{chandra2026a}. These methods are close to ours in geometry, since they treat simplex-valued states as first-class objects, but differ in role: in our formulation, the simplex variable is not the primary generative state, but an exact auxiliary bridge attached to a standard discrete diffusion process.

%% file: sec/7_conclusion.tex
\section{Conclusion and Limitations}
\label{sec}

We introduced Simplax, an exact Dirichlet--categorical augmentation of uniform
discrete diffusion. Simplax preserves the categorical forward process while
introducing an auxiliary simplex state to derive a Rao--Blackwellized
reverse-bridge objective and stochastic ancestral sampler. It improves the
Gen.\ PPL--Gen.\ ENT tradeoff on OpenWebText and achieves the highest Sudoku
performance among the compared methods across all evaluated clue densities,
from $40$ to $17$ clues, as well as in unconditional generation.

\paragraph{Limitation.}

The present formulation is specialized to uniform categorical corruption and
introduces an auxiliary simplex-valued state whose computational overhead
relative to standard discrete diffusion has not been fully characterized.
Moreover, the concentration schedule remains an additional design choice rather
than being determined by the theory. Extending the construction to broader
categorical corruption kernels and developing more efficient reverse solvers
are important directions for future work.

%% file: sec_app/a_full_hierarchy.tex
\section{Auxiliary Identities and Full Dirichlet--Categorical Hierarchy}
\label{app:full_hierarchy}

This appendix develops the exact probabilistic structure behind the simplex relaxation.
The formulas below apply at positive diffusion times, where $\bp_t$ is strictly positive under the assumptions in \Cref{sec:preliminaries}. We define the clean endpoint separately as the categorical state $\bz_0=\bx$; the hierarchy does not introduce a Dirichlet variable $\bw_0$.

The key point is that the auxiliary state $\bw_t$ is not introduced as a heuristic soft surrogate.
Rather, once we specify the Dirichlet bridge
\[
q(\bw_t \mid \bz_t,\bx)
=
\Dir\!\left(
\bw_t;
\eta_t \bp_t + \bz_t
\right),
\]
the resulting joint model admits a closed hierarchy in both directions:
the relaxed state has an exact Dirichlet marginal, the discrete state can be decoded exactly from $\bw_t$, and the reverse bridge remains tractable after marginalizing either $\bz_t$ or $\bz_s$.
We begin with two elementary identities that make these cancellations possible.

\subsection{Useful identities}
\label{app:useful_identities}

The first identity is the basic shift formula for Dirichlet densities.
It shows that adding a one-hot count to the concentration vector simply multiplies the base Dirichlet density by the corresponding simplex coordinate.

\begin{lemma}
\label{lem:dir_shift}
Let $\balpha\in\mathbb{R}_{>0}^K$ and let $\alpha_0=\sum_{i=1}^K \alpha_i$.
Then, for any $k\in\{1,\dots,K\}$,
\begin{equation}
\Dir(\bw; \balpha + \ek)
=
\frac{\alpha_0}{\alpha_k}
\, w_k \,
\Dir(\bw; \balpha).
\label{eq:dir_shift}
\end{equation}
\end{lemma}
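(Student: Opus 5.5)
The plan is a direct computation from the explicit Dirichlet density. First I write
\[
\Dir(\bw;\balpha)=\frac{\Gamma(\alpha_0)}{\prod_{i=1}^K\Gamma(\alpha_i)}\prod_{i=1}^K w_i^{\alpha_i-1},
\qquad \bw\in\simplex^{K-1}.
\]
This is a density with respect to Lebesgue measure on the first $K-1$ coordinates. Both sides of \eqref{eq:dir_shift} are densities on the same simplex with respect to the same reference measure, so it suffices to compare the two expressions pointwise.

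Next I substitute $\balpha+\ek$. Its total concentration is $\alpha_0+1$. Only the $k$-th coordinate changes, to $\alpha_k+1$. The unnormalized part becomes $\prod_i w_i^{\alpha_i-1}\cdot w_k$, which supplies the factor $w_k$ in the statement.

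For the normalizing constant I apply $\Gamma(x+1)=x\Gamma(x)$ twice. This gives $\Gamma(\alpha_0+1)=\alpha_0\Gamma(\alpha_0)$ in the numerator and $\Gamma(\alpha_k+1)=\alpha_k\Gamma(\alpha_k)$ in the denominator. Factoring these out turns the new constant into $\frac{\alpha_0}{\alpha_k}$ times the old one. Combining the two pieces yields exactly $\frac{\alpha_0}{\alpha_k}w_k\Dir(\bw;\balpha)$.

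As a sanity check, integrating both sides over the simplex should give $1 = \frac{\alpha_0}{\alpha_k}\,\mathbb{E}_{\Dir(\balpha)}[w_k]$. This holds because the Dirichlet mean is $\alpha_k/\alpha_0$.

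I do not expect any real obstacle. The only points needing care are:
\begin{itemize}
\item the positivity assumption $\balpha\in\mathbb{R}_{>0}^K$, which keeps every Gamma function finite and ensures $\alpha_k>0$, so the factor $\alpha_0/\alpha_k$ is well defined;
\item using the same reference measure on both sides, which the pointwise comparison already covers.
\end{itemize}
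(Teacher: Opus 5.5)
Your proposal is correct and matches the paper's proof: you write out the Dirichlet density, note that the shift adds a factor $w_k$ to the unnormalized part, and apply $\Gamma(x+1)=x\Gamma(x)$ to $\Gamma(\alpha_0+1)$ and $\Gamma(\alpha_k+1)$ to get the factor $\alpha_0/\alpha_k$. The paper does the same computation, phrased as the multivariate Beta ratio $B(\balpha+\ek)/B(\balpha)=\alpha_k/\alpha_0$.
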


\begin{proof}
By definition,
\[
\Dir(\bw; \balpha)
=
\frac{1}{B(\balpha)}
\prod_{i=1}^K w_i^{\alpha_i-1},
\qquad
B(\balpha)
=
\frac{\prod_{i=1}^K \Gamma(\alpha_i)}{\Gamma(\alpha_0)}.
\]
Hence
\[
\Dir(\bw; \balpha+\ek)
=
\frac{1}{B(\balpha+\ek)}
w_k
\prod_{i=1}^K w_i^{\alpha_i-1}.
\]
It remains to compare the normalizing constants:
\[
\frac{B(\balpha+\ek)}{B(\balpha)}
=
\frac{\Gamma(\alpha_k+1)}{\Gamma(\alpha_k)}
\frac{\Gamma(\alpha_0)}{\Gamma(\alpha_0+1)}
=
\frac{\alpha_k}{\alpha_0}.
\]
Therefore
\[
\frac{1}{B(\balpha+\ek)}
=
\frac{\alpha_0}{\alpha_k}
\frac{1}{B(\balpha)},
\]
which proves \eqref{eq:dir_shift}.
\end{proof}

Specializing this identity to concentrations of the form $\eta \bp$ yields the cancellation that will be used throughout the appendix.

\begin{corollary}
\label{cor:dir_weighted_sum}
Let $\bp\in\simplex^{K-1}$ satisfy $p_k>0$ for all $k$, let $\eta>0$, and define $\balpha=\eta \bp$.
Then
\begin{equation}
p_k \, \Dir(\bw; \eta\bp+\ek)
=
w_k \, \Dir(\bw; \eta\bp).
\label{eq:dir_weighted_sum}
\end{equation}
\end{corollary}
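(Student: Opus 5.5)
This is a direct specialization of \Cref{lem:dir_shift}, so the plan is to substitute $\balpha=\eta\bp$ into \eqref{eq:dir_shift} and simplify the prefactor. First I will check that the hypotheses of the lemma hold. Since $\eta>0$ and every $p_k>0$, we have $\balpha=\eta\bp\in\mathbb{R}_{>0}^K$, so both $\Dir(\cdot;\eta\bp)$ and $\Dir(\cdot;\eta\bp+\ek)$ are well-defined densities on $\simplex^{K-1}$. Next I will compute the two quantities appearing in the lemma's prefactor. Because $\bp\in\simplex^{K-1}$, the total concentration is $\alpha_0=\sum_i \eta p_i=\eta$, and the $k$-th coordinate is $\alpha_k=\eta p_k$.

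Plugging these into \eqref{eq:dir_shift} gives
\[
\Dir(\bw;\eta\bp+\ek)=\frac{\eta}{\eta p_k}\,w_k\,\Dir(\bw;\eta\bp)=\frac{1}{p_k}\,w_k\,\Dir(\bw;\eta\bp).
\]
Multiplying both sides by $p_k>0$ then yields \eqref{eq:dir_weighted_sum}. The identity holds pointwise for $\bw$ in the interior of the simplex, which is where the densities are defined, and it holds for each $k$ separately.

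There is no real obstacle here. The only point requiring care is the cancellation $\alpha_0=\eta$. This is exactly why the corollary is stated with $\bp$ in the simplex: the factor $\eta$ drops out and leaves the clean weight $p_k$. Strict positivity of $\bp$ is needed only so that $\alpha_k>0$ and division by $p_k$ is legitimate.

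It is also worth recording, for later use, what follows from summing the identity over $k$. Using $\sum_k w_k=1$, we get $\sum_k p_k\Dir(\bw;\eta\bp+\ek)=\Dir(\bw;\eta\bp)$. This is the mixture identity that will give the marginal \eqref{eq:w_marginal}.
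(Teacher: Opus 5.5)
Your proposal is correct and follows the paper's proof essentially line for line: you apply \Cref{lem:dir_shift} with $\balpha=\eta\bp$, use $\alpha_0=\eta$ and $\alpha_k=\eta p_k$ to reduce the prefactor to $1/p_k$, and multiply through by $p_k$. Your closing remark on summing over $k$ is also how the paper uses the corollary, to collapse the shifted-Dirichlet mixture into the marginal \eqref{eq:w_marginal}.
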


\begin{proof}
Apply \Cref{lem:dir_shift} with $\balpha=\eta\bp$.
Since $\alpha_0=\eta$ and $\alpha_k=\eta p_k$,
\[
\Dir(\bw; \eta\bp+\ek)
=
\frac{\eta}{\eta p_k}
w_k
\Dir(\bw; \eta\bp)
=
\frac{w_k}{p_k}
\Dir(\bw; \eta\bp).
\]
Multiplying both sides by $p_k$ gives \eqref{eq:dir_weighted_sum}.
\end{proof}

The content of \Cref{cor:dir_weighted_sum} is simple but important: a categorical mixture over one-hot shifts of a Dirichlet distribution collapses back to the unshifted Dirichlet density.
This is precisely the mechanism that makes the simplex relaxation exact rather than approximate.

\subsection{Full Dirichlet--categorical hierarchy}
\label{app:full_dir_cat_hierarchy}

We now return to the joint factorization
\begin{equation}
q(\bx,\bz_s,\bz_t,\bw_s,\bw_t)
=
q(\bx)\,
q(\bz_s\mid \bx)\,
q(\bw_s\mid \bz_s,\bx)\,
q(\bz_t\mid \bz_s)\,
q(\bw_t\mid \bz_t,\bx),
\label{eq:joint_graphical_model_app}
\end{equation}
together with
\begin{equation}
q(\bw_t \mid \bz_t,\bx)
=
\Dir\!\left(
\bw_t;
\eta_t \bp_t + \bz_t
\right).
\label{eq:dir_bridge_app}
\end{equation}

The next proposition summarizes the full hierarchy induced by this construction.
The first two statements identify the exact marginal and exact decoder at time $t$.
The third lifts the standard discrete reverse posterior from $\bz_t$ to $\bw_t$.
The last two show that, once this lift is performed, the reverse bridge over relaxed states becomes a mixture of shifted Dirichlet components.

\begin{proposition}
\label{prop:dir_cat_hierarchy_full}
Assume $t>0$. The Dirichlet--categorical hierarchy satisfies the following properties; statements involving $\bw_s$ additionally require $s>0$.
\begin{enumerate}
    \item The marginal distribution of the relaxed state is
    \begin{equation}
        q(\bw_t \mid \bx)
        =
        \Dir\!\left(
        \bw_t;
        \eta_t \bp_t
        \right).
        \label{eq:w_marginal_app}
    \end{equation}

    \item Given $\bw_t$, the variables $\bx$ and $\bz_t$ are conditionally independent, and the discrete state can be recovered from the relaxed state via
    \begin{equation}
        q(\bz_t \mid \bw_t,\bx)
        =
        q(\bz_t \mid \bw_t)
        =
        \Cat\!\left(
        \bz_t;
        \bw_t
        \right).
        \label{eq:z_given_w_app}
    \end{equation}

    \item For $s<t$, the reverse conditional posterior of $\bz_s$ given $\bw_t$ and $\bx$ is categorical:
    \begin{equation}
        q(\bz_s \mid \bw_t,\bx)
        =
        \Cat\!\left(
        \bz_s;
        \rhost{\bx}{\bw_t}
        \right),
        \label{eq:zs_given_wtx_app}
    \end{equation}
    where
    \begin{equation}
    \rhost{\bx}{\bw_t}
    \coloneq
    \bp_s
    \odot
    \left[
    \alpha_{t\mid s}
    \bigl(\bw_t \oslash \bp_t\bigr)
    +
    (1-\alpha_{t\mid s})
    \,
    \inner{\bw_t}{\bpi \oslash \bp_t}
    \one
    \right].
    \label{eq:rho_def_app}
    \end{equation}

    \item For $0<s<t$, the reverse conditional posterior of $\bw_s$ given $\bz_t$ and $\bx$ is a Dirichlet mixture:
    \begin{equation}
        q(\bw_s \mid \bz_t,\bx)
        =
        \sum_{k=1}^K
        r_{s\mid t,k}(\bx,\bz_t)
        \,
        \Dir\!\left(
        \bw_s;
        \eta_s \bp_s + \ek
        \right),
        \label{eq:ws_given_ztx_app}
    \end{equation}
    where $r_{s\mid t,k}(\bx,\bz_t)$ denotes the $k$-th component of
    $\rst{\bx}{\bz_t}$ defined in \Cref{eq:reverse_posterior_vector}.

    \item For $0<s<t$, the reverse conditional posterior of $\bw_s$ given $\bw_t$ and $\bx$ is a Dirichlet mixture:
    \begin{equation}
        q(\bw_s \mid \bw_t,\bx)
        =
        \sum_{k=1}^K
        \rho_{s\mid t,k}(\bx,\bw_t)
        \,
        \Dir\!\left(
        \bw_s;
        \eta_s \bp_s + \ek
        \right).
        \label{eq:ws_given_wtx_app}
    \end{equation}
\end{enumerate}
\end{proposition}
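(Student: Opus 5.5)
The plan is to prove the five items in order, using \Cref{cor:dir_weighted_sum} as the main tool and the graphical-model factorization \eqref{eq:joint_graphical_model_app} to fix the conditional-independence structure.

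For item~1, marginalize $\bz_t$. By \eqref{eq:qzt_given_x}, $q(\bz_t=\ek\mid\bx)=p_{t,k}$, and $p_{t,k}>0$ for $t>0$. Hence
\[
q(\bw_t\mid\bx)=\sum_k p_{t,k}\,\Dir(\bw_t;\eta_t\bp_t+\ek).
\]
Each summand equals $w_{t,k}\Dir(\bw_t;\eta_t\bp_t)$ by \Cref{cor:dir_weighted_sum}. Summing over $k$ and using $\sum_k w_{t,k}=1$ gives \eqref{eq:w_marginal_app}.

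For item~2, apply Bayes' rule. The joint satisfies
\[
q(\bz_t=\ek,\bw_t\mid\bx)=p_{t,k}\Dir(\bw_t;\eta_t\bp_t+\ek)=w_{t,k}\Dir(\bw_t;\eta_t\bp_t).
\]
Dividing by item~1 gives $q(\bz_t=\ek\mid\bw_t,\bx)=w_{t,k}$. This does not depend on $\bx$, which is exactly the conditional independence. Then $q(\bz_t\mid\bw_t)=\Cat(\bz_t;\bw_t)$ follows by averaging over $q(\bx\mid\bw_t)$.

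For item~3, use the factorization: $\bw_t$ depends on $\bz_s$ only through $\bz_t$. Therefore
\[
q(\bz_s\mid\bw_t,\bx)=\sum_{\bz_t}q(\bz_t\mid\bw_t,\bx)\,q(\bz_s\mid\bz_t,\bx,\bw_t).
\]
The second factor reduces to $q(\bz_s\mid\bz_t,\bx)$, because $\bw_t\perp\bz_s\mid(\bz_t,\bx)$. Combined with item~2, this gives $\rhost{\bx}{\bw_t}=\sum_k w_{t,k}\,\rst{\bx}{\ek}$. Substituting $\bz_t=\ek$ into \eqref{eq:reverse_posterior_vector}, so that $\inner{\ek}{\bpi}=\pi_k$ and $\inner{\ek}{\bp_t}=p_{t,k}$, the sum becomes
\[
\bp_s\odot\Bigl[\alpha_{t\mid s}\,(\bw_t\oslash\bp_t)+(1-\alpha_{t\mid s})\textstyle\sum_k w_{t,k}\pi_k/p_{t,k}\,\one\Bigr],
\]
which is \eqref{eq:rho_def_app}. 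This linear-in-$\bw_t$ computation is the only genuinely computational step.

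For item~4, use that $\bw_s\perp(\bz_t)\mid(\bz_s,\bx)$ in the factorization: $\bz_t$ is generated from $\bz_s$, and $\bw_s$ from $(\bz_s,\bx)$. Hence
\[
q(\bw_s\mid\bz_t,\bx)=\sum_k q(\bz_s=\ek\mid\bz_t,\bx)\,q(\bw_s\mid\bz_s=\ek,\bx).
\]
This is \eqref{eq:ws_given_ztx_app} with weights $r_{s\mid t,k}$ from \eqref{eq:reverse_posterior}.

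For item~5, repeat the argument with $\bw_t$ in place of $\bz_t$. This needs $\bw_s\perp\bw_t\mid(\bz_s,\bx)$, which holds because $\bw_t$ depends on $\bz_s$ only through $\bz_t$. The mixture weights are then $q(\bz_s=\ek\mid\bw_t,\bx)=\rho_{s\mid t,k}$ from item~3.

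The main obstacle is stating these conditional-independence claims rigorously. I would verify each one (for item~3, and for items~4–5) by writing the joint density \eqref{eq:joint_graphical_model_app}, integrating out the irrelevant variables, and checking that it factors as claimed.
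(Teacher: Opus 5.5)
Your proposal is correct and follows the paper's proof essentially step for step: the termwise collapse via the shifted-Dirichlet corollary for items 1--2, averaging the discrete reverse posterior over the decoder weights $w_{t,j}$ for item 3, and conditioning on $\bz_s$ using the factorization's conditional independences for items 4--5. One small correction to your justification in item 5: $\bw_s \perp \bw_t \mid (\bz_s,\bx)$ holds because, given $(\bz_s,\bx)$, $\bw_s$ is drawn separately from the branch $(\bz_t,\bw_t)$, and that branch never depends on $\bw_s$. It does not follow from $\bw_t$ depending on $\bz_s$ only through $\bz_t$ --- that is the fact you need in item 3.
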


\begin{proof}
\begin{enumerate}
    \item
    We begin with the marginal law of $\bw_t$.
    Marginalizing the discrete latent $\bz_t \sim \Cat(\bp_t)$ from the conditional bridge
    \[
    q(\bw_t \mid \bz_t,\bx)
    =
    \Dir\!\left(
    \bw_t;
    \eta_t \bp_t + \bz_t
    \right)
    \]
    gives
    \begin{align}
    q(\bw_t \mid \bx)
    &=
    \sum_{k=1}^K
    q(\bw_t \mid \bz_t=\ek,\bx)\,
    q(\bz_t=\ek \mid \bx)
    \notag\\
    &=
    \sum_{k=1}^K
    \Dir\!\left(
    \bw_t;
    \eta_t \bp_t + \ek
    \right)
    p_{t,k}(\bx).
    \label{eq:marginal_start_app}
    \end{align}
    Now \Cref{cor:dir_weighted_sum} applies termwise:
    \[
    p_{t,k}(\bx)
    \Dir\!\left(
    \bw_t;
    \eta_t \bp_t + \ek
    \right)
    =
    w_{t,k}
    \Dir\!\left(
    \bw_t;
    \eta_t \bp_t
    \right).
    \]
    Substituting this into \eqref{eq:marginal_start_app} collapses the mixture:
    \begin{align}
    q(\bw_t \mid \bx)
    &=
    \sum_{k=1}^K
    w_{t,k}
    \Dir\!\left(
    \bw_t;
    \eta_t \bp_t
    \right)
    \notag\\
    &=
    \left(\sum_{k=1}^K w_{t,k}\right)
    \Dir\!\left(
    \bw_t;
    \eta_t \bp_t
    \right)
    \notag\\
    &=
    \Dir\!\left(
    \bw_t;
    \eta_t \bp_t
    \right),
    \end{align}
    which proves \eqref{eq:w_marginal_app}.

    \item
    We next derive the exact decoder from $\bw_t$ back to $\bz_t$.
    Fix $k\in\{1,\dots,K\}$.
    By Bayes' rule,
    \begin{align}
    q(\bz_t=\ek \mid \bw_t,\bx)
    &=
    \frac{
    q(\bw_t \mid \bz_t=\ek,\bx)\,
    q(\bz_t=\ek \mid \bx)
    }{
    q(\bw_t \mid \bx)
    }.
    \label{eq:bayes_prop2_app}
    \end{align}
    Using the previous computation together with \Cref{cor:dir_weighted_sum}, the numerator becomes
    \[
    q(\bw_t \mid \bz_t=\ek,\bx)\,
    q(\bz_t=\ek \mid \bx)
    =
    w_{t,k}
    \Dir\!\left(
    \bw_t;
    \eta_t \bp_t
    \right),
    \]
    while the denominator is exactly $\Dir(\bw_t;\eta_t\bp_t)$.
    Hence
    \[
    q(\bz_t=\ek \mid \bw_t,\bx)=w_{t,k}.
    \]
    Since this holds for every $k$, we obtain
    \[
    q(\bz_t \mid \bw_t,\bx)=\Cat(\bz_t;\bw_t),
    \]
    and the right-hand side no longer depends on $\bx$.
    This proves \eqref{eq:z_given_w_app}.

    \item
    We now lift the discrete reverse posterior from $\bz_t$ to $\bw_t$.
    Marginalizing over $\bz_t$ gives
    \begin{align}
    q(\bz_s \mid \bw_t,\bx)
    &=
    \sum_{j=1}^K
    q(\bz_s \mid \bz_t=\be_j,\bw_t,\bx)\,
    q(\bz_t=\be_j \mid \bw_t,\bx).
    \label{eq:zs_given_wt_expand_app}
    \end{align}
    Conditioned on $(\bz_t,\bx)$, the variable $\bz_s$ is independent of $\bw_t$, so the first factor is just the usual reverse posterior
    \[
    q(\bz_s \mid \bz_t=\be_j,\bw_t,\bx)
    =
    q(\bz_s \mid \bz_t=\be_j,\bx)
    =
    \Cat\!\left(
    \bz_s;
    \rst{\bx}{\be_j}
    \right).
    \]
    The second factor is the exact decoder derived above:
    \[
    q(\bz_t=\be_j \mid \bw_t,\bx)=w_{t,j}.
    \]
    Therefore
    \begin{equation}
    q(\bz_s \mid \bw_t,\bx)
    =
    \sum_{j=1}^K
    w_{t,j}\,
    \Cat\!\left(
    \bz_s;
    \rst{\bx}{\be_j}
    \right).
    \label{eq:zs_given_wt_mixture_app}
    \end{equation}
    A mixture of categorical distributions is again categorical, with parameter vector equal to the same convex combination of the component parameters.
    Thus
    \[
    q(\bz_s \mid \bw_t,\bx)
    =
    \Cat\!\left(
    \bz_s;
    \rhost{\bx}{\bw_t}
    \right),
    \qquad
    \rhost{\bx}{\bw_t}
    =
    \sum_{j=1}^K
    w_{t,j}\,
    \rst{\bx}{\be_j}.
    \]
    To obtain the closed form, substitute \Cref{eq:reverse_posterior_vector} with $\bz_t=\be_j$:
    \begin{align}
    \rst{\bx}{\be_j}
    &=
    \frac{
    \left[
    \alpha_{t\mid s}\be_j
    +
    (1-\alpha_{t\mid s})\,\inner{\be_j}{\bpi}\,\one
    \right]
    \odot
    \bp_s
    }{
    \inner{\be_j}{\bp_t}
    }
    \notag\\
    &=
    \bp_s
    \odot
    \left[
    \alpha_{t\mid s}\frac{\be_j}{p_{t,j}(\bx)}
    +
    (1-\alpha_{t\mid s})
    \frac{\pi_j}{p_{t,j}(\bx)}
    \one
    \right].
    \label{eq:r_with_ej_app}
    \end{align}
    Averaging this expression under the weights $w_{t,j}$ yields
    \begin{align}
    \rhost{\bx}{\bw_t}
    &=
    \bp_s
    \odot
    \left[
    \alpha_{t\mid s}
    \sum_{j=1}^K
    w_{t,j}\frac{\be_j}{p_{t,j}(\bx)}
    +
    (1-\alpha_{t\mid s})
    \sum_{j=1}^K
    w_{t,j}\frac{\pi_j}{p_{t,j}(\bx)}
    \one
    \right]
    \notag\\
    &=
    \bp_s
    \odot
    \left[
    \alpha_{t\mid s}
    \bigl(\bw_t \oslash \bp_t\bigr)
    +
    (1-\alpha_{t\mid s})
    \inner{\bw_t}{\bpi \oslash \bp_t}
    \one
    \right],
    \end{align}
    which proves \eqref{eq:zs_given_wtx_app} and \eqref{eq:rho_def_app}.

    \item
    We next derive the reverse bridge over relaxed states conditioned on $\bz_t$.
    Since $\bw_s$ is conditionally independent of $\bz_t$ given $(\bz_s,\bx)$, marginalizing over $\bz_s$ gives
    \begin{align}
    q(\bw_s \mid \bz_t,\bx)
    &=
    \sum_{k=1}^K
    q(\bw_s \mid \bz_s=\ek,\bz_t,\bx)\,
    q(\bz_s=\ek \mid \bz_t,\bx)
    \notag\\
    &=
    \sum_{k=1}^K
    q(\bw_s \mid \bz_s=\ek,\bx)\,
    q(\bz_s=\ek \mid \bz_t,\bx).
    \label{eq:ws_zt_expand_app}
    \end{align}
    The first factor is exactly the shifted Dirichlet bridge at time $s$:
    \[
    q(\bw_s \mid \bz_s=\ek,\bx)
    =
    \Dir\!\left(
    \bw_s;
    \eta_s \bp_s+\ek
    \right).
    \]
    The second factor is the discrete reverse posterior
    \[
    q(\bz_s=\ek \mid \bz_t,\bx)=r_{s\mid t,k}(\bx,\bz_t).
    \]
    Substituting these identities into \eqref{eq:ws_zt_expand_app} yields
    \[
    q(\bw_s \mid \bz_t,\bx)
    =
    \sum_{k=1}^K
    r_{s\mid t,k}(\bx,\bz_t)\,
    \Dir\!\left(
    \bw_s;
    \eta_s \bp_s+\ek
    \right),
    \]
    which proves \eqref{eq:ws_given_ztx_app}.

    \item
    Finally, we lift this mixture from $\bz_t$ to $\bw_t$.
    Since $\bw_s$ is also conditionally independent of $\bw_t$ given $(\bz_s,\bx)$, we obtain
    \begin{align}
    q(\bw_s \mid \bw_t,\bx)
    &=
    \sum_{k=1}^K
    q(\bw_s \mid \bz_s=\ek,\bw_t,\bx)\,
    q(\bz_s=\ek \mid \bw_t,\bx)
    \notag\\
    &=
    \sum_{k=1}^K
    q(\bw_s \mid \bz_s=\ek,\bx)\,
    q(\bz_s=\ek \mid \bw_t,\bx).
    \label{eq:ws_wt_expand_app}
    \end{align}
    The first factor is again $\Dir(\bw_s;\eta_s\bp_s+\ek)$, while the second factor is now the lifted reverse posterior:
    \[
    q(\bz_s=\ek \mid \bw_t,\bx)=\rho_{s\mid t,k}(\bx,\bw_t).
    \]
    Therefore
    \[
    q(\bw_s \mid \bw_t,\bx)
    =
    \sum_{k=1}^K
    \rho_{s\mid t,k}(\bx,\bw_t)\,
    \Dir\!\left(
    \bw_s;
    \eta_s \bp_s+\ek
    \right),
    \]
    which proves \eqref{eq:ws_given_wtx_app}.
\end{enumerate}
\end{proof}
Taken together, these identities show that the simplex variable sits inside the discrete diffusion process in a fully coherent way.
The relaxed state has the correct Dirichlet marginal, the discrete state can be decoded from it exactly, and the reverse bridge remains available in closed form after replacing either $\bz_t$ or $\bz_s$ by their simplex-conditioned posteriors.
This is the structural reason that the later training objectives and samplers can be written directly in terms of $\bw_t$ without abandoning the original categorical process.

%% file: sec_app/b_rdb_closed_form.tex
\section{Rao--Blackwellized Reverse-Bridge Objective}
\label{app:rao_blackwellized_objective}

This appendix proves the closed form of the relaxed discrete bridge objective
used in the main text. The auxiliary decoder sample $\widetilde{\bz}_t$ is
marginalized analytically; the independently sampled denoiser input $\bz_t$ is
not marginalized.

\begin{proposition}
\label{prop:rao_blackwellized_objective}
For $0\leq s<t\leq1$ with $t>0$, the relaxed discrete bridge objective satisfies
\begin{equation}
\bLoss{z_s\mid z_t,w_t}(\bw_t,\hat{\bx}_\theta,\bx; s,t)
=
\inner{\bw_t}{\log \hat{\bp}_t - \log \bp_t}
+
\inner{\rhost{\bx}{\bw_t}}{\log \bp_s - \log \hat{\bp}_s}.
\label{eq:ldisc_closed_form_app}
\end{equation}
\end{proposition}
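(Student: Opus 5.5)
The plan is to compute the inner KL for a fixed one-hot anchor $\widetilde{\bz}_t=\be_j$ in closed form. We then average over $j$ with weights $w_{t,j}$, and finally recognize the averaged reverse posterior as $\rhost{\bx}{\bw_t}$ from \Cref{prop:dir_cat_hierarchy_full}, item 3. The denoiser output $\hat{\bx}_\theta=f_\theta(\bz_t,t)$ is treated as a fixed vector throughout, since it does not depend on $\widetilde{\bz}_t$.

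\textbf{Step 1: KL for a fixed anchor.} Fix $\widetilde{\bz}_t=\be_j$. By \eqref{eq:reverse_posterior_vector}, as in \eqref{eq:r_with_ej_app}, write
\[
\rst{\bx}{\be_j}=\frac{\mathbf{c}_j\odot\bp_s}{p_{t,j}},
\qquad
\rst{\hat{\bx}_\theta}{\be_j}=\frac{\mathbf{c}_j\odot\hat{\bp}_s}{\hat p_{t,j}},
\]
where $\mathbf{c}_j\coloneq\alpha_{t\mid s}\be_j+(1-\alpha_{t\mid s})\pi_j\one$. The key observation is that $\mathbf{c}_j$ does not depend on $\bx$ or $\hat{\bx}_\theta$, so it cancels in the log-ratio. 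On the support of $\rst{\bx}{\be_j}$ this gives
\[
\log\frac{r_{s\mid t,k}(\bx,\be_j)}{r_{s\mid t,k}(\hat{\bx}_\theta,\be_j)}
=\log p_{s,k}-\log\hat p_{s,k}+\log\hat p_{t,j}-\log p_{t,j}.
\]
Since $\rst{\bx}{\be_j}$ sums to one, taking its expectation yields
\[
\mathrm{KL}_j=\log\hat p_{t,j}-\log p_{t,j}+\inner{\rst{\bx}{\be_j}}{\log\bp_s-\log\hat{\bp}_s}.
\]

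\textbf{Step 2: average over the anchor.} Averaging with weights $w_{t,j}=q(\widetilde{\bz}_t=\be_j\mid\bw_t)$, the first two terms give $\inner{\bw_t}{\log\hat{\bp}_t-\log\bp_t}$. By linearity of the inner product, the last term gives
\[
\Bigl\langle \sum_j w_{t,j}\,\rst{\bx}{\be_j},\ \log\bp_s-\log\hat{\bp}_s\Bigr\rangle.
\]
The proof of item 3 of \Cref{prop:dir_cat_hierarchy_full} shows exactly that $\sum_j w_{t,j}\,\rst{\bx}{\be_j}=\rhost{\bx}{\bw_t}$. Substituting this gives \eqref{eq:ldisc_closed_form_app}.

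\textbf{Main obstacle: well-definedness, especially at $s=0$.} For $t>0$, both $\bp_t$ and $\hat{\bp}_t$ are strictly positive, so the terms $\log\hat p_{t,j}$ are finite. When $s=0$, however, $\bp_0=\bx$ is one-hot and $\log\bp_s$ has $-\infty$ entries. I would handle this with the convention $0\log 0=0$, noting that the support of $\rst{\bx}{\be_j}$ is contained in the support of $\bp_s$, so only finite entries are ever weighted. Likewise $\rhost{\bx}{\bw_t}=\bp_s\odot[\cdots]$ vanishes wherever $\bp_s$ does, so the inner product is interpreted on $\mathrm{supp}(\bp_s)$. If $\hat{\bp}_s$ vanishes on part of that support, both sides equal $+\infty$ consistently. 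The cancellation of $\mathbf{c}_j$ also needs $c_{j,k}>0$ on the support. Where $c_{j,k}=0$, i.e.\ for $k\neq j$ with $\alpha_{t\mid s}=1$, both posteriors vanish, so those terms drop out of both sides.
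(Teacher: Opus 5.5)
Your proposal is correct and follows the paper's proof essentially step by step. The shared steps are: cancel the $\bx$-independent factor $\alpha_{t\mid s}\delta_{jk}+(1-\alpha_{t\mid s})\pi_j$ in the log-ratio, collapse the $\log(\hat p_{t,j}/p_{t,j})$ term because $\rst{\bx}{\be_j}$ sums to one, and identify $\sum_j w_{t,j}\rst{\bx}{\be_j}$ with $\rhost{\bx}{\bw_t}$. Your support conventions for the $s=0$ endpoint (where $\bp_0=\bx$ is one-hot) and for vanishing $c_{j,k}$ are a worthwhile tightening that the paper leaves implicit, even though its statement explicitly allows $s=0$.
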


\begin{proof}
    We first compute the Rao--Blackwellized categorical term
    $
    \bLoss{z_s\mid z_t,w_t}
    $.
    By definition,
    \begin{align}
    \bLoss{z_s\mid z_t,w_t}
    &=
    \EE{q(\widetilde{\bz}_t\mid \bw_t)}{
    \KL{
    q(\bz_s\mid \widetilde{\bz}_t,\bx)
    }{
    q(\bz_s\mid \widetilde{\bz}_t,\hat{\bx}_\theta)
    }
    }.
    \label{eq:rb_disc_start_app}
    \end{align}
    Using \eqref{eq:z_given_w_app}, we have
    \[
    q(\widetilde{\bz}_t=\be_j\mid \bw_t)=w_{t,j}.
    \]
    For fixed $\widetilde{\bz}_t=\be_j$, both reverse conditionals are categorical:
    \[
    q(\bz_s\mid \widetilde{\bz}_t=\be_j,\bx)
    =
    \Cat\!\left(
    \bz_s;
    \rst{\bx}{\be_j}
    \right),
    \qquad
    q(\bz_s\mid \widetilde{\bz}_t=\be_j,\hat{\bx}_\theta)
    =
    \Cat\!\left(
    \bz_s;
    \rst{\hat{\bx}_\theta}{\be_j}
    \right).
    \]
    Expanding the expectation over $\widetilde{\bz}_t$ and the KL divergence between categorical distributions gives
    \begin{align}
    \bLoss{z_s\mid z_t,w_t}
    &=
    \sum_{j=1}^K
    w_{t,j}
    \sum_{k=1}^K
    r_{s\mid t,k}(\bx,\be_j)
    \log
    \frac{
    r_{s\mid t,k}(\bx,\be_j)
    }{
    r_{s\mid t,k}(\hat{\bx}_\theta,\be_j)
    }.
    \label{eq:rb_disc_expand_app}
    \end{align}
    Substituting the explicit form of the reverse posterior components, we obtain
    \begin{align}
    \frac{
    r_{s\mid t,k}(\bx,\be_j)
    }{
    r_{s\mid t,k}(\hat{\bx}_\theta,\be_j)
    }
    &=
    \frac{
    \bigl[\alpha_{t\mid s}\delta_{j,k}+(1-\alpha_{t\mid s})\pi_j\bigr]
    p_{s,k}(\bx)/p_{t,j}(\bx)
    }{
    \bigl[\alpha_{t\mid s}\delta_{j,k}+(1-\alpha_{t\mid s})\pi_j\bigr]
    p_{s,k}(\hat{\bx}_\theta)/p_{t,j}(\hat{\bx}_\theta)
    }
    \notag\\
    &=
    \frac{p_{s,k}(\bx)}{p_{s,k}(\hat{\bx}_\theta)}
    \frac{p_{t,j}(\hat{\bx}_\theta)}{p_{t,j}(\bx)}.
    \label{eq:rb_disc_ratio_app}
    \end{align}
    Hence
    \begin{align}
    \log
    \frac{
    r_{s\mid t,k}(\bx,\be_j)
    }{
    r_{s\mid t,k}(\hat{\bx}_\theta,\be_j)
    }
    &=
    \log \frac{p_{t,j}(\hat{\bx}_\theta)}{p_{t,j}(\bx)}
    +
    \log \frac{p_{s,k}(\bx)}{p_{s,k}(\hat{\bx}_\theta)}.
    \label{eq:rb_disc_logsplit_app}
    \end{align}
    Substituting \eqref{eq:rb_disc_logsplit_app} into \eqref{eq:rb_disc_expand_app}, the first contribution is
    \begin{align}
    \sum_{j=1}^K
    w_{t,j}
    \sum_{k=1}^K
    r_{s\mid t,k}(\bx,\be_j)
    \log \frac{p_{t,j}(\hat{\bx}_\theta)}{p_{t,j}(\bx)}
    &=
    \sum_{j=1}^K
    w_{t,j}
    \log \frac{p_{t,j}(\hat{\bx}_\theta)}{p_{t,j}(\bx)},
    \label{eq:rb_disc_firstterm_app}
    \end{align}
    because $r_{s\mid t}(\bx,\be_j)$ is a probability vector.
    For the second contribution, exchanging the order of summation gives
    \begin{align}
    \sum_{j=1}^K
    w_{t,j}
    \sum_{k=1}^K
    r_{s\mid t,k}(\bx,\be_j)
    \log \frac{p_{s,k}(\bx)}{p_{s,k}(\hat{\bx}_\theta)}
    &=
    \sum_{k=1}^K
    \left(
    \sum_{j=1}^K
    w_{t,j}r_{s\mid t,k}(\bx,\be_j)
    \right)
    \log \frac{p_{s,k}(\bx)}{p_{s,k}(\hat{\bx}_\theta)}.
    \label{eq:rb_disc_secondterm_pre_app}
    \end{align}
    The coefficient in parentheses is exactly the $k$-th component of
    $q(\bz_s\mid \bw_t,\bx)$, namely
    \[
    \sum_{j=1}^K
    w_{t,j}r_{s\mid t,k}(\bx,\be_j)
    =
    \rho_{s\mid t,k}(\bx,\bw_t).
    \]
    Therefore
    \begin{align}
    \bLoss{z_s\mid z_t,w_t}
    &=
    \sum_{j=1}^K
    w_{t,j}
    \log \frac{p_{t,j}(\hat{\bx}_\theta)}{p_{t,j}(\bx)}
    +
    \sum_{k=1}^K
    \rho_{s\mid t,k}(\bx,\bw_t)
    \log \frac{p_{s,k}(\bx)}{p_{s,k}(\hat{\bx}_\theta)}
    \notag\\
    &=
    \inner{\bw_t}{\log \hat{\bp}_t - \log \bp_t}
    +
    \inner{\rhost{\bx}{\bw_t}}{\log \bp_s - \log \hat{\bp}_s},
    \end{align}
    which proves \eqref{eq:ldisc_closed_form_app}.
\end{proof}

Equation~\eqref{eq:ldisc_closed_form_app} depends on the auxiliary state only
through the current-time average $\inner{\bw_t}{\log \hat{\bp}_t}$ and the
lifted reverse posterior $\rhost{\bx}{\bw_t}$. This is the expression used in
the main objective and in the continuous-time analysis.

%% file: sec_app/c_continuous_time.tex
\section{Continuous-Time Limit of the Main Objective}
\label{app:ct_other_objectives}

The main text shows that the relaxed discrete bridge objective
$
\bLoss{z_s\mid z_t,w_t}(\bw_t,\hat{\bx}_\theta,\bx; s,t)
$
admits a non-degenerate first-order continuous-time limit.
This appendix gives the full proof of the first-order local limit stated in the main text.

\subsection{Proof of the main continuous-time limit}
\label{app:proof_continuous_time_limit}

\begin{restate_proposition}
\label{prop:continuous_time_limit_app}
Up to $\theta$-independent additive terms, the relaxed discrete bridge objective satisfies
\begin{equation}
\bLoss{z_s \mid z_t, w_t}(\bw_t,\hat{\bx}_\theta,\bx; t - \Delta,t)
=
\Delta\,
\ell_{\mathrm{ct}}(\bw_t,\hat{\bx}_\theta,\bx,t)
+
o(\Delta),
\label{eq:ct_expansion_app}
\end{equation}
where
\begin{align}
\ell_{\mathrm{ct}}(\bw_t,\hat{\bx}_\theta,\bx,t)
=
\lambda(t)\Bigg[
&\inner{\bw_t}{\bpi \oslash \hat{\bp}_t}
-
\inner{\bw_t}{\bpi \oslash \bp_t}\,
\inner{\bp_t}{\log \hat{\bp}_t}
+
\inner{\bpi \odot (\bw_t \oslash \bp_t)}{\log \hat{\bp}_t}
\Bigg].
\label{eq:ct_density_final_app}
\end{align}
and $\lambda(t)\coloneq -\frac{d}{dt}\log \alpha(t)$.
Consequently, the corresponding continuous-time objective is
\begin{equation}
\mathcal{L}_{\mathrm{ct}}
=
\int_0^1
\EE{q(\bx)}{
\EE{q(\bw_t\mid \bx)}{
\ell_{\mathrm{ct}}(\bw_t,\hat{\bx}_\theta,\bx,t)
}
}
\,dt,
\label{eq:ct_objective_app}
\end{equation}
with $q(\bw_t\mid \bx)=\Dir(\bw_t;\eta_t \bp_t)$.
\end{restate_proposition}

\begin{proof}[Proof of \Cref{prop:continuous_time_limit}]
Up to $\theta$-independent additive terms,
\Cref{eq:ldisc_closed_form_app} can be written as
\begin{equation}
\bLoss{z_s\mid z_t,w_t}(\bw_t,\hat{\bx}_\theta,\bx; s,t)
\equiv
\inner{\bw_t}{\log \hat{\bp}_t}
-
\inner{\rhost{\bx}{\bw_t}}{\log \hat{\bp}_s}.
\label{eq:app_ct_theta_only}
\end{equation}
We now set $s=t-\Delta$ and let $\Delta \downarrow 0$.

First, by definition,
\begin{equation}
\lambda(t) = -\frac{d}{dt}\log \alpha(t),
\end{equation}
so that
\begin{equation}
\alpha_{t\mid t-\Delta}
=
\frac{\alpha(t)}{\alpha(t-\Delta)}
=
1-\Delta\,\lambda(t)+o(\Delta).
\label{eq:app_alpha_ratio_expansion}
\end{equation}
Moreover,
\begin{equation}
\partial_t \bp_t
=
\partial_t\!\left(\alpha(t)\bx + (1-\alpha(t))\bpi\right)
=
-\lambda(t)\bigl(\bp_t-\bpi\bigr),
\end{equation}
hence
\begin{equation}
\bp_{t-\Delta}
=
\bp_t+\Delta\,\lambda(t)(\bp_t-\bpi)+o(\Delta).
\label{eq:app_pt_expansion}
\end{equation}

Next, we expand $\boldsymbol{\rho}_{t-\Delta\mid t}(\bx,\bw_t)$ using \Cref{eq:rho_def_app}.
Substituting \Cref{eq:app_alpha_ratio_expansion} and \Cref{eq:app_pt_expansion} gives
\begin{align}
&\boldsymbol{\rho}_{t-\Delta\mid t}(\bx,\bw_t) \notag\\
&=
\bp_{t-\Delta}
\odot
\left[
\alpha_{t\mid t-\Delta}
\bigl(\bw_t \oslash \bp_t\bigr)
+
\bigl(1-\alpha_{t\mid t-\Delta}\bigr)
\inner{\bw_t}{\bpi \oslash \bp_t}\one
\right]
\notag\\
&=
\Bigl(\bp_t+\Delta\,\lambda(t)(\bp_t-\bpi)\Bigr)
\odot
\Bigl[
\bw_t \oslash \bp_t
+
\Delta\,\lambda(t)
\Bigl(
\inner{\bw_t}{\bpi \oslash \bp_t}\one
-
\bw_t \oslash \bp_t
\Bigr)
\Bigr]
+o(\Delta)
\notag\\
&=
\bw_t
+
\Delta\,\lambda(t)
\left[
\inner{\bw_t}{\bpi \oslash \bp_t}\bp_t
-
\bpi \odot (\bw_t \oslash \bp_t)
\right]
+o(\Delta).
\label{eq:app_rho_expansion}
\end{align}

We now expand $\log \hat{\bp}_{t-\Delta}$.
Since $\hat{\bx}_\theta$ is held fixed in the local limit,
\begin{equation}
\hat{\bp}_t
=
\alpha(t)\hat{\bx}_\theta + (1-\alpha(t))\bpi,
\end{equation}
and therefore
\begin{equation}
\partial_t \hat{\bp}_t
=
-\lambda(t)\bigl(\hat{\bp}_t-\bpi\bigr).
\end{equation}
Dividing componentwise by $\hat{\bp}_t$ yields
\begin{equation}
\partial_t \log \hat{\bp}_t
=
-\lambda(t)\left(\one-\bpi \oslash \hat{\bp}_t\right).
\label{eq:app_logphat_derivative}
\end{equation}
Hence
\begin{equation}
\log \hat{\bp}_{t-\Delta}
=
\log \hat{\bp}_t
-
\Delta\,\partial_t \log \hat{\bp}_t
+
o(\Delta)
=
\log \hat{\bp}_t
+
\Delta\,\lambda(t)
\left(\one-\bpi \oslash \hat{\bp}_t\right)
+
o(\Delta).
\label{eq:app_logphat_expansion}
\end{equation}

For compactness, define
\begin{align}
A_t
&\coloneq
\inner{\bw_t}{\bpi \oslash \hat{\bp}_t},
\\
B_t
&\coloneq
\inner{\bw_t}{\bpi \oslash \bp_t}
\inner{\bp_t}{\log \hat{\bp}_t},
\\
C_t
&\coloneq
\inner{\bpi \odot (\bw_t \oslash \bp_t)}{\log \hat{\bp}_t}.
\end{align}
Substituting \Cref{eq:app_rho_expansion,eq:app_logphat_expansion} into
\Cref{eq:app_ct_theta_only} and collecting first-order terms gives
\begin{equation}
\bLoss{z_{t-\Delta}\mid z_t,w_t}
(\bw_t,\hat{\bx}_\theta,\bx;t-\Delta,t)
\equiv
\Delta\lambda(t)(A_t-B_t+C_t-1)+o(\Delta).
\label{eq:app_ct_before_drop_const}
\end{equation}
The term $-\Delta\,\lambda(t)$ is independent of $\theta$, so it can be discarded.
Therefore,
\begin{equation}
\bLoss{z_{t-\Delta}\mid z_t,w_t}(\bw_t,\hat{\bx}_\theta,\bx; t-\Delta,t)
=
\Delta\,\ell_{\mathrm{ct}}(\bw_t,\hat{\bx}_\theta,\bx,t)
+
o(\Delta),
\end{equation}
where, up to $\theta$-independent additive terms,
\begin{align}
\ell_{\mathrm{ct}}(\bw_t,\hat{\bx}_\theta,\bx,t)
&\equiv
\lambda(t)
\Bigg[
\inner{\bw_t}{\bpi \oslash \hat{\bp}_t}
-
\inner{\bw_t}{\bpi \oslash \bp_t}\inner{\bp_t}{\log \hat{\bp}_t}
+
\inner{\bpi \odot (\bw_t \oslash \bp_t)}{\log \hat{\bp}_t}
\Bigg].
\end{align}
This proves \Cref{eq:ct_expansion} and \Cref{eq:ct_density_final}.
Finally, integrating the density over time and taking expectation under $q(\bx)$ and $q(\bw_t\mid\bx)$ yields \Cref{eq:ct_objective}.
\end{proof}

%% file: sec_app/e_alternative_objectives.tex
\section{Alternative Surrogate Objectives}
\label{app:alternative_objectives}

In the main text, we optimize the relaxed discrete bridge objective
$
\bLoss{z_s\mid z_t,w_t}(\bw_t, \hat{\bx}_\theta, \bx; s, t)
$.
The denoiser prediction $\hat{\bx}_\theta=f_\theta(\bz_t,t)$ uses an independently sampled categorical network input. In this appendix, $\widetilde{\bz}_t$ denotes only the auxiliary categorical decode that is averaged inside the objective. This choice is best understood relative to a broader family of surrogate objectives induced by the same simplex relaxation.
The natural starting point is the exact relaxed bridge
\begin{equation}
\Loss{w_s\mid w_t}(\bw_t, \hat{\bx}_\theta, \bx; s, t)
\coloneq
\KL{q(\bw_s\mid \bw_t,\bx)}{q(\bw_s\mid \bw_t,\hat{\bx}_\theta)}.
\label{eq:direct_w_kl_app}
\end{equation}
This objective is the most direct one, but is generally intractable because
$q(\bw_s\mid \bw_t,\bx)$
is a Dirichlet mixture.
The tractable surrogates introduced below differ in two orthogonal ways:
first, whether they match only the discrete reverse state $\bz_s$ or the joint state $(\bz_s,\bw_s)$,
and second, whether they condition directly on $\bw_t$ or first decode $\widetilde{\bz}_t\sim q(\widetilde{\bz}_t\mid \bw_t)$ and average.

\subsection{A broader surrogate family}
\label{app:surrogate_family}

The simplest tractable surrogate matches only the lifted reverse posterior over $\bz_s$:
\begin{equation}
\Loss{z_s\mid w_t}(\bw_t, \hat{\bx}_\theta, \bx; s, t)
\coloneq
\KL{q(\bz_s\mid \bw_t,\bx)}{q(\bz_s\mid \bw_t,\hat{\bx}_\theta)}.
\label{eq:tight_cat_kl_app}
\end{equation}
This objective preserves the full relaxed conditioning information in $\bw_t$, but discards the relaxed target $\bw_s$.

The objective used in the main text instead decodes $\widetilde{\bz}_t$ from $\bw_t$ and averages the standard discrete reverse KL:
\begin{equation}
\bLoss{z_s\mid z_t,w_t}(\bw_t, \hat{\bx}_\theta, \bx; s, t)
\coloneq
\EE{q(\widetilde{\bz}_t\mid \bw_t)}{
\KL{q(\bz_s\mid \widetilde{\bz}_t,\bx)}{q(\bz_s\mid \widetilde{\bz}_t,\hat{\bx}_\theta)}
}.
\label{eq:loose_cat_kl_app}
\end{equation}
Compared with \eqref{eq:tight_cat_kl_app}, this objective is looser because $\bw_t$ influences the reverse matching step only through the decoded categorical latent $\widetilde{\bz}_t$.
Its advantage is that it remains close to the standard discrete-diffusion objective and admits the non-degenerate continuous-time limit derived in the main text.

A richer alternative is to match the joint bridge over $(\bz_s,\bw_s)$ directly under the relaxed conditioning state:
\begin{equation}
\Loss{z_s,w_s\mid w_t}(\bw_t, \hat{\bx}_\theta, \bx; s, t)
\coloneq
\KL{
q(\bz_s,\bw_s\mid \bw_t,\bx)
}{
q(\bz_s,\bw_s\mid \bw_t,\hat{\bx}_\theta)
}.
\label{eq:tight_joint_kl_app}
\end{equation}
This objective is richer than the discrete surrogates because it also matches the relaxed bridge at time $s$.

Finally, one may combine the decoded conditioning of \eqref{eq:loose_cat_kl_app} with the joint target of \eqref{eq:tight_joint_kl_app}:
\begin{equation}
\bLoss{z_s,w_s\mid z_t,w_t}(\bw_t, \hat{\bx}_\theta, \bx; s, t)
\coloneq
\EE{q(\widetilde{\bz}_t\mid \bw_t)}{
\KL{
q(\bz_s,\bw_s\mid \widetilde{\bz}_t,\bw_t,\bx)
}{
q(\bz_s,\bw_s\mid \widetilde{\bz}_t,\bw_t,\hat{\bx}_\theta)
}
}.
\label{eq:loose_joint_kl_app}
\end{equation}
This is the richest tractable surrogate in the family: it keeps the relaxed target $\bw_s$ while also conditioning through the decoded latent $\widetilde{\bz}_t$.

The loose joint objective admits a useful chain-rule decomposition.
It shows that the main-text objective $\bLoss{z_s\mid z_t,w_t}$ is precisely the categorical part of the loose joint bridge.

\begin{proposition}
\label{prop:joint_kl_decomposition_app}
The tight and loose joint objectives admit the decompositions
\begin{align}
\Loss{z_s,w_s\mid w_t}(\bw_t, \hat{\bx}_\theta, \bx; s, t)
&=
\Loss{z_s\mid w_t}(\bw_t, \hat{\bx}_\theta, \bx; s, t)
+
\bLoss{w_s\mid z_s,w_t}(\bw_t, \hat{\bx}_\theta, \bx; s, t),
\label{eq:tight_joint_kl_decomposition_app}
\\
\bLoss{z_s,w_s\mid z_t,w_t}(\bw_t, \hat{\bx}_\theta, \bx; s, t)
&=
\bLoss{z_s\mid z_t,w_t}(\bw_t, \hat{\bx}_\theta, \bx; s, t)
+
\bLoss{w_s\mid z_s,w_t}(\bw_t, \hat{\bx}_\theta, \bx; s, t),
\label{eq:loose_joint_kl_decomposition_app}
\end{align}
where
\begin{equation}
\bLoss{w_s\mid z_s,w_t}(\bw_t, \hat{\bx}_\theta, \bx; s, t)
\coloneq
\EE{q(\bz_s\mid \bw_t,\bx)}{
\KL{
q(\bw_s\mid \bz_s,\bx)
}{
q(\bw_s\mid \bz_s,\hat{\bx}_\theta)
}
}.
\label{eq:loose_simplex_term_app}
\end{equation}
\end{proposition}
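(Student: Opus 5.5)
The decomposition follows from the chain rule for KL divergence applied to the joint reverse bridges over $(\bz_s,\bw_s)$. The structural fact driving everything is the factorization \eqref{eq:joint_graphical_model}: given $(\bz_s,\bx)$, the variable $\bw_s$ is conditionally independent of $\bz_t$, $\widetilde{\bz}_t$, and $\bw_t$. This independence was already used in items 4 and 5 of \Cref{prop:dir_cat_hierarchy_full}. It gives
\[
q(\bz_s,\bw_s\mid \bw_t,\bx)=q(\bz_s\mid \bw_t,\bx)\,q(\bw_s\mid \bz_s,\bx),\qquad
q(\bz_s,\bw_s\mid \widetilde{\bz}_t,\bw_t,\bx)=q(\bz_s\mid \widetilde{\bz}_t,\bx)\,q(\bw_s\mid \bz_s,\bx).
\]
The second identity also uses that $\bz_s$ is independent of $\bw_t$ given $(\widetilde{\bz}_t,\bx)$. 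I would read the model-side distributions as the same factorizations with $\bx$ replaced by $\hat{\bx}_\theta$, i.e.\ with $\bp_s,\bp_t$ replaced by $\hat{\bp}_s,\hat{\bp}_t$. The conditional $q(\bw_s\mid\bz_s,\hat{\bx}_\theta)=\Dir(\bw_s;\eta_s\hat{\bp}_s+\bz_s)$ is defined directly by \eqref{eq:dir_bridge}.

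For the tight decomposition \eqref{eq:tight_joint_kl_decomposition_app}, apply the standard chain rule $\KL{p(a,b)}{r(a,b)}=\KL{p(a)}{r(a)}+\mathbb{E}_{p(a)}\bigl[\KL{p(b\mid a)}{r(b\mid a)}\bigr]$ with $a=\bz_s$ and $b=\bw_s$. The first term is by definition $\Loss{z_s\mid w_t}$. The second term is an expectation under $q(\bz_s\mid\bw_t,\bx)$ of $\KL{q(\bw_s\mid \bz_s,\bx)}{q(\bw_s\mid \bz_s,\hat{\bx}_\theta)}$, which is exactly $\bLoss{w_s\mid z_s,w_t}$ in \eqref{eq:loose_simplex_term_app}.

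For the loose decomposition \eqref{eq:loose_joint_kl_decomposition_app}, I would first apply the same chain rule inside the outer expectation, for each fixed $\widetilde{\bz}_t=\be_j$. This yields
\[
\KL{q(\bz_s\mid\be_j,\bx)}{q(\bz_s\mid\be_j,\hat{\bx}_\theta)} \;+\; \sum_k r_{s\mid t,k}(\bx,\be_j)\,D_k,
\qquad D_k\coloneq\KL{\Dir(\eta_s\bp_s+\ek)}{\Dir(\eta_s\hat{\bp}_s+\ek)}.
\]
Averaging the first term over $w_{t,j}$ gives $\bLoss{z_s\mid z_t,w_t}$ by definition \eqref{eq:rdb_objective}. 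For the second term, exchange the sums and use $\sum_j w_{t,j}r_{s\mid t,k}(\bx,\be_j)=\rho_{s\mid t,k}(\bx,\bw_t)$, established in the proof of item 3 of \Cref{prop:dir_cat_hierarchy_full}. This gives $\sum_k\rho_{s\mid t,k}(\bx,\bw_t)D_k$, which again equals $\bLoss{w_s\mid z_s,w_t}$ because the inner KL depends on the conditioning only through $\bz_s$. The $\bw_s$-part is therefore identical in the tight and loose cases.

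The main obstacle is the step of justifying the conditional independences and keeping the model-side conditionals consistent, not the KL algebra. In particular, I must check that conditioning additionally on $\bw_t$ (alongside $\widetilde{\bz}_t$) does not alter $q(\bz_s\mid\widetilde{\bz}_t,\cdot)$, and that the model bridge uses the same factorization. I would verify this directly from \eqref{eq:joint_graphical_model_app}: $\bw_t$ depends only on $(\bz_t,\bx)$, so given $(\bz_t,\bx)$ it carries no further information about $(\bz_s,\bw_s)$. Here the auxiliary $\widetilde{\bz}_t$ plays the role of $\bz_t$, exactly as in the proof of \Cref{prop:rao_blackwellized_objective}. Finiteness of the $D_k$ follows because all concentrations are strictly positive for $s>0$.
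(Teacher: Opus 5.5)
Your proposal is correct and follows essentially the same route as the paper. You factorize both joint bridges using the graphical-model conditional independences, apply the KL chain rule (inside the $\widetilde{\bz}_t$-expectation in the loose case), and collapse the nested expectation to one under $q(\bz_s\mid\bw_t,\bx)$; your explicit use of $\sum_j w_{t,j}\, r_{s\mid t,k}(\bx,\be_j)=\rho_{s\mid t,k}(\bx,\bw_t)$ is just the coordinate form of the paper's law-of-total-expectation step.
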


\begin{proof}
We first derive the tight decomposition \eqref{eq:tight_joint_kl_decomposition_app}.

By the joint graphical model, we have the conditional independences
\[
\bw_s \perp\!\!\!\perp \bw_t \mid (\bz_s,\bx),
\qquad
q(\bz_s,\bw_s\mid \bw_t,\bx)
=
q(\bz_s\mid \bw_t,\bx)\,
q(\bw_s\mid \bz_s,\bx),
\]
and similarly on the model side,
\[
q(\bz_s,\bw_s\mid \bw_t,\hat{\bx}_\theta)
=
q(\bz_s\mid \bw_t,\hat{\bx}_\theta)\,
q(\bw_s\mid \bz_s,\hat{\bx}_\theta).
\]
Substituting these factorizations into
$
\Loss{z_s,w_s\mid w_t}
$
gives
\begin{align}
\Loss{z_s,w_s\mid w_t}
&=
\KL{
q(\bz_s\mid \bw_t,\bx)\,
q(\bw_s\mid \bz_s,\bx)
}{
q(\bz_s\mid \bw_t,\hat{\bx}_\theta)\,
q(\bw_s\mid \bz_s,\hat{\bx}_\theta)
}.
\label{eq:tight_joint_chain_start_app}
\end{align}
Applying the chain rule of KL yields
\begin{align}
\Loss{z_s,w_s\mid w_t}
&=
\KL{
q(\bz_s\mid \bw_t,\bx)
}{
q(\bz_s\mid \bw_t,\hat{\bx}_\theta)
}
\notag\\
&\quad+
\EE{q(\bz_s\mid \bw_t,\bx)}{
\KL{
q(\bw_s\mid \bz_s,\bx)
}{
q(\bw_s\mid \bz_s,\hat{\bx}_\theta)
}
}.
\label{eq:tight_joint_chain_split_app}
\end{align}
The first term is exactly
$
\Loss{z_s\mid w_t}
$,
while the second term is
$
\bLoss{w_s\mid z_s,w_t}
$
by definition.
This proves \eqref{eq:tight_joint_kl_decomposition_app}.

We next derive the loose decomposition \eqref{eq:loose_joint_kl_decomposition_app}.

The same graphical model implies the conditional independences
\[
\bz_s \perp\!\!\!\perp \bw_t \mid (\widetilde{\bz}_t,\bx),
\qquad
\bw_s \perp\!\!\!\perp (\widetilde{\bz}_t,\bw_t) \mid (\bz_s,\bx),
\]
and therefore
\begin{align}
q(\bz_s,\bw_s\mid \widetilde{\bz}_t,\bw_t,\bx)
&=
q(\bz_s\mid \widetilde{\bz}_t,\bx)\,
q(\bw_s\mid \bz_s,\bx),
\label{eq:loose_joint_factorization_true_app}
\\
q(\bz_s,\bw_s\mid \widetilde{\bz}_t,\bw_t,\hat{\bx}_\theta)
&=
q(\bz_s\mid \widetilde{\bz}_t,\hat{\bx}_\theta)\,
q(\bw_s\mid \bz_s,\hat{\bx}_\theta).
\label{eq:loose_joint_factorization_model_app}
\end{align}
Substituting \eqref{eq:loose_joint_factorization_true_app} and
\eqref{eq:loose_joint_factorization_model_app} into
$
\bLoss{z_s,w_s\mid z_t,w_t}
$
gives
\begin{align}
\bLoss{z_s,w_s\mid z_t,w_t}
&=
\EE{q(\widetilde{\bz}_t\mid \bw_t)}{
\KL{
q(\bz_s\mid \widetilde{\bz}_t,\bx)\,
q(\bw_s\mid \bz_s,\bx)
}{
q(\bz_s\mid \widetilde{\bz}_t,\hat{\bx}_\theta)\,
q(\bw_s\mid \bz_s,\hat{\bx}_\theta)
}
}.
\label{eq:loose_joint_chain_start_app}
\end{align}
Applying the chain rule of KL inside the expectation yields
\begin{align}
\bLoss{z_s,w_s\mid z_t,w_t}
&=
\EE{q(\widetilde{\bz}_t\mid \bw_t)}{
\KL{
q(\bz_s\mid \widetilde{\bz}_t,\bx)
}{
q(\bz_s\mid \widetilde{\bz}_t,\hat{\bx}_\theta)
}
}
\notag\\
&\quad+
\EE{q(\widetilde{\bz}_t\mid \bw_t)}{
\EE{q(\bz_s\mid \widetilde{\bz}_t,\bx)}{
\KL{
q(\bw_s\mid \bz_s,\bx)
}{
q(\bw_s\mid \bz_s,\hat{\bx}_\theta)
}
}
}.
\label{eq:loose_joint_chain_split_app}
\end{align}
The first term is exactly
$
\bLoss{z_s\mid z_t,w_t}
$.
For the second term, apply the law of total expectation:
\[
\EE{q(\widetilde{\bz}_t\mid \bw_t)}{
\EE{q(\bz_s\mid \widetilde{\bz}_t,\bx)}{[\cdot]}
}
=
\EE{q(\bz_s\mid \bw_t,\bx)}{[\cdot]}.
\]
Hence
\begin{align}
\bLoss{z_s,w_s\mid z_t,w_t}
&=
\bLoss{z_s\mid z_t,w_t}
\notag\\
&\quad+
\EE{q(\bz_s\mid \bw_t,\bx)}{
\KL{
q(\bw_s\mid \bz_s,\bx)
}{
q(\bw_s\mid \bz_s,\hat{\bx}_\theta)
}
},
\end{align}
which is exactly \eqref{eq:loose_joint_kl_decomposition_app}.
\end{proof}

The decomposition in \eqref{eq:loose_joint_kl_decomposition_app} clarifies the role of the selected main-text objective.
The relaxed discrete bridge keeps the categorical part of the joint bridge while discarding the additional simplex-matching term.
This is exactly the simplification that later makes the continuous-time limit non-degenerate.

\subsection{Auxiliary KL inequalities}
\label{app:aux_kl_inequalities}

We next record three standard KL inequalities that will be used to compare the surrogate objectives.

\begin{lemma}[Data processing inequality for KL divergence]
\label{lem:kl_dpi_app}
Let $P$ and $Q$ be two probability distributions on a space $\mathcal{X}$, and let
$\mathcal{K}(\by\mid \bx)$
be a stochastic kernel from $\mathcal{X}$ to $\mathcal{Y}$.
Define the pushforward distributions
\begin{equation}
(P\mathcal{K})(\by)
\coloneq
\EE{P(\bx)}{
\mathcal{K}(\by\mid \bx)
},
\qquad
(Q\mathcal{K})(\by)
\coloneq
\EE{Q(\bx)}{
\mathcal{K}(\by\mid \bx)
}.
\label{eq:pushforward_kernel_app}
\end{equation}
Then
\begin{equation}
\KL{P\mathcal{K}}{Q\mathcal{K}}
\le
\KL{P}{Q}.
\label{eq:kl_dpi_app}
\end{equation}
\end{lemma}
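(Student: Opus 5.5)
The plan is to prove the data processing inequality through the chain rule for KL divergence applied to two joint distributions on $\mathcal{X}\times\mathcal{Y}$ that share the kernel $\mathcal{K}$. Define the couplings $P(\bx)\,\mathcal{K}(\by\mid\bx)$ and $Q(\bx)\,\mathcal{K}(\by\mid\bx)$. Their $\bx$-marginals are $P$ and $Q$, and their $\by$-marginals are the pushforwards $P\mathcal{K}$ and $Q\mathcal{K}$ of \eqref{eq:pushforward_kernel_app}. If $\KL{P}{Q}=\infty$ there is nothing to prove, so we may assume $P\ll Q$. Then $P\mathcal{K}\ll Q\mathcal{K}$, and the joint of $P$ is absolutely continuous with respect to that of $Q$, so all densities below exist.

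\textbf{Step 1: factor the joint KL through $\bx$.} Applying the chain rule of KL (the one already used in the proof of \Cref{prop:joint_kl_decomposition_app}) to the factorization $\bx\to\by$ gives
\[
\KL{P\mathcal{K}_{\mathrm{joint}}}{Q\mathcal{K}_{\mathrm{joint}}}
=\KL{P}{Q}+\EE{P(\bx)}{\KL{\mathcal{K}(\cdot\mid\bx)}{\mathcal{K}(\cdot\mid\bx)}}=\KL{P}{Q}.
\]
The second term vanishes because both joints use the same kernel.

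\textbf{Step 2: factor the joint KL through $\by$.} The same joints can instead be written as $(P\mathcal{K})(\by)\,P(\bx\mid\by)$ and $(Q\mathcal{K})(\by)\,Q(\bx\mid\by)$, where the conditionals are the Bayes posteriors (regular conditional distributions in general spaces). The chain rule then gives the joint KL as
\[
\KL{P\mathcal{K}}{Q\mathcal{K}}+\EE{(P\mathcal{K})(\by)}{\KL{P(\cdot\mid\by)}{Q(\cdot\mid\by)}}.
\]
The second term is nonnegative by Gibbs' inequality. Comparing this with Step 1 yields \eqref{eq:kl_dpi_app}.

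\textbf{Main obstacle.} The only delicate step is justifying the reverse factorization in Step 2 outside the finite or density setting, that is, the existence of posterior kernels and the validity of the chain rule when KL values may be infinite. In the settings used in this paper everything is either categorical or has Dirichlet densities on the simplex. There I would simply write out the sums and integrals with explicit densities and use Bayes' rule directly.

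A fallback that avoids posteriors altogether is to use joint convexity of KL, or the log-sum inequality, applied pointwise in $\by$:
\[
(P\mathcal{K})(\by)\log\frac{(P\mathcal{K})(\by)}{(Q\mathcal{K})(\by)}\le\int P(\bx)\,\mathcal{K}(\by\mid\bx)\log\frac{P(\bx)}{Q(\bx)}\,d\bx .
\]
Integrating over $\by$ and using $\int\mathcal{K}(\by\mid\bx)\,d\by=1$ gives the same bound.
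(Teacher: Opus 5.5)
Your argument is correct, but it takes a genuinely different route from the paper. The paper gives no argument of its own: it simply invokes Theorem~4.1 of \citet{kullback1951information} ``with $T$ taken to be the stochastic kernel $\mathcal{K}$.'' You instead derive the inequality from the two factorizations of the shared-kernel joints $P(\bx)\mathcal{K}(\by\mid\bx)$ and $Q(\bx)\mathcal{K}(\by\mid\bx)$. Factoring through $\bx$ gives a joint KL of exactly $\KL{P}{Q}$, and factoring through $\by$ gives $\KL{P\mathcal{K}}{Q\mathcal{K}}$ plus a nonnegative posterior term.

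Your route buys self-containedness, since it uses only the chain rule the paper already relies on and Gibbs' inequality. It also gives an exact formula for the gap, $\KL{P}{Q}-\KL{P\mathcal{K}}{Q\mathcal{K}}=\EE{(P\mathcal{K})(\by)}{\KL{P(\cdot\mid\by)}{Q(\cdot\mid\by)}}$ when $\KL{P}{Q}<\infty$, and hence the equality case. It also supplies a step the citation glosses over. The classical Kullback--Leibler theorem is formulated for a measurable statistic, i.e.\ a deterministic map. Your Step~1 is precisely the reduction that makes ``take $T=\mathcal{K}$'' legitimate: $P\mathcal{K}$ is the image of the joint under the deterministic projection $(\bx,\by)\mapsto\by$, and the joint has the same KL as $P$. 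The citation, in turn, buys brevity and the classical characterization of equality via sufficiency.

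Your stated obstacle does not arise in this paper, because finite sets and the simplex are standard Borel, so posterior kernels exist. In full generality your fallback already avoids it: $\frac{d(P\mathcal{K})}{d(Q\mathcal{K})}(\by)$ is the conditional expectation of $\frac{dP}{dQ}(\bx)$ given $\by$ under $Q(\bx)\mathcal{K}(\by\mid\bx)$. Conditional Jensen for $t\log t$ then gives the bound without any disintegration.
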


\begin{proof}
This follows directly from Theorem 4.1 of \citet{kullback1951information} by taking $T$ to be the stochastic kernel $\mathcal{K}$.
\end{proof}

A particularly important special case is marginalization.

\begin{corollary}[Marginalization cannot increase KL]
\label{cor:kl_marginal_app}
Let $P_{U,V}$ and $Q_{U,V}$ be two joint distributions, with marginals $P_U$ and $Q_U$.
Then
\begin{equation}
\KL{P_U}{Q_U}
\le
\KL{P_{U,V}}{Q_{U,V}}.
\label{eq:kl_marginal_app}
\end{equation}
\end{corollary}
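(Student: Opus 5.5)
The statement to prove is Corollary~\ref{cor:kl_marginal_app}: marginalization cannot increase KL. I will obtain it as a direct specialization of \Cref{lem:kl_dpi_app}, taking the stochastic kernel to be the deterministic coordinate projection. Concretely, set $\mathcal{X}=\mathcal{U}\times\mathcal{V}$ and $\mathcal{Y}=\mathcal{U}$, and define
\[
\mathcal{K}(\cdot\mid(u,v))=\delta_u .
\]
This is a valid Markov kernel: for each $(u,v)$ it is a probability measure, and it is measurable in $(u,v)$ because the projection is measurable.

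The first step is to identify the pushforwards in \eqref{eq:pushforward_kernel_app}. For any measurable $A\subseteq\mathcal{U}$,
\[
(P_{U,V}\mathcal{K})(A)=\EE{P_{U,V}}{\mathbf{1}[U\in A]}=P_U(A),
\]
so $P_{U,V}\mathcal{K}=P_U$. The same computation gives $Q_{U,V}\mathcal{K}=Q_U$. In the discrete setting relevant to the paper, this is simply summing the joint probability mass function over $v$.

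The second step is to invoke \eqref{eq:kl_dpi_app} with $P=P_{U,V}$ and $Q=Q_{U,V}$. This yields
\[
\KL{P_U}{Q_U}\le\KL{P_{U,V}}{Q_{U,V}},
\]
which is exactly \eqref{eq:kl_marginal_app}.

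Two edge cases need care.
\begin{itemize}
\item If $\KL{P_{U,V}}{Q_{U,V}}=\infty$, the inequality is trivial.
\item If the joint KL is finite, then $P_{U,V}\ll Q_{U,V}$. Marginalization preserves absolute continuity, since $Q_U(A)=0$ implies $Q_{U,V}(A\times\mathcal{V})=0$, hence $P_U(A)=0$. So the left-hand side is well defined.
\end{itemize}

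As an independent check, I would also give the chain-rule argument, which parallels how the chain rule of KL was used in the proof of \Cref{prop:joint_kl_decomposition_app}:
\[
\KL{P_{U,V}}{Q_{U,V}}=\KL{P_U}{Q_U}+\EE{P_U}{\KL{P_{V\mid U}}{Q_{V\mid U}}}.
\]
The expectation term is nonnegative by Gibbs' inequality, so the marginal KL is bounded by the joint KL.

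The only step needing care is measure-theoretic. For general spaces one must verify that the projection kernel fits the hypotheses of \Cref{lem:kl_dpi_app}, and in the chain-rule route that regular conditional distributions exist. All variables in the paper are categorical or simplex-valued, hence Polish, so both conditions hold. Since the conclusion is obtained through \Cref{lem:kl_dpi_app}, the chain-rule route is only a consistency check and its regularity conditions are not needed for the proof.
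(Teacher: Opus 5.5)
Your proposal is correct and follows the paper's proof, which applies the data processing inequality (\Cref{lem:kl_dpi_app}) with the projection kernel $(u,v)\mapsto u$. The pushforward identification, the absolute-continuity edge case, and the chain-rule consistency check are all valid; the paper leaves them implicit.
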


\begin{proof}
Take $\mathcal{K}$ to be the projection kernel $(u,v)\mapsto u$ in \Cref{lem:kl_dpi_app}.
\end{proof}

\begin{lemma}[Joint convexity of KL divergence]
\label{lem:kl_joint_convexity_app}
Let $\lambda_i \ge 0$ with $\sum_{i=1}^m \lambda_i = 1$, and let $P_i,Q_i$ be probability distributions on a common space.
Then
\begin{equation}
\KL{\sum_{i=1}^m \lambda_i P_i}{\sum_{i=1}^m \lambda_i Q_i}
\le
\sum_{i=1}^m \lambda_i \KL{P_i}{Q_i}.
\label{eq:kl_joint_convexity_app}
\end{equation}
\end{lemma}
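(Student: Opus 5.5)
The plan is to reduce joint convexity to the marginalization inequality of \Cref{cor:kl_marginal_app}, which is already available. I would realize the mixtures as marginals of joint distributions that carry an explicit mixture index. The joint KL then splits exactly into the weighted sum of component KLs, and forgetting the index can only decrease KL.

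Concretely, let $I$ be an index variable on $\{1,\dots,m\}$. Define joint laws on $\{1,\dots,m\}\times\mathcal{X}$ by
\[
P_{I,X}(i,\mathrm{d}x)\coloneq \lambda_i P_i(\mathrm{d}x),
\qquad
Q_{I,X}(i,\mathrm{d}x)\coloneq \lambda_i Q_i(\mathrm{d}x).
\]
Their $X$-marginals are exactly $\sum_i\lambda_i P_i$ and $\sum_i\lambda_i Q_i$. I would first discard indices with $\lambda_i=0$, since they contribute nothing to either side. If some $\KL{P_i}{Q_i}=\infty$ with $\lambda_i>0$, the right-hand side is $+\infty$ and there is nothing to prove. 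Otherwise $P_i\ll Q_i$ for every remaining $i$, hence $P_{I,X}\ll Q_{I,X}$, with density
\[
\frac{\mathrm{d}P_{I,X}}{\mathrm{d}Q_{I,X}}(i,x)=\frac{\lambda_i}{\lambda_i}\frac{\mathrm{d}P_i}{\mathrm{d}Q_i}(x)=\frac{\mathrm{d}P_i}{\mathrm{d}Q_i}(x).
\]

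Next I would compute the joint divergence directly from this density:
\[
\KL{P_{I,X}}{Q_{I,X}}=\sum_i\lambda_i\,\mathbb{E}_{P_i}\!\left[\log\tfrac{\mathrm{d}P_i}{\mathrm{d}Q_i}\right]=\sum_i\lambda_i\KL{P_i}{Q_i}.
\]
This is the chain rule for KL, and the $I$-marginal term vanishes because both joints share the index law $\lambda$. Applying \Cref{cor:kl_marginal_app} with $U=X$ and $V=I$ bounds the KL of the $X$-marginals by this quantity, which is exactly \eqref{eq:kl_joint_convexity_app}.

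There is no substantive obstacle; the only care needed is the absolute-continuity and infinite-value bookkeeping handled above. For the purely discrete setting used in this paper, I would instead give a one-line alternative: apply the log-sum inequality pointwise in $x$,
\[
\Bigl(\sum_i\lambda_i P_i(x)\Bigr)\log\frac{\sum_i\lambda_iP_i(x)}{\sum_i\lambda_iQ_i(x)}\le\sum_i\lambda_iP_i(x)\log\frac{P_i(x)}{Q_i(x)},
\]
and then sum over $x$.
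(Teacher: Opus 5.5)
Your argument is correct. It takes a different route from the paper, which gives no derivation and simply cites joint convexity of relative entropy (Theorem 2.7.2 of Cover and Thomas). The textbook proof behind that citation is exactly the log-sum inequality argument you give as your alternative, so that part coincides with the paper's source.

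Your main route is genuinely different. You realize the two mixtures as $X$-marginals of the joint laws $\lambda_i P_i(\mathrm{d}x)$ and $\lambda_i Q_i(\mathrm{d}x)$, which share the index law. You compute the joint KL exactly as $\sum_i\lambda_i\KL{P_i}{Q_i}$ and then invoke \Cref{cor:kl_marginal_app}. This makes the lemma a consequence of \Cref{lem:kl_dpi_app} rather than a separate citation. There is no circularity, because the paper takes the data processing inequality directly from Kullback--Leibler, not from joint convexity.

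What your route buys is generality. It works on arbitrary measurable spaces and arbitrary $m$, with the absolute-continuity and $+\infty$ bookkeeping you handled. Cover--Thomas state the result for pairs of probability mass functions. That generality is actually needed, so your remark about a ``purely discrete setting used in this paper'' is inaccurate. In \Cref{prop:bound_app}, the lemma is applied to mixtures over $\widetilde{\bz}_t$ of joint laws of $(\bz_s,\bw_s)$, where $\bw_s$ is a continuous simplex-valued variable with Dirichlet conditionals.

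The log-sum route still goes through in that case: apply it pointwise to densities with respect to a common dominating measure such as $\sum_i\lambda_i(P_i+Q_i)$, then integrate. So both of your arguments are valid. The log-sum one is more elementary and self-contained, while yours reuses machinery the appendix already contains.
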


\begin{proof}
This follows from the joint convexity of relative entropy; see Theorem 2.7.2 in \citet{cover2006elements}.
\end{proof}

\subsection{Relations among the surrogate objectives}
\label{app:objective_relations}

The valid relations among the surrogate objectives follow from marginalization
and joint convexity of KL divergence.
Marginalizing either $\bz_s$ or $\bw_s$ from a joint bridge gives the
corresponding categorical or simplex bound.
In addition, averaging over the auxiliary decoded state
$\widetilde{\bz}_t\sim q(\widetilde{\bz}_t\mid\bw_t)$
gives the bounds from the direct categorical and direct joint objectives to
their decoded counterparts.

To state the decoded simplex bound, define
\begin{equation}
\bLoss{w_s\mid z_t,w_t}
(\bw_t,\hat{\bx}_\theta,\bx; s,t)
\coloneq
\EE{q(\widetilde{\bz}_t\mid\bw_t)}{
\KL{
q(\bw_s\mid\widetilde{\bz}_t,\bw_t,\bx)
}{
q(\bw_s\mid\widetilde{\bz}_t,\bw_t,\hat{\bx}_\theta)
}
}.
\label{eq:decoded_simplex_objective_app}
\end{equation}

\begin{proposition}
\label{prop:bound_app}
For all
$\bw_t \in \simplex^{K-1}$,
$\hat{\bx}_\theta \in \simplex^{K-1}$,
and
$\bx \in \Vocab$,
\begin{equation}
\begin{aligned}
\Loss{z_s\mid w_t}
&\le
\Loss{z_s,w_s\mid w_t},
\\
\Loss{w_s\mid w_t}
&\le
\Loss{z_s,w_s\mid w_t},
\\
\Loss{z_s,w_s\mid w_t}
&\le
\bLoss{z_s,w_s\mid z_t,w_t}.
\end{aligned}
\label{eq:bound_chain_w_app}
\end{equation}
Moreover,
\begin{equation}
\begin{aligned}
\Loss{z_s\mid w_t}
&\le
\bLoss{z_s\mid z_t,w_t}
\le
\bLoss{z_s,w_s\mid z_t,w_t},
\\
\bLoss{w_s\mid z_t,w_t}
&\le
\bLoss{z_s,w_s\mid z_t,w_t}.
\end{aligned}
\label{eq:bound_chain_z_app}
\end{equation}
\end{proposition}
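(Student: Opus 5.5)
The plan is to derive every inequality from the factorizations of \Cref{prop:dir_cat_hierarchy_full} together with the two KL tools just recorded. Marginalization (\Cref{cor:kl_marginal_app}) handles the bounds that drop a target variable. Joint convexity (\Cref{lem:kl_joint_convexity_app}) handles the bounds that replace conditioning on $\bw_t$ by averaging over the decoded $\widetilde{\bz}_t$.

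Throughout, I will interpret each model-side conditional $q(\cdot\mid\cdot,\hat{\bx}_\theta)$ as the same hierarchy with $\bx$ replaced by $\hat{\bx}_\theta\in\simplex^{K-1}$. The first thing to check is that every identity of \Cref{prop:dir_cat_hierarchy_full} remains valid under this substitution. The proofs only used \Cref{cor:dir_weighted_sum}, which needs $\hat{\bp}_t$ and $\hat{\bp}_s$ to be strictly positive. This holds because $\alpha_t<1$ for $t>0$ and $\pi_k>0$. In particular, the decoder $q(\widetilde{\bz}_t=\be_j\mid\bw_t,\hat{\bx}_\theta)=w_{t,j}$ is the same on both sides.

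\textbf{Step 1: marginalization bounds.} For $\Loss{z_s\mid w_t}\le\Loss{z_s,w_s\mid w_t}$ and $\Loss{w_s\mid w_t}\le\Loss{z_s,w_s\mid w_t}$, I apply \Cref{cor:kl_marginal_app} to the joint laws $q(\bz_s,\bw_s\mid\bw_t,\cdot)$, projecting onto $\bz_s$ and onto $\bw_s$ respectively. The inequalities $\bLoss{z_s\mid z_t,w_t}\le\bLoss{z_s,w_s\mid z_t,w_t}$ and $\bLoss{w_s\mid z_t,w_t}\le\bLoss{z_s,w_s\mid z_t,w_t}$ follow in the same way, applying the corollary pointwise in $\widetilde{\bz}_t=\be_j$ and then averaging with weights $w_{t,j}$. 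The first of these can alternatively be read off from \eqref{eq:loose_joint_kl_decomposition_app}, since the simplex term there is a nonnegative average of KL divergences.

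\textbf{Step 2: convexity bounds.} For $\Loss{z_s\mid w_t}\le\bLoss{z_s\mid z_t,w_t}$, I use the mixture representation \eqref{eq:zs_given_wt_mixture_app}:
\begin{equation*}
q(\bz_s\mid\bw_t,\bx)=\sum_j w_{t,j}\,q(\bz_s\mid\widetilde{\bz}_t=\be_j,\bx),
\end{equation*}
together with its analogue with $\hat{\bx}_\theta$, which carries the same weights $w_{t,j}$. Then \Cref{lem:kl_joint_convexity_app} with $\lambda_j=w_{t,j}$ gives exactly the right-hand side \eqref{eq:loose_cat_kl_app}.

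For $\Loss{z_s,w_s\mid w_t}\le\bLoss{z_s,w_s\mid z_t,w_t}$ I argue identically, using
\begin{equation*}
q(\bz_s,\bw_s\mid\bw_t,\bx)=\sum_j w_{t,j}\,q(\bz_s,\bw_s\mid\widetilde{\bz}_t=\be_j,\bw_t,\bx).
\end{equation*}
This identity follows from the law of total probability over $\widetilde{\bz}_t$ and item 2 of \Cref{prop:dir_cat_hierarchy_full}. The corresponding model-side identity holds with the same weights.

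\textbf{Main obstacle.} The delicate step is justifying these mixture identities on the model side with weights that do not depend on the model. This rests entirely on the decoder $\Cat(\bw_t)$ being free of $\bx$ (item 2), which I would verify explicitly for $\hat{\bx}_\theta$ as above. I also need the conditional independences from the graphical model \eqref{eq:joint_graphical_model_app}. With $\widetilde{\bz}_t$ playing the role of $\bz_t$, these say that $\bz_s$ is independent of $\bw_t$ given $(\widetilde{\bz}_t,\bx)$, and that $\bw_s$ depends only on $(\bz_s,\bx)$. They ensure that the components $q(\cdot\mid\widetilde{\bz}_t=\be_j,\bw_t,\cdot)$ are exactly those appearing in \eqref{eq:loose_joint_factorization_true_app} and \eqref{eq:loose_joint_factorization_model_app}. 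Once these identities are in place, each inequality is a one-line application of the cited lemma.
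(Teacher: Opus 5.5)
Your proposal is correct and follows essentially the same route as the paper. The bounds that drop a target variable ($\bz_s$ or $\bw_s$) come from marginalization, applied pointwise in $\widetilde{\bz}_t$ and then averaged for the decoded objectives, while the two bounds that pass from conditioning on $\bw_t$ to averaging over $\widetilde{\bz}_t$ come from joint convexity with the shared weights $w_{t,j}$. You also check explicitly that the hierarchy identities, in particular the $\hat{\bx}_\theta$-free decoder $\Cat(\bw_t)$, remain valid on the model side because $\hat{\bp}_t$ and $\hat{\bp}_s$ are strictly positive; this fills in what the paper only asserts as ``the same statement holds on the model side.''
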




\begin{proof}
We first prove \eqref{eq:bound_chain_w_app}.

\begin{enumerate}
\item
The distributions
$q(\bz_s\mid\bw_t,\bx)$
and
$q(\bw_s\mid\bw_t,\bx)$
are the corresponding marginals of
$q(\bz_s,\bw_s\mid\bw_t,\bx)$.
The same statement holds on the model side.
Hence, by \Cref{cor:kl_marginal_app},
\begin{align}
\Loss{z_s\mid w_t}(\bw_t,\hat{\bx}_\theta,\bx)
&\le
\KL{
q(\bz_s,\bw_s\mid\bw_t,\bx)
}{
q(\bz_s,\bw_s\mid\bw_t,\hat{\bx}_\theta)
}
\notag\\
&=
\Loss{z_s,w_s\mid w_t}
(\bw_t,\hat{\bx}_\theta,\bx),
\label{eq:proof_direct_z_to_joint_app}
\\
\Loss{w_s\mid w_t}(\bw_t,\hat{\bx}_\theta,\bx)
&\le
\KL{
q(\bz_s,\bw_s\mid\bw_t,\bx)
}{
q(\bz_s,\bw_s\mid\bw_t,\hat{\bx}_\theta)
}
\notag\\
&=
\Loss{z_s,w_s\mid w_t}
(\bw_t,\hat{\bx}_\theta,\bx).
\label{eq:proof_upper_bound_joint_marginal_app}
\end{align}

\item
By marginalizing over $\widetilde{\bz}_t$, we have
\begin{align}
q(\bz_s,\bw_s\mid\bw_t,\bx)
&=
\EE{q(\widetilde{\bz}_t\mid\bw_t)}{
q(\bz_s,\bw_s
\mid
\widetilde{\bz}_t,\bw_t,\bx)
},
\label{eq:proof_upper_bound_joint_mix_true_app}
\\
q(\bz_s,\bw_s\mid\bw_t,\hat{\bx}_\theta)
&=
\EE{q(\widetilde{\bz}_t\mid\bw_t)}{
q(\bz_s,\bw_s
\mid
\widetilde{\bz}_t,\bw_t,\hat{\bx}_\theta)
}.
\label{eq:proof_upper_bound_joint_mix_model_app}
\end{align}
The mixing law is shared by both sides because it is always
$q(\widetilde{\bz}_t\mid\bw_t)$.
Applying \Cref{lem:kl_joint_convexity_app} yields
\begin{align}
\Loss{z_s,w_s\mid w_t}
(\bw_t,\hat{\bx}_\theta,\bx)
&=
\KL{
q(\bz_s,\bw_s\mid\bw_t,\bx)
}{
q(\bz_s,\bw_s\mid\bw_t,\hat{\bx}_\theta)
}
\notag\\
&\le
\EE{q(\widetilde{\bz}_t\mid\bw_t)}{
\KL{
q(\bz_s,\bw_s
\mid
\widetilde{\bz}_t,\bw_t,\bx)
}{
q(\bz_s,\bw_s
\mid
\widetilde{\bz}_t,\bw_t,\hat{\bx}_\theta)
}
}
\notag\\
&=
\bLoss{z_s,w_s\mid z_t,w_t}
(\bw_t,\hat{\bx}_\theta,\bx).
\label{eq:proof_upper_bound_joint_mix_bound_app}
\end{align}
\end{enumerate}

Combining
\eqref{eq:proof_direct_z_to_joint_app},
\eqref{eq:proof_upper_bound_joint_marginal_app},
and
\eqref{eq:proof_upper_bound_joint_mix_bound_app}
proves \eqref{eq:bound_chain_w_app}.

We now prove \eqref{eq:bound_chain_z_app}.

\begin{enumerate}
\item
Using the conditional independence
\[
\bz_s
\perp\!\!\!\perp
\bw_t
\mid
(\widetilde{\bz}_t,\bx),
\]
and marginalizing over $\widetilde{\bz}_t$, we have
\begin{align}
q(\bz_s\mid\bw_t,\bx)
&=
\EE{q(\widetilde{\bz}_t\mid\bw_t)}{
q(\bz_s\mid\widetilde{\bz}_t,\bx)
},
\label{eq:proof_bound_z_mix_true_app}
\\
q(\bz_s\mid\bw_t,\hat{\bx}_\theta)
&=
\EE{q(\widetilde{\bz}_t\mid\bw_t)}{
q(\bz_s
\mid
\widetilde{\bz}_t,\hat{\bx}_\theta)
}.
\label{eq:proof_bound_z_mix_model_app}
\end{align}
Applying \Cref{lem:kl_joint_convexity_app} to these mixtures yields
\begin{equation}
\Loss{z_s\mid w_t}
(\bw_t,\hat{\bx}_\theta,\bx)
\le
\bLoss{z_s\mid z_t,w_t}
(\bw_t,\hat{\bx}_\theta,\bx).
\label{eq:proof_bound_z_first_app}
\end{equation}

\item
For each fixed $\widetilde{\bz}_t$,
$q(\bz_s\mid\widetilde{\bz}_t,\bx)$
and
$q(\bw_s\mid\widetilde{\bz}_t,\bw_t,\bx)$
are the corresponding marginals of
\[
q(\bz_s,\bw_s
\mid
\widetilde{\bz}_t,\bw_t,\bx),
\]
where
\[
q(\bz_s
\mid
\widetilde{\bz}_t,\bw_t,\bx)
=
q(\bz_s
\mid
\widetilde{\bz}_t,\bx).
\]
The same statements hold on the model side.
Therefore, by \Cref{cor:kl_marginal_app},
\begin{align}
&\KL{
q(\bz_s\mid\widetilde{\bz}_t,\bx)
}{
q(\bz_s
\mid
\widetilde{\bz}_t,\hat{\bx}_\theta)
}
\notag\\
&\qquad\le
\KL{
q(\bz_s,\bw_s
\mid
\widetilde{\bz}_t,\bw_t,\bx)
}{
q(\bz_s,\bw_s
\mid
\widetilde{\bz}_t,\bw_t,\hat{\bx}_\theta)
},
\label{eq:proof_bound_z_to_joint_pointwise_app}
\\
&\KL{
q(\bw_s
\mid
\widetilde{\bz}_t,\bw_t,\bx)
}{
q(\bw_s
\mid
\widetilde{\bz}_t,\bw_t,\hat{\bx}_\theta)
}
\notag\\
&\qquad\le
\KL{
q(\bz_s,\bw_s
\mid
\widetilde{\bz}_t,\bw_t,\bx)
}{
q(\bz_s,\bw_s
\mid
\widetilde{\bz}_t,\bw_t,\hat{\bx}_\theta)
}.
\label{eq:proof_bound_w_to_joint_pointwise_app}
\end{align}
Averaging both inequalities over
$q(\widetilde{\bz}_t\mid\bw_t)$
gives
\begin{align}
\bLoss{z_s\mid z_t,w_t}
(\bw_t,\hat{\bx}_\theta,\bx)
&\le
\bLoss{z_s,w_s\mid z_t,w_t}
(\bw_t,\hat{\bx}_\theta,\bx),
\label{eq:proof_bound_z_to_joint_app}
\\
\bLoss{w_s\mid z_t,w_t}
(\bw_t,\hat{\bx}_\theta,\bx)
&\le
\bLoss{z_s,w_s\mid z_t,w_t}
(\bw_t,\hat{\bx}_\theta,\bx).
\label{eq:proof_bound_w_to_joint_app}
\end{align}
\end{enumerate}

Combining
\eqref{eq:proof_bound_z_first_app},
\eqref{eq:proof_bound_z_to_joint_app},
and
\eqref{eq:proof_bound_w_to_joint_app}
proves \eqref{eq:bound_chain_z_app}.
\end{proof}


The joint surrogate contains an additional simplex-matching term. The following
identity and proposition give its closed form.

\begin{lemma}
\label{lem:shifted_dirichlet_log_moment}
For any $j,k\in\{1,\dots,K\}$,
\begin{equation}
\EE{\Dir\!\left(\cdot;\eta_s \bp_s+\be_k\right)}{\log w_j}
=
\psi\!\left(\eta_s p_{s,j}(\bx)+\delta_{j,k}\right)
-
\psi(\eta_s+1).
\label{eq:shifted_dirichlet_log_moment}
\end{equation}
\end{lemma}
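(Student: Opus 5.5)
The plan is to reduce the claim to the classical log-moment formula for the Dirichlet distribution. Write $\balpha \coloneq \eta_s \bp_s + \be_k$; its entries are $\alpha_i = \eta_s p_{s,i}(\bx) + \delta_{i,k}$, all strictly positive because $s>0$ forces $p_{s,i}(\bx)>0$. The total concentration is $\alpha_0 = \sum_i \alpha_i = \eta_s \sum_i p_{s,i}(\bx) + 1 = \eta_s + 1$, since $\bp_s$ lies in $\simplex^{K-1}$. So it suffices to show that for any $\balpha\in\mathbb{R}_{>0}^K$,
\[
\EE{\Dir(\cdot;\balpha)}{\log w_j} = \psi(\alpha_j) - \psi(\alpha_0).
\]
Substituting $\alpha_j = \eta_s p_{s,j}(\bx)+\delta_{j,k}$ and $\alpha_0=\eta_s+1$ then gives \eqref{eq:shifted_dirichlet_log_moment}.

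To keep the argument self-contained, I would prove the general identity by differentiating the normalizer, in the style of the proof of \Cref{lem:dir_shift}. Start from
\[
B(\balpha) = \int_{\simplex^{K-1}} \prod_{i=1}^K w_i^{\alpha_i-1}\,d\bw .
\]
Differentiating under the integral with respect to $\alpha_j$ pulls down a factor $\log w_j$, so
\[
\partial_{\alpha_j} B(\balpha) = B(\balpha)\,\EE{\Dir(\cdot;\balpha)}{\log w_j},
\]
that is, $\EE{\Dir(\cdot;\balpha)}{\log w_j} = \partial_{\alpha_j}\log B(\balpha)$. Using $\log B(\balpha) = \sum_i \log\Gamma(\alpha_i) - \log\Gamma(\alpha_0)$ and $\partial_{\alpha_j}\alpha_0 = 1$, this derivative equals $\psi(\alpha_j)-\psi(\alpha_0)$.

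An alternative that avoids differentiating under the integral is the aggregation property. The marginal of $w_j$ under $\Dir(\balpha)$ is $\mathrm{Beta}(\alpha_j, \alpha_0-\alpha_j)$, and the one-dimensional Beta log-moment is standard. I would also note that this identity could simply be cited as a standard fact.

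The only step requiring any care is justifying the interchange of derivative and integral. On a compact neighbourhood of $\balpha$ in the open orthant, the integrand and its $\alpha_j$-derivative $\log w_j\prod_i w_i^{\alpha_i-1}$ are dominated by an integrable function, because $|\log w_j|\, w_j^{a-1}$ is integrable near $0$ for any $a>0$. Dominated convergence then justifies the interchange. Everything else is bookkeeping of $\delta_{j,k}$ and of $\sum_i p_{s,i}=1$.
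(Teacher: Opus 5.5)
Your proposal is correct and follows the same route as the paper. The paper cites the standard identity $\mathbb{E}_{\Dir(\cdot;\balpha)}[\log w_j]=\psi(\alpha_j)-\psi(\alpha_0)$ with $\balpha=\eta_s\bp_s+\be_k$, then reads off $\alpha_j=\eta_s p_{s,j}(\bx)+\delta_{j,k}$ and $\alpha_0=\eta_s+1$; your derivation of that identity by differentiating $\log B(\balpha)$, justified by dominated convergence, is a valid self-contained addition that the paper omits.
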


\begin{proof}
For a Dirichlet random vector with parameter
$
\balpha=(\alpha_1,\dots,\alpha_K)
$,
the standard identity is
\[
\EE{\Dir(\cdot;\balpha)}{\log w_j}
=
\psi(\alpha_j)-\psi\!\left(\sum_{m=1}^K \alpha_m\right).
\]
Applying this with
\[
\balpha=\eta_s \bp_s+\be_k
\]
gives
\[
\alpha_j=\eta_s p_{s,j}(\bx)+\delta_{j,k},
\qquad
\sum_{m=1}^K \alpha_m
=
\eta_s\sum_{m=1}^K p_{s,m}(\bx)+1
=
\eta_s+1,
\]
which proves \eqref{eq:shifted_dirichlet_log_moment}.
\end{proof}

\begin{proposition}
\label{prop:simplex_matching_closed_form}
For $0<s<t\leq1$, the simplex-matching term satisfies
\begin{align}
\bLoss{w_s\mid z_s,w_t}(\bw_t,\hat{\bx}_\theta,\bx; s,t)
&=
\KL{\Dir\!\left(\cdot;\eta_s \bp_s\right)}{\Dir\!\left(\cdot;\eta_s \hat{\bp}_s\right)}
\notag\\
&\quad+
\inner{\rhost{\bx}{\bw_t}}{
\log \hat{\bp}_s
-
\log \bp_s
+
\one
-
\hat{\bp}_s\oslash \bp_s
}.
\label{eq:lsimp_closed_form_app}
\end{align}
\end{proposition}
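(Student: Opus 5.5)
The plan is to compute the inner KL divergence exactly for each fixed value of $\bz_s$ and then average over $\bz_s\sim q(\bz_s\mid\bw_t,\bx)=\Cat(\rhost{\bx}{\bw_t})$, which is item 3 of \Cref{prop:dir_cat_hierarchy_full}. By \eqref{eq:dir_bridge_app} at time $s$, conditioning on $\bz_s=\ek$ gives the true law $\Dir(\bw_s;\eta_s\bp_s+\ek)$ and the model law $\Dir(\bw_s;\eta_s\hat{\bp}_s+\ek)$. Writing $q_k \coloneq \rho_{s\mid t,k}(\bx,\bw_t)$, this gives
\[
\bLoss{w_s\mid z_s,w_t}=\sum_{k=1}^K q_k\, D_k,\qquad D_k\coloneq \KL{\Dir(\cdot;\eta_s\bp_s+\ek)}{\Dir(\cdot;\eta_s\hat{\bp}_s+\ek)}.
\]
The required positivity $p_{s,j},\hat p_{s,j}>0$ holds because $s>0$ and $\bpi$ is strictly positive.

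Next I will use the standard Dirichlet KL formula. For concentrations $\ba,\bb$ with totals $a_0,b_0$,
\[
\KL{\Dir(\cdot;\ba)}{\Dir(\cdot;\bb)}=\log\Gamma(a_0)-\log\Gamma(b_0)-\sum_j\log\Gamma(a_j)+\sum_j\log\Gamma(b_j)+\sum_j(a_j-b_j)\,\mathbb{E}_{\Dir(\ba)}[\log w_j].
\]
The last expectation is given by \Cref{lem:shifted_dirichlet_log_moment}. Here $a_0=b_0=\eta_s+1$, so the first two terms cancel. This is the same situation as in the unshifted base pair $(\eta_s\bp_s,\eta_s\hat{\bp}_s)$, where both totals equal $\eta_s$.

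The key step is to reduce $D_k$ to the unshifted KL plus an explicit correction, using the recurrences $\Gamma(x+1)=x\Gamma(x)$ and $\psi(x+1)=\psi(x)+1/x$.

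\textbf{Log-Gamma terms.} Only coordinate $k$ is shifted. This produces the extra term $\log(\eta_s\hat p_{s,k})-\log(\eta_s p_{s,k})=\log\hat p_{s,k}-\log p_{s,k}$.

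\textbf{Digamma terms.} The factor $a_j-b_j=\eta_s(p_{s,j}-\hat p_{s,j})$ is unchanged by the shift. The shift changes $\psi(\eta_s p_{s,k})$ into $\psi(\eta_s p_{s,k})+1/(\eta_s p_{s,k})$, which contributes $1-\hat p_{s,k}/p_{s,k}$. It also changes $\psi(\eta_s)$ into $\psi(\eta_s)+1/\eta_s$, which contributes $-\sum_j(p_{s,j}-\hat p_{s,j})=0$, since both $\bp_s$ and $\hat{\bp}_s$ sum to one.

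Putting these together gives
\[
D_k=\KL{\Dir(\cdot;\eta_s\bp_s)}{\Dir(\cdot;\eta_s\hat{\bp}_s)}+\log\hat p_{s,k}-\log p_{s,k}+1-\hat p_{s,k}/p_{s,k}.
\]

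Finally I average over $k$ with weights $q_k$. Because $\sum_k q_k=1$, the base KL term passes through unchanged, and the remaining terms form exactly the inner product with $\rhost{\bx}{\bw_t}$ in \eqref{eq:lsimp_closed_form_app}. The main obstacle is only the bookkeeping in the digamma term. In particular, I must check that the normalizer shift contributes exactly zero, which is where $\one^\top\bp_s=\one^\top\hat{\bp}_s=1$ is essential, and that no $\eta_s$ factor survives in the $1-\hat p_{s,k}/p_{s,k}$ correction.
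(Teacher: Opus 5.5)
Your proposal is correct and follows essentially the same route as the paper. Both arguments reduce each shifted Dirichlet KL $D_k$ to the unshifted KL plus the normalizer correction $\log\hat p_{s,k}-\log p_{s,k}$ and the digamma correction $1-\hat p_{s,k}/p_{s,k}$, with the $\psi(\eta_s+1)$ shift vanishing because $\bp_s$ and $\hat{\bp}_s$ both sum to one, and then average against $\rhost{\bx}{\bw_t}$. The only difference is bookkeeping: the paper obtains the normalizer correction from the density-shift identity of \Cref{lem:dir_shift}, whereas you apply $\Gamma(x+1)=x\Gamma(x)$ inside the closed-form Dirichlet KL, which is the same calculation.
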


\begin{proof}
We compute the simplex term
    $
    \bLoss{w_s\mid z_s,w_t}
    $.
    By definition,
    \begin{equation}
    \bLoss{w_s\mid z_s,w_t}
    =
    \sum_{k=1}^K
    \rho_{s\mid t,k}(\bx,\bw_t)\,
    \KL{
    \Dir\!\left(\cdot;\eta_s \bp_s+\be_k\right)
    }{
    \Dir\!\left(\cdot;\eta_s \hat{\bp}_s+\be_k\right)
    }.
    \label{eq:proof_lsimp_expand_app}
    \end{equation}
    Fix $k\in\{1,\dots,K\}$.
    By \Cref{lem:dir_shift},
    \begin{align}
    \Dir\!\left(\bw;\eta_s \bp_s+\be_k\right)
    &=
    \frac{w_k}{p_{s,k}(\bx)}
    \Dir\!\left(\bw;\eta_s \bp_s\right),
    \label{eq:proof_shift_identity_x_app}
    \\
    \Dir\!\left(\bw;\eta_s \hat{\bp}_s+\be_k\right)
    &=
    \frac{w_k}{p_{s,k}(\hat{\bx}_\theta)}
    \Dir\!\left(\bw;\eta_s \hat{\bp}_s\right).
    \label{eq:proof_shift_identity_xhat_app}
    \end{align}
    Taking the logarithm of the ratio gives
    \begin{align}
    \log
    \frac{
    \Dir\!\left(\bw;\eta_s \bp_s+\be_k\right)
    }{
    \Dir\!\left(\bw;\eta_s \hat{\bp}_s+\be_k\right)
    }
    &=
    \log
    \frac{
    \Dir\!\left(\bw;\eta_s \bp_s\right)
    }{
    \Dir\!\left(\bw;\eta_s \hat{\bp}_s\right)
    }
    +
    \log p_{s,k}(\hat{\bx}_\theta)
    -
    \log p_{s,k}(\bx).
    \label{eq:proof_shifted_log_ratio_app}
    \end{align}

    We now compare the expectation of the unshifted log-ratio under the shifted Dirichlet law with the KL between the unshifted Dirichlet distributions.
    Expanding the Dirichlet density gives
    \begin{align}
    \log
    \frac{
    \Dir\!\left(\bw;\eta_s \bp_s\right)
    }{
    \Dir\!\left(\bw;\eta_s \hat{\bp}_s\right)
    }
    &=
    \log
    \frac{
    B(\eta_s\hat{\bp}_s)
    }{
    B(\eta_s\bp_s)
    }
    +
    \eta_s
    \sum_{j=1}^K
    \bigl(
    p_{s,j}(\bx)-p_{s,j}(\hat{\bx}_\theta)
    \bigr)\log w_j.
    \label{eq:proof_unshifted_log_ratio_app}
    \end{align}
    Taking expectation under
    $
    \Dir(\cdot;\eta_s \bp_s+\be_k)
    $
    and applying \Cref{lem:shifted_dirichlet_log_moment} gives
    \begin{align}
    &\EE{\Dir\!\left(\cdot;\eta_s \bp_s+\be_k\right)}{
    \log
    \frac{
    \Dir\!\left(\bw;\eta_s \bp_s\right)
    }{
    \Dir\!\left(\bw;\eta_s \hat{\bp}_s\right)
    }
    }
    \notag\\
    &\qquad=
    \log
    \frac{
    B(\eta_s\hat{\bp}_s)
    }{
    B(\eta_s\bp_s)
    }
    +
    \eta_s
    \sum_{j=1}^K
    \bigl(
    p_{s,j}(\bx)-p_{s,j}(\hat{\bx}_\theta)
    \bigr)
    \bigl(
    \psi(\eta_s p_{s,j}(\bx)+\delta_{j,k})
    -
    \psi(\eta_s+1)
    \bigr).
    \label{eq:proof_shifted_expectation_step1_app}
    \end{align}
    On the other hand, the KL divergence between the unshifted Dirichlet distributions is
    \begin{align}
    &\KL{
    \Dir\!\left(\cdot;\eta_s \bp_s\right)
    }{
    \Dir\!\left(\cdot;\eta_s \hat{\bp}_s\right)
    }\notag\\
    &=
    \log
    \frac{
    B(\eta_s\hat{\bp}_s)
    }{
    B(\eta_s\bp_s)
    }
    +
    \eta_s
    \sum_{j=1}^K
    \bigl(
    p_{s,j}(\bx)-p_{s,j}(\hat{\bx}_\theta)
    \bigr)
    \bigl(
    \psi(\eta_s p_{s,j}(\bx))
    -
    \psi(\eta_s)
    \bigr).
    \label{eq:proof_unshifted_kl_formula_app}
    \end{align}
    Subtracting \eqref{eq:proof_unshifted_kl_formula_app} from
    \eqref{eq:proof_shifted_expectation_step1_app}, and using
    \[
    \psi(a+1)=\psi(a)+\frac{1}{a},
    \qquad
    \psi(\eta_s+1)=\psi(\eta_s)+\frac{1}{\eta_s},
    \]
    yields
    \begin{align}
    &\EE{\Dir\!\left(\cdot;\eta_s \bp_s+\be_k\right)}{
    \log
    \frac{
    \Dir\!\left(\bw;\eta_s \bp_s\right)
    }{
    \Dir\!\left(\bw;\eta_s \hat{\bp}_s\right)
    }
    }
    \notag\\
    &=
    \KL{
    \Dir\!\left(\cdot;\eta_s \bp_s\right)
    }{
    \Dir\!\left(\cdot;\eta_s \hat{\bp}_s\right)
    }
    +
    1-\frac{p_{s,k}(\hat{\bx}_\theta)}{p_{s,k}(\bx)}.
    \label{eq:proof_shifted_expectation_revised_app}
    \end{align}
    Combining \eqref{eq:proof_shifted_log_ratio_app} with
    \eqref{eq:proof_shifted_expectation_revised_app}, we obtain
    \begin{align}
    &\KL{
    \Dir\!\left(\cdot;\eta_s \bp_s+\be_k\right)
    }{
    \Dir\!\left(\cdot;\eta_s \hat{\bp}_s+\be_k\right)
    } \notag\\
    &=
    \KL{
    \Dir\!\left(\cdot;\eta_s \bp_s\right)
    }{
    \Dir\!\left(\cdot;\eta_s \hat{\bp}_s\right)
    }
    +
    \log p_{s,k}(\hat{\bx}_\theta)
    -
    \log p_{s,k}(\bx)
    \notag\\
    &\quad+
    1-\frac{p_{s,k}(\hat{\bx}_\theta)}{p_{s,k}(\bx)}.
    \label{eq:proof_shifted_kl_final_app}
    \end{align}
    Substituting \eqref{eq:proof_shifted_kl_final_app} into
    \eqref{eq:proof_lsimp_expand_app}, and using
    $
    \sum_{k=1}^K \rho_{s\mid t,k}(\bx,\bw_t)=1
    $,
    yields
    \begin{align}
    \bLoss{w_s\mid z_s,w_t}
    &=
    \KL{
    \Dir\!\left(\cdot;\eta_s \bp_s\right)
    }{
    \Dir\!\left(\cdot;\eta_s \hat{\bp}_s\right)
    }
    \notag\\
    &\quad+
    \sum_{k=1}^K
    \rho_{s\mid t,k}(\bx,\bw_t)
    \left(
    \log p_{s,k}(\hat{\bx}_\theta)
    -
    \log p_{s,k}(\bx)
    +
    1-\frac{p_{s,k}(\hat{\bx}_\theta)}{p_{s,k}(\bx)}
    \right)
    \notag\\
    &=
    \KL{\Dir\!\left(\cdot;\eta_s \bp_s\right)}{\Dir\!\left(\cdot;\eta_s \hat{\bp}_s\right)}
    +
    \inner{\rhost{\bx}{\bw_t}}{
    \log \hat{\bp}_s
    -
    \log \bp_s
    +
    \one
    -
    \hat{\bp}_s\oslash \bp_s
    },
    \end{align}
    which proves \eqref{eq:lsimp_closed_form_app}.
\end{proof}

\subsection{Why the other surrogate objectives do not yield suitable continuous-time objectives}
\label{app:ct_other_surrogates}

The relaxed discrete bridge is distinguished by its first-order scaling.
We now show that the remaining surrogates behave differently in the local limit:
the tight discrete objective vanishes at second order, whereas the joint objectives retain an $O(1)$ simplex-matching term.

\begin{proposition}
\label{prop:ct_other_objectives}
Let $s=t-\Delta$ with $\Delta\downarrow 0$, and assume that $\eta_t$ is continuous in $t$.

\begin{enumerate}
    \item The tight discrete objective satisfies
    \begin{equation}
    \Loss{z_{t-\Delta}\mid w_t}(\bw_t,\hat{\bx}_\theta,\bx; t-\Delta,t)
    =
    O(\Delta^2).
    \label{eq:tight_discrete_second_order_app}
    \end{equation}

    \item Define
    \begin{align}
    \mathcal{S}_t(\bw_t,\hat{\bx}_\theta,\bx)
    &\coloneq
    \KL{\Dir\!\left(\cdot;\eta_t \bp_t\right)}{\Dir\!\left(\cdot;\eta_t \hat{\bp}_t\right)}
    +
    \inner{\bw_t}{
    \log \hat{\bp}_t
    -
    \log \bp_t
    +
    \one
    -
    \hat{\bp}_t\oslash \bp_t
    }.
    \label{eq:ct_simplex_limit_term_app}
    \end{align}
    Then the simplex term satisfies
    \begin{equation}
    \bLoss{w_{t-\Delta}\mid z_{t-\Delta},w_t}(\bw_t,\hat{\bx}_\theta,\bx; t-\Delta,t)
    =
    \mathcal{S}_t(\bw_t,\hat{\bx}_\theta,\bx)
    +
    o(1).
    \label{eq:simplex_term_o1_app}
    \end{equation}

    \item Consequently,
    \begin{align}
    \Loss{z_{t-\Delta},w_{t-\Delta}\mid w_t}(\bw_t,\hat{\bx}_\theta,\bx; t-\Delta,t)
    &=
    \mathcal{S}_t(\bw_t,\hat{\bx}_\theta,\bx)
    +
    o(1),
    \label{eq:tight_joint_o1_app}
    \\
    \bLoss{z_{t-\Delta},w_{t-\Delta}\mid z_t,w_t}(\bw_t,\hat{\bx}_\theta,\bx; t-\Delta,t)
    &=
    \mathcal{S}_t(\bw_t,\hat{\bx}_\theta,\bx)
    +
    o(1).
    \label{eq:loose_joint_o1_app}
    \end{align}
\end{enumerate}
\end{proposition}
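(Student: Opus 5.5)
The plan is to treat the three parts in order. Each part reduces to the closed forms already established, plus the first-order expansion of $\boldsymbol{\rho}_{t-\Delta\mid t}$ from the proof of \Cref{prop:continuous_time_limit}. Throughout, I fix $t>0$ and restrict to $\Delta<t$, so that $s=t-\Delta>0$. I also assume $\bw_t$ lies in the open simplex, which holds almost surely under $\Dir(\eta_t\bp_t)$. Then $\bp_s$, $\hat{\bp}_s$ and $\bw_t$ are all strictly positive, and every quantity below depends smoothly (or at least continuously) on $\Delta$.

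\textbf{Part 1.} The key observation is that both arguments of the KL in $\Loss{z_s\mid w_t}$ collapse to the same point at $\Delta=0$. By \eqref{eq:zs_given_wtx_app}, the KL compares $\Cat(\rhost{\bx}{\bw_t})$ with $\Cat(\rhost{\hat{\bx}_\theta}{\bw_t})$. The model side is obtained by the same formula \eqref{eq:rho_def_app} with $\bp$ replaced by $\hat{\bp}$. Since $\alpha_{t\mid t}=1$, both vectors reduce at $\Delta=0$ to $\bp_t\odot(\bw_t\oslash\bp_t)=\bw_t$, respectively $\hat{\bp}_t\odot(\bw_t\oslash\hat{\bp}_t)=\bw_t$.

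The expansion \eqref{eq:app_rho_expansion}, and its analogue with hats, then gives
\[
\boldsymbol{\rho}_{t-\Delta\mid t}(\bx,\bw_t)-\boldsymbol{\rho}_{t-\Delta\mid t}(\hat{\bx}_\theta,\bw_t)=O(\Delta).
\]
I would bound the KL by the $\chi^2$-divergence,
\[
\KL{P}{Q}\le\sum_k (P_k-Q_k)^2/Q_k .
\]
Since $Q_k=\rho_{t-\Delta\mid t,k}(\hat{\bx}_\theta,\bw_t)\to w_{t,k}>0$, the denominators stay bounded away from zero for small $\Delta$. This yields the claimed $O(\Delta^2)$.

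\textbf{Part 2.} Here I would substitute $s=t-\Delta$ into the closed form of \Cref{prop:simplex_matching_closed_form} and pass to the limit term by term.

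For the Dirichlet KL term, I would write it explicitly through $\log B$ and digamma values, as in \eqref{eq:proof_unshifted_kl_formula_app}. This is a continuous function of $(\eta_s,\bp_s,\hat{\bp}_s)$ on strictly positive arguments. By the continuity of $\eta$ and of $\alpha$, it therefore converges to $\KL{\Dir(\cdot;\eta_t \bp_t)}{\Dir(\cdot;\eta_t \hat{\bp}_t)}$.

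For the inner-product term, I would use $\boldsymbol{\rho}_{t-\Delta\mid t}(\bx,\bw_t)\to\bw_t$ from Part 1. The vector $\log\hat{\bp}_s-\log\bp_s+\one-\hat{\bp}_s\oslash\bp_s$ converges to its value at $s=t$. Together these give convergence to $\mathcal{S}_t$, with error $o(1)$.

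\textbf{Part 3.} I would invoke \Cref{prop:joint_kl_decomposition_app}.

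For the tight joint objective, the decomposition gives $\Loss{z_s\mid w_t}+\bLoss{w_s\mid z_s,w_t}$. By Parts 1 and 2 this equals $O(\Delta^2)+\mathcal{S}_t+o(1)$.

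For the loose joint objective, the decomposition gives $\bLoss{z_s\mid z_t,w_t}+\bLoss{w_s\mid z_s,w_t}$. I would evaluate the first summand with \Cref{prop:rao_blackwellized_objective}, namely $\inner{\bw_t}{\log\hat{\bp}_t-\log\bp_t}+\inner{\boldsymbol{\rho}}{\log\bp_s-\log\hat{\bp}_s}$. Since $\boldsymbol{\rho}\to\bw_t$ and $\bp_s\to\bp_t$, this tends to $0$, hence is $o(1)$. Adding Part 2 gives \eqref{eq:loose_joint_o1_app}.

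The step I expect to need the most care is Part 2. The main point is justifying continuity of the Dirichlet KL and of the shifted-moment terms in the concentration parameters, which needs strict positivity of $\eta_s\bp_s$ and $\eta_s\hat{\bp}_s$ uniformly for small $\Delta$. This positivity follows from $t>0$, $\alpha_t<1$, $\bpi>0$, $\eta_t>0$ and continuity of $\eta$. Everything else is bookkeeping with expansions already derived in the paper.
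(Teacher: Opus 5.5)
Your proposal is correct and follows the paper's proof essentially step for step: both lifted posteriors tend to $\bw_t$ in Part~1, you take term-by-term limits in the closed form of \Cref{prop:simplex_matching_closed_form} in Part~2, and you use the chain-rule decompositions of \Cref{prop:joint_kl_decomposition_app} in Part~3. Your $\chi^2$ bound, your direct use of the Rao--Blackwellized closed form for the loose joint term, and your restriction to $\bw_t$ in the open simplex only tighten the same steps: the paper instead cites \Cref{prop:continuous_time_limit}, which holds only up to $\theta$-independent additive terms, and the interior restriction is genuinely needed because at a vertex the tight objective is the standard discrete reverse KL, which is generically of exact order $\Delta$.
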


\begin{proof}
\begin{enumerate}
    \item
    We first analyze the tight discrete objective.
    Using \Cref{eq:rho_def_app} together with the same expansions
    \eqref{eq:app_alpha_ratio_expansion} and \eqref{eq:app_pt_expansion} as above, we obtain
    \begin{align}
    \boldsymbol{\rho}_{t-\Delta\mid t}(\bx,\bw_t)
    &=
    \bw_t
    +
    \Delta\,\ba_t(\bx,\bw_t)
    +
    o(\Delta),
    \label{eq:rho_true_local_app}
    \\
    \boldsymbol{\rho}_{t-\Delta\mid t}(\hat{\bx}_\theta,\bw_t)
    &=
    \bw_t
    +
    \Delta\,\hat{\ba}_t(\hat{\bx}_\theta,\bw_t)
    +
    o(\Delta),
    \label{eq:rho_model_local_app}
    \end{align}
    where
    \begin{align}
    \ba_t(\bx,\bw_t)
    &=
    \lambda(t)
    \left[
    \inner{\bw_t}{\bpi \oslash \bp_t}\bp_t
    -
    \bpi \odot (\bw_t \oslash \bp_t)
    \right],
    \label{eq:a_true_local_app}
    \\
    \hat{\ba}_t(\hat{\bx}_\theta,\bw_t)
    &=
    \lambda(t)
    \left[
    \inner{\bw_t}{\bpi \oslash \hat{\bp}_t}\hat{\bp}_t
    -
    \bpi \odot (\bw_t \oslash \hat{\bp}_t)
    \right].
    \label{eq:a_model_local_app}
    \end{align}
    Since both vectors in \eqref{eq:rho_true_local_app} and \eqref{eq:rho_model_local_app} are probability vectors, their first-order perturbations satisfy
    \[
    \inner{\one}{\ba_t(\bx,\bw_t)}=0,
    \qquad
    \inner{\one}{\hat{\ba}_t(\hat{\bx}_\theta,\bw_t)}=0.
    \]
    Therefore, expanding the categorical KL around the common base point $\bw_t$ shows that the first-order term cancels:
    \begin{align}
    \Loss{z_{t-\Delta}\mid w_t}
    &=
    \KL{
    \Cat\!\left(\cdot;
    \bw_t+\Delta\,\ba_t+o(\Delta)\right)
    }{
    \Cat\!\left(\cdot;
    \bw_t+\Delta\,\hat{\ba}_t+o(\Delta)\right)
    }
    \notag\\
    &=
    O(\Delta^2).
    \end{align}
    This proves \eqref{eq:tight_discrete_second_order_app}.

    \item
    We next analyze the simplex term using its closed form \eqref{eq:lsimp_closed_form_app}.
    Since $\eta_t$ is continuous,
    \[
    \eta_{t-\Delta}=\eta_t+o(1).
    \]
    Moreover, by \eqref{eq:app_pt_expansion} and its analogue for $\hat{\bp}_{t-\Delta}$,
    \[
    \bp_{t-\Delta}=\bp_t+O(\Delta),
    \qquad
    \hat{\bp}_{t-\Delta}=\hat{\bp}_t+O(\Delta).
    \]
    Finally, \Cref{eq:app_rho_expansion} gives
    \[
    \boldsymbol{\rho}_{t-\Delta\mid t}(\bx,\bw_t)
    =
    \bw_t+O(\Delta).
    \]
    Substituting these expansions into \eqref{eq:lsimp_closed_form_app} yields
    \begin{align}
    \bLoss{w_{t-\Delta}\mid z_{t-\Delta},w_t}
    &=
    \KL{\Dir\!\left(\cdot;\eta_t \bp_t\right)}{\Dir\!\left(\cdot;\eta_t \hat{\bp}_t\right)}
    \notag\\
    &\quad+
    \inner{\bw_t}{
    \log \hat{\bp}_t
    -
    \log \bp_t
    +
    \one
    -
    \hat{\bp}_t\oslash \bp_t
    }
    +
    o(1),
    \end{align}
    which is exactly \eqref{eq:simplex_term_o1_app}.

    \item
    The asymptotics of the two joint objectives now follow from the decompositions in
    \Cref{eq:tight_joint_kl_decomposition_app,eq:loose_joint_kl_decomposition_app}.
    For the tight joint objective,
    \[
    \Loss{z_{t-\Delta},w_{t-\Delta}\mid w_t}
    =
    \Loss{z_{t-\Delta}\mid w_t}
    +
    \bLoss{w_{t-\Delta}\mid z_{t-\Delta},w_t},
    \]
    and combining \eqref{eq:tight_discrete_second_order_app} with
    \eqref{eq:simplex_term_o1_app} gives
    \eqref{eq:tight_joint_o1_app}.

    For the loose joint objective,
    \[
    \bLoss{z_{t-\Delta},w_{t-\Delta}\mid z_t,w_t}
    =
    \bLoss{z_{t-\Delta}\mid z_t,w_t}
    +
    \bLoss{w_{t-\Delta}\mid z_{t-\Delta},w_t}.
    \]
    The first term is $o(1)$ by \Cref{prop:continuous_time_limit}, while the second is given by \eqref{eq:simplex_term_o1_app}.
    This yields \eqref{eq:loose_joint_o1_app}.
\end{enumerate}
\end{proof}

The proposition makes the selection of the main-text objective precise.
The tight discrete objective is too small in the local limit: after dividing by $\Delta$, it vanishes.
The joint objectives behave in the opposite way: they contain a generally nonzero $O(1)$ simplex-matching term, so they do not reduce to a finite first-order training density.
The relaxed discrete bridge sits exactly between these two extremes, which is why it is the natural objective for the continuous-time formulation.

%% file: sec_app/f_owt_experiment.tex
\section{OpenWebText Experimental Details}
\label{app:owt_experiment}

This appendix provides preprocessing, optimization, checkpoint, sampling, and
evaluation details that are omitted from the main text.
The dataset, tokenizer, sequence length, backbone, primary optimization
settings, and entropy-matched evaluation protocol are summarized in
\Cref{sec:exp_settings,sec:exp_owt}.

\paragraph{Data preprocessing.}
We use the \texttt{openwebtext-train} and
\texttt{openwebtext-valid} splits.
Documents are concatenated with an end-of-sequence token inserted between
adjacent documents and packed into fixed-length blocks of $1{,}024$ GPT-2
tokens.

\paragraph{Optimization details.}
Models trained in our common codebase use Adam with
$\beta_1=0.9$, $\beta_2=0.999$, numerical constant $10^{-8}$, and gradient-norm clipping at $1.0$.
Training uses bfloat16 precision and a linear warmup over the first
$2{,}500$ optimizer steps, followed by a constant learning rate.
We maintain an EMA of the parameters with decay
$0.9999$ and use the averaged parameters for generation.

\paragraph{Simplax checkpoint.}
The Simplax model used in the main OpenWebText comparison is initialized from
a UDLM checkpoint trained for $800{,}000$ optimizer steps and is subsequently
trained with the Simplax objective for an additional $200{,}000$ steps.
The resulting checkpoint therefore has a total optimization history of
$1{,}000{,}000$ steps.

The network predicts the clean-token distribution and receives the categorical
state $\bz_t$ as input, while the relaxed state $\bw_t$ remains in the
training objective.
We use a uniform time schedule, constant Dirichlet concentration
$\eta=0.01$, and constant loss weighting.
The reported checkpoint does not use an auxiliary self-conditioning input.

Dirichlet sampling and concentration-dependent computations are performed in
float64.
Concentration values are restricted to
$[10^{-10},10^8]$ for numerical stability.
Before normalization, the output logits $\ell$ are softly bounded as $\ell
\leftarrow 30\tanh\left({\ell}/{30}\right)$.

\paragraph{Qualitative generations.}
\Cref{tab:owt-generations-nfe-16,tab:owt-generations-nfe-128,tab:owt-generations-nfe-1024}
show representative generations at NFE $=16$, $128$, and $1,024$.
For each method, the example is selected from the operating point determined
by the entropy-matching procedure used in the main experiment.
The generated text is not manually rewritten.
The excerpts are truncated at the positions marked by
\textcolor{owtgray}{[\ldots]}, and line wrapping is applied only for
presentation.

\begin{table*}[tp]
  \centering
  \caption{
    Representative unconditional OpenWebText generations at NFE $=16$.
    Entropy is the generative unigram entropy in nats per token.
  }
  \label{tab:owt-generations-nfe-16}
  \scriptsize
  \renewcommand{\arraystretch}{1.18}
  \setlength{\tabcolsep}{4pt}
  \begin{tabularx}{
      \textwidth
    }{
      @{}
      p{0.78in}
      >{\centering\arraybackslash}p{0.38in}
      X
      @{}
    }
    \toprule
    \textbf{Method}
      & \textbf{Ent.}
      & \textbf{Generated text} \\
    \midrule

    \rowcolor{owtpanel}
    \textbf{CANDI}
      & 5.44
      & July on our records and the song in July, and we're signing people
        just for the next album, but it didn't couple with laser or anything.
        I had just a video. I thought they were crazy, because there were a
        lot of people out there who did it right. They felt like it was
        completely under my radar.
        \enspace\textcolor{owtgray}{[\ldots]} \\[4pt]

    \textbf{UDLM}
      & 5.53
      & What with talk of a new bus un the altitude in the first place.
        In South Bend in November came the focus upon new cyclists -- with an
        being builder. The other duck, Alabama. In total at least 82 bikes
        were donated to the department.
        \enspace\textcolor{owtgray}{[\ldots]} \\[4pt]

    \rowcolor{owtpanel}
    \textbf{MDLM}
      & 5.43
      & Select Rally and Comm Rally was a great game with strong designed
        scenarios, but I did think about no-one should play it as middle of
        the road. You've already started to love my opinion on Two vs Two.
        You should check it, and stick and play this once again! We again
        stack you. This is the final year for the year.
        \enspace\textcolor{owtgray}{[\ldots]} \\[4pt]

    \textbf{Duo}
      & 5.45
      & I ever thought that it would be torrent be progression to try
        something new like my team base's football department. While I would
        have been optimistic about both the br of young players that hockey
        at the University at which went down in international football over
        the last of years and the way paths kept me focused so well I was
        pretty.
        \enspace\textcolor{owtgray}{[\ldots]} \\[4pt]

    \rowcolor{owtpanel}
    \textbf{FLM}
      & 5.58
      & Dec 16, 2015 Edit: I cant let my reader now clear that this painting
        looks like a collegecommunication.'' On Z. Thanks the box. that by the
        way I am out over to the river to buy your best chance to date for
        such a horrible year. But at which point after year they release the
        windows of 3,100 by 11 inches. St.
        \enspace\textcolor{owtgray}{[\ldots]} \\[4pt]

    \textbf{LangFlow}
      & 5.42
      & God, will thrive. No others, and not at all, will determine our faith.
        We live in transient despair. But we don't know how to do it tomorrow.
        We want to live. We are sinners, we do not have God. God. we have
        identified the freedom of the one Being, is made to the only kind
        that governs our nation.
        \enspace\textcolor{owtgray}{[\ldots]} \\[4pt]

    \rowcolor{owtpanel}
    \textbf{S-FLM}
      & 5.45
      & I feel like, how much does that have to work out there? Are you going
        into a different development company? I think speaking about the
        timing and the reputation of what I mean as something that have
        really enjoyed my career doing. I kind of think the company can change
        things.
        \enspace\textcolor{owtgray}{[\ldots]} \\[4pt]

    \textbf{Simplax}
      & 5.46
      & It was basically a percentage league, is there going to have to be one
        a year? They always were because they had them. If they knew that
        rule, they got to play one for their bodies. Then, when they voted for
        a ``bolt 1'' rule and what they got was, they want the fifth best
        defender to attack you, so they added
        \enspace\textcolor{owtgray}{[\ldots]} \\[4pt]

    \bottomrule
  \end{tabularx}
\end{table*}

\clearpage

\begin{table*}[tp]
  \centering
  \caption{
    Representative unconditional OpenWebText generations at NFE $=128$.
    Entropy is the generative unigram entropy in nats per token.
  }
  \label{tab:owt-generations-nfe-128}
  \scriptsize
  \renewcommand{\arraystretch}{1.18}
  \setlength{\tabcolsep}{4pt}
  \begin{tabularx}{
      \textwidth
    }{
      @{}
      p{0.78in}
      >{\centering\arraybackslash}p{0.38in}
      X
      @{}
    }
    \toprule
    \textbf{Method}
      & \textbf{Ent.}
      & \textbf{Generated text} \\
    \midrule

    \rowcolor{owtpanel}
    \textbf{CANDI}
      & 5.43
      & I think they should hear about their jobs more than they think. They
        know that they to hear about a huge percentage of young adult
        Americans that are helping create jobs. I realize that what our
        advisors and architects believe is true. They're in fact engaging in
        those American jobs, whether or not they've been doing it for several
        years.
        \enspace\textcolor{owtgray}{[\ldots]} \\[4pt]

    \textbf{UDLM}
      & 5.50
      & The show has been going on for years, and people who tell the
        community about the end, and what way it works are already very
        exciting. But it is difficult to know whether you will disagree, even
        whether you feel the show expects a lot. So tell other friends and
        friends who are just around you, the producers and the media what
        \enspace\textcolor{owtgray}{[\ldots]} \\[4pt]

    \rowcolor{owtpanel}
    \textbf{MDLM}
      & 5.45
      & It's been difficult times with, for me and Brett. It's important to
        tackle this issue on the long haul. I plan on putting it on hold again
        once I have my business, but committed to showing up again by the
        deadline. There's a lot of effort involved in an issue about a loan
        for \$20k, and it was in the middle, I
        \enspace\textcolor{owtgray}{[\ldots]} \\[4pt]

    \textbf{Duo}
      & 5.44
      & America's social justice system. The ability of America's people to
        deal with those that are alike and those different is unclear. But
        when tough and bad, the outcomes are not very different.'' Congressman
        Kathy Lee interviewed for this article told me what she called her
        priority business when serving as lieutenant senator of New York.
        \enspace\textcolor{owtgray}{[\ldots]} \\[4pt]

    \rowcolor{owtpanel}
    \textbf{FLM}
      & 5.43
      & I think that we would have to bring back aggression by the usual
        course, with war with the past, on a situation. If it was done, the
        fascists were inside them or there was nobody stirred up. People were
        part of the invasion by the Nazis, then they shot, they went on.
        \enspace\textcolor{owtgray}{[\ldots]} \\[4pt]

    \textbf{LangFlow}
      & 5.42
      & The body bloates with a layer of wood, black air that brings in your
        skin and exites aging. When reading about your life in the late 90s,
        what was the puzzle you were plotting in your life? I wanted to tell
        the secret of being an ideal person, and that everything that relies
        on that secret still exists.
        \enspace\textcolor{owtgray}{[\ldots]} \\[4pt]

    \rowcolor{owtpanel}
    \textbf{S-FLM}
      & 5.43
      & ``You,'' asked Reeves, who had a light conversation with her.
        ``You shouldn't sell a media conference. You have been around for
        weeks and you haven't come out to 83 game,'' Khanco said, ``WHAT?''
        ``Foreby, you have been making a fortune with free agency!'' she
        replied. ``I have to wonder what it was your secret. I plan to write
        everything with this team.
        \enspace\textcolor{owtgray}{[\ldots]} \\[4pt]

    \textbf{Simplax}
      & 5.45
      & ``I thought it was odd,'' Anna told me. ``I wanted to know,'' Elsa
        said nervously. It's a lovely thing when you use words the way they
        mean something. You seem to like it. A lot.'' That was nice.
        ``It needed to be answered,'' Anna said. ``I just didn't say it since
        I grew up.'' I said.
        \enspace\textcolor{owtgray}{[\ldots]} \\[4pt]

    \bottomrule
  \end{tabularx}
\end{table*}

\clearpage

\begin{table*}[tp]
  \centering
  \caption{
    Representative unconditional OpenWebText generations at NFE $=1,024$.
    Entropy is the generative unigram entropy in nats per token.
  }
  \label{tab:owt-generations-nfe-1024}
  \scriptsize
  \renewcommand{\arraystretch}{1.18}
  \setlength{\tabcolsep}{4pt}
  \begin{tabularx}{
      \textwidth
    }{
      @{}
      p{0.78in}
      >{\centering\arraybackslash}p{0.38in}
      X
      @{}
    }
    \toprule
    \textbf{Method}
      & \textbf{Ent.}
      & \textbf{Generated text} \\
    \midrule

    \rowcolor{owtpanel}
    \textbf{CANDI}
      & 5.46
      & Tip: Every single mistake you face, you don't pay for it all the time.
        This is lost confidence. Of course, in this article, you want to be
        booking your own grooming option, just to pay at your local time
        schedule. You have to have a monthly with the brand Indigo Tattoo Shop
        for the monthly fee.
        \enspace\textcolor{owtgray}{[\ldots]} \\[4pt]

    \textbf{UDLM}
      & 5.45
      & Yeah: That's what I understand... and I'd presumably have come on
        board. I think it would be foolish to give something to him, but I
        guess he wouldn't want to write them, because he'd actually want it to
        remain in front of him for a few years.
        \enspace\textcolor{owtgray}{[\ldots]} \\[4pt]

    \rowcolor{owtpanel}
    \textbf{MDLM}
      & 5.44
      & No last of the songs have been put together -- we really want the
        whole development going. I really hope it turns out, or the clock
        falls on its own itself, and if we start putting them together, things
        will just blow up almost completely. That's pretty scary. This will be
        our first time back in the studio.
        \enspace\textcolor{owtgray}{[\ldots]} \\[4pt]

    \textbf{Duo}
      & 5.43
      & If for rats and pies. I've been digging for a hard break. With Ann --
        she'd have a long, bitter it by trying on. It never got there, I-
        She was just shy of puberty or a man. I wouldn't even want to tell my
        grandmother what was going on in her life. Those things went
        somewhere, baby.
        \enspace\textcolor{owtgray}{[\ldots]} \\[4pt]

    \rowcolor{owtpanel}
    \textbf{FLM}
      & 5.45
      & There can be something if you don't know you're someones not squad,
        that's got an entirely different breed but neither do instinctively
        believe what or everyone born with will do the thing. There are some
        of that kind of good will. You only do that stuff for nothing but fun.
        \enspace\textcolor{owtgray}{[\ldots]} \\[4pt]

    \textbf{LangFlow}
      & 5.41
      & 50. ``My messy experience is everything'' he said.'' Something that
        has been on for all my life, and I think I'm living with it, but that's
        tough when I start losing people's attention.'' He said he had
        accepted divorce for most of his family's care because he now relies
        on current degree's rent, healthcare, and social assistance children.
        \enspace\textcolor{owtgray}{[\ldots]} \\[4pt]

    \rowcolor{owtpanel}
    \textbf{S-FLM}
      & 5.45
      & I can't say no they. She's definitely a stockwoman. She's got a huge
        attitude. We don't make her look her the way she's someone that's
        popular. There's just a hint of that.'' If you would give us more time
        to the public on our \#1 gender department, you can tell us your
        thoughts in the comments below.
        \enspace\textcolor{owtgray}{[\ldots]} \\[4pt]

    \textbf{Simplax}
      & 5.44
      & How could I take it to him? Would I have a chat in my life? I didn't
        freak out. Even still, he did not have enough strength, and I was only
        millimeters away, and not even touching his chador. Only a rag with
        lain liquid it around his hair, and he could feel the innocence of a
        Playboy.
        \enspace\textcolor{owtgray}{[\ldots]} \\[4pt]

    \bottomrule
  \end{tabularx}
\end{table*}

%% file: sec_app/g_sudoku_experiment.tex
\subsection{Sudoku experimental details}
\label{app:sudoku_experiments}

\paragraph{Dataset.}
All Sudoku puzzles used in our experiments have unique solutions.
Following \citet{sflm}, we use the greedy Sudoku puzzle generator of
\citet{alp2024sudoku} to construct the training data and the evaluation
sets with $25$ or more clues.
The training set contains $48{,}000$ puzzles with $30$ observed cells, and
all models are trained exclusively on this $30$-clue training set.
For evaluation, we use $2{,}000$ puzzles for each clue count.
The training set and the $40$-, $35$-, $30$-, and $25$-clue evaluation
sets are generated with seed $42$.

For the $20$- and $17$-clue settings, we instead use uniquely solvable
$17$-clue Sudoku puzzles studied by \citet{lin2013specific}.
The $17$-clue evaluation set uses these puzzles directly, while the
$20$-clue evaluation set is constructed by augmenting each $17$-clue puzzle
with three additional clues from its unique solution.

At evaluation time, we therefore consider conditional completion with
$40$, $35$, $30$, $25$, $20$, and $17$ clues.
The $30$-clue setting matches the training clue density.
The $40$- and $35$-clue settings provide more conditioning information than
observed during training, whereas the $25$-, $20$-, and $17$-clue settings
progressively reduce the available conditioning information.
For the no-clue evaluation, all $81$ cells in the puzzle prefix are replaced
with the blank token.

\paragraph{Sequence representation.}
The vocabulary contains $12$ symbols: a blank-cell token, the digits
$1$--$9$, a row-separator token, and a \texttt{BOS} token.
Each example is represented by $180$ tokens:
\[
[\texttt{BOS}]
\;+\;
\text{89-token puzzle}
\;+\;
[\texttt{BOS}]
\;+\;
\text{89-token solution}.
\]
Each $89$-token board representation contains $81$ cell tokens and eight row
separators.
The resulting puzzle prefix has length $91$, and the generated solution has
length $89$.
Unobserved cells are represented explicitly by the blank token, so the prefix
length remains $91$ for every clue count.
The training objective is evaluated only on the solution portion of the
sequence.

\paragraph{Shared architecture.}
All methods use Transformer models with hidden dimension $512$, eight
Transformer blocks, eight attention heads of dimension $64$, and dropout
probability $0.1$.
The models use learned token embeddings without embedding--output weight
tying.
The autoregressive model uses causal attention and no time conditioning.
The remaining models use bidirectional attention and AdaLN-based time
conditioning with conditioning dimension $128$.







\paragraph{Optimization.}
All models are trained for $20{,}000$ optimization steps with global batch
size $256$ using bfloat16 precision.
We use Adam with learning rate $3\times10^{-4}$,
$\beta_1=0.9$, $\beta_2=0.999$, and gradient-norm clipping
at $1.0$.
The learning rate is linearly warmed up for $2{,}500$ steps and then held
constant.
We maintain an EMA of the parameters with decay
$0.9999$ and use the averaged parameters for evaluation.
Antithetic time sampling is enabled.
All training runs use random seed $1$, and the reported checkpoints are taken
at step $20{,}000$, corresponding to epoch $106$.

\paragraph{Qualitative Sudoku generation.}
As shown in Figure~\ref{fig:sudoku-25clue-comparison}, the baseline models
often recover many individual entries while failing to form a globally
consistent grid.
MDLM comes closest in this case, differing from the unique solution in only
eight cells, but even these sparse errors are sufficient to invalidate the
board.
In contrast, Simplax satisfies the coupled row, column, and subgrid
constraints simultaneously.
This example was selected from the held-out $25$-clue evaluation set to
illustrate the distinction between local agreement and global validity;
aggregate accuracy over the full evaluation sets is reported separately.

\begin{figure*}[t]
\centering
\includegraphics[width=\textwidth]{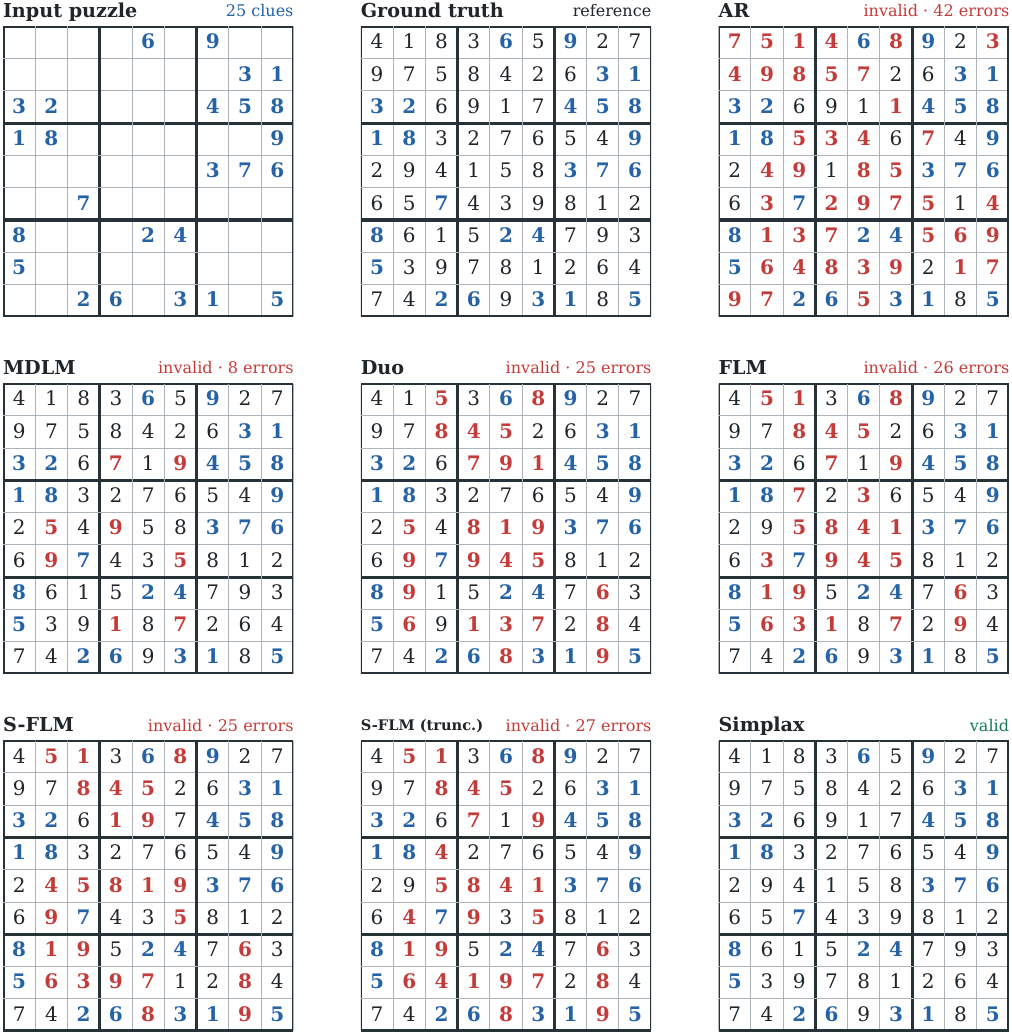}
\caption{
Generation results for a representative $25$-clue Sudoku puzzle at
NFE $=89$.
Blue entries are given clues, black entries agree with the unique solution,
and red entries differ from it.
All methods receive the same puzzle prefix.
Simplax produces the valid solution in this example, whereas the other methods
violate at least one Sudoku constraint.
}
\label{fig:sudoku-25clue-comparison}
\end{figure*}

%% file: main.bib
@inproceedings{
hoogeboom2021argmax,
title={Argmax Flows and Multinomial Diffusion: Learning Categorical Distributions},
author={Emiel Hoogeboom and Didrik Nielsen and Priyank Jaini and Patrick Forr{\'e} and Max Welling},
booktitle={Advances in Neural Information Processing Systems},
year={2021},
}

@inproceedings{
austin2021structured,
title={Structured Denoising Diffusion Models in Discrete State-Spaces},
author={Jacob Austin and Daniel D. Johnson and Jonathan Ho and Daniel Tarlow and Rianne van den Berg},
booktitle={Advances in Neural Information Processing Systems},
year={2021},
}

@inproceedings{
campbell2022a,
title={A Continuous Time Framework for Discrete Denoising Models},
author={Andrew Campbell and Joe Benton and Valentin De Bortoli and Tom Rainforth and George Deligiannidis and Arnaud Doucet},
booktitle={Advances in Neural Information Processing Systems},
year={2022},
}

@inproceedings{lou2024discrete,
title={Discrete Diffusion Modeling by Estimating the Ratios of the Data Distribution},
author={Lou, Aaron and Meng, Chenlin and Ermon, Stefano},
booktitle={International Conference on Machine Learning},
year={2024},
}

@inproceedings{
sahoo2024simple,
title={Simple and Effective Masked Diffusion Language Models},
author={Subham Sekhar Sahoo and Marianne Arriola and Aaron Gokaslan and Edgar Mariano Marroquin and Alexander M Rush and Yair Schiff and Justin T Chiu and Volodymyr Kuleshov},
booktitle={The Thirty-eighth Annual Conference on Neural Information Processing Systems},
year={2024},
}

@inproceedings{
shi2024simplified,
title={Simplified and Generalized Masked Diffusion for Discrete Data},
author={Jiaxin Shi and Kehang Han and Zhe Wang and Arnaud Doucet and Michalis Titsias},
booktitle={The Thirty-eighth Annual Conference on Neural Information Processing Systems},
year={2024},
}

@inproceedings{
schiff2025simple,
title={Simple Guidance Mechanisms for Discrete Diffusion Models},
author={Yair Schiff and Subham Sekhar Sahoo and Hao Phung and Guanghan Wang and Sam Boshar and Hugo Dalla-torre and Bernardo P de Almeida and Alexander M Rush and Thomas PIERROT and Volodymyr Kuleshov},
booktitle={The Thirteenth International Conference on Learning Representations},
year={2025},
}

@inproceedings{
ou2025your,
title={Your Absorbing Discrete Diffusion Secretly Models the Conditional Distributions of Clean Data},
author={Jingyang Ou and Shen Nie and Kaiwen Xue and Fengqi Zhu and Jiacheng Sun and Zhenguo Li and Chongxuan Li},
booktitle={The Thirteenth International Conference on Learning Representations},
year={2025},
}

@inproceedings{
zheng2025masked,
title={Masked Diffusion Models are Secretly Time-Agnostic Masked Models and Exploit Inaccurate Categorical Sampling},
author={Kaiwen Zheng and Yongxin Chen and Hanzi Mao and Ming-Yu Liu and Jun Zhu and Qinsheng Zhang},
booktitle={The Thirteenth International Conference on Learning Representations},
year={2025},
}

@inproceedings{
sahoo2025duality,
title={The Diffusion Duality},
author={Subham Sekhar Sahoo and Justin Deschenaux and Aaron Gokaslan and Guanghan Wang and Justin T Chiu and Volodymyr Kuleshov},
booktitle={Forty-second International Conference on Machine Learning},
year={2025},
}

@inproceedings{
zhang2025target,
title={Target Concrete Score Matching: A Holistic Framework for Discrete Diffusion},
author={Ruixiang Zhang and Shuangfei Zhai and Yizhe Zhang and James Thornton and Zijing Ou and Joshua M. Susskind and Navdeep Jaitly},
booktitle={Forty-second International Conference on Machine Learning},
year={2025},
}

@inproceedings{avdeyev2023dirichlet,
title={Dirichlet Diffusion Score Model for Biological Sequence Generation},
author={Avdeyev, Pavel and Shi, Chenlai and Tan, Yuhao and Dudnyk, Kseniia and Zhou, Jian},
booktitle={International Conference on Machine Learning},
year={2023},
}

@inproceedings{
stark2024dirichlet,
title={Dirichlet Flow Matching with Applications to {DNA} Sequence Design},
author={Hannes Stark and Bowen Jing and Chenyu Wang and Gabriele Corso and Bonnie Berger and Regina Barzilay and Tommi Jaakkola},
booktitle={Forty-first International Conference on Machine Learning},
year={2024},
}

@inproceedings{
chandra2026a,
title={A Unification of Discrete, Gaussian, and Simplicial Diffusion},
author={Nuria Alina Chandra and Yucen Lily Li and Alan Nawzad Amin and Alex Ali and Joshua Rollins and Sebastian W. Ober and Aniruddh Raghu and Andrew Gordon Wilson},
booktitle={The Fourteenth International Conference on Learning Representations},
year={2026},
}

@inproceedings{
deschenaux2026duality2,
title={The Diffusion Duality, Chapter {II}: \${\textbackslash}Psi\$-Samplers and Efficient Curriculum},
author={Justin Deschenaux and Caglar Gulcehre and Subham Sekhar Sahoo},
booktitle={The Fourteenth International Conference on Learning Representations},
year={2026},
}

@article{kullback1951information,
  author  = {Kullback, Solomon and Leibler, Richard A.},
  title   = {On Information and Sufficiency},
  journal = {The Annals of Mathematical Statistics},
  year    = {1951},
}

@book{cover2006elements,
  author    = {Cover, Thomas M. and Thomas, Joy A.},
  title     = {Elements of Information Theory},
  year      = {2006},
}

@inproceedings{
zheng2025cadd,
title={Continuously Augmented Discrete Diffusion model for Categorical Generative Modeling},
author={Huangjie Zheng and Shansan Gong and Ruixiang Zhang and Tianrong Chen and Jiatao Gu and Mingyuan Zhou and Navdeep Jaitly and Yizhe Zhang},
booktitle={The Fourteenth International Conference on Learning Representations},
year={2026},
}

@misc{pynadath2025candi,
      title={CANDI: Hybrid Discrete-Continuous Diffusion Models}, 
      author={Patrick Pynadath and Jiaxin Shi and Ruqi Zhang},
      year={2025},
}

@misc{lee2026flm,
      title={One-step Language Modeling via Continuous Denoising}, 
      author={Chanhyuk Lee and Jaehoon Yoo and Manan Agarwal and Sheel Shah and Jerry Huang and Aditi Raghunathan and Seunghoon Hong and Nicholas M. Boffi and Jinwoo Kim},
      year={2026},
}

@inproceedings{
richemond2022categorical,
title={Categorical {SDE}s with Simplex Diffusion},
author={Pierre Harvey Richemond and Sander Dieleman and Arnaud Doucet},
booktitle={ICML 2023 Workshop: Sampling and Optimization in Discrete Space},
year={2023},
}

@inproceedings{
floto2023diffusion,
title={Diffusion on the Probability Simplex},
author={Griffin Floto and Thorsteinn Jonsson and Mihai Nica and Scott Sanner and Eric Zhengyu Zhu},
booktitle={ICML 2023 Workshop: Sampling and Optimization in Discrete Space},
year={2023},
}

@article{radford2019language,
title={Language Models are Unsupervised Multitask Learners},
author={Radford, Alec and Wu, Jeffrey and Child, Rewon and Luan, David and Amodei, Dario and Sutskever, Ilya},
journal={OpenAI Technical Report},
year={2019},
}

@article{touvron2023llama,
  title   = {Llama 2: Open Foundation and Fine-Tuned Chat Models},
  author  = {Touvron, Hugo and Martin, Louis and Stone, Kevin and Albert, Peter
             and Almahairi, Amjad and Babaei, Yasmine and Bashlykov, Nikolay
             and Batra, Soumya and Bhargava, Prajjwal and Bhosale, Shruti
             and Bikel, Dan and Blecher, Lukas and Canton Ferrer, Cristian
             and Chen, Moya and Cucurull, Guillem and Esiobu, David
             and Fernandes, Jude and Fu, Jeremy and Fu, Wenyin and Fuller, Brian
             and Gao, Cynthia and Goswami, Vedanuj and Goyal, Naman
             and Hartshorn, Anthony and Hosseini, Saghar and Hou, Rui
             and Inan, Hakan and Kardas, Marcin and Kerkez, Viktor
             and Khabsa, Madian and Kloumann, Isabel and Korenev, Artem
             and Koura, Punit Singh and Lachaux, Marie-Anne and Lavril, Thibaut
             and Lee, Jenya and Liskovich, Diana and Lu, Yinghai and Mao, Yuning
             and Martinet, Xavier and Mihaylov, Todor and Mishra, Pushkar
             and Molybog, Igor and Nie, Yixin and Poulton, Andrew
             and Reizenstein, Jeremy and Rungta, Rashi and Saladi, Kalyan
             and Schelten, Alan and Silva, Ruan and Smith, Eric Michael
             and Subramanian, Ranjan and Tan, Xiaoqing Ellen and Tang, Binh
             and Taylor, Ross and Williams, Adina and Kuan, Jian Xiang
             and Xu, Puxin and Yan, Zheng and Zarov, Iliyan and Zhang, Yuchen
             and Fan, Angela and Kambadur, Melanie and Narang, Sharan
             and Rodriguez, Aurelien and Stojnic, Robert and Edunov, Sergey
             and Scialom, Thomas},
  journal = {arXiv preprint arXiv:2307.09288},
  year    = {2023}
}

@misc{gokaslan2019openwebtext,
title={{OpenWebText} Corpus},
author={Gokaslan, Aaron and Cohen, Vanya},
year={2019},
}

@misc{sflm,
      title={Language Modeling with Hyperspherical Flows}, 
      author={Justin Deschenaux and Caglar Gulcehre},
      year={2026},
}

@inproceedings{dit,
  title={Scalable diffusion models with transformers},
  author={Peebles, William and Xie, Saining},
  booktitle={Proceedings of the IEEE/CVF international conference on computer vision},
  year={2023}
}

@article{rope,
  title={Roformer: Enhanced transformer with rotary position embedding},
  author={Su, Jianlin and Ahmed, Murtadha and Lu, Yu and Pan, Shengfeng and Bo, Wen and Liu, Yunfeng},
  journal={Neurocomputing},
  volume={568},
  year={2024},
}

@misc{adam,
  title={Adam: A method for stochastic optimization},
  author={Kingma, Diederik P and Ba, Jimmy},
  year={2014}
}

@misc{langflow,
  title={LangFlow: Continuous Diffusion Rivals Discrete in Language Modeling}, 
  author={Yuxin Chen and Chumeng Liang and Hangke Sui and Ruihan Guo and Chaoran Cheng and Jiaxuan You and Ge Liu},
  year={2026},
}

@inproceedings{
rutte2026scaling,
title={Scaling Behavior of Discrete Diffusion Language Models},
author={Dimitri von R{\"u}tte and Janis Fluri and Omead Pooladzandi and Bernhard Sch{\"o}lkopf and Thomas Hofmann and Antonio Orvieto},
booktitle={The Fourteenth International Conference on Learning Representations},
year={2026},
}

@inproceedings{sahoo2026scaling,
      title={Scaling Beyond Masked Diffusion Language Models}, 
      author={Subham Sekhar Sahoo and Jean-Marie Lemercier and Zhihan Yang and Justin Deschenaux and Jingyu Liu and John Thickstun and Ante Jukic},
      year={2026},
      booktitle={The Forty-Third International Conference on Machine Learning},
}

@inproceedings{
rutte2025generalized,
title={Generalized Interpolating Discrete Diffusion},
author={Dimitri von R{\"u}tte and Janis Fluri and Yuhui Ding and Antonio Orvieto and Bernhard Sch{\"o}lkopf and Thomas Hofmann},
booktitle={Forty-second International Conference on Machine Learning},
year={2025},
}

@misc{alp2024sudoku,
  author       = {Ali Alp},
  title        = {Sudoku Puzzle Generator},
  year         = {2024},
  howpublished = {\url{https://github.com/alicommit-malp/sudoku}}
}

@article{lin2013specific,
  author  = {Hung-Hsuan Lin and I-Chen Wu and Tinghan Wei},
  title   = {On Specific 17-clue Sudoku Puzzles},
  journal = {ICGA Journal},
  year    = {2013},
}

@article{mcguire2014no16,
  author  = {Gary McGuire and Bastian Tugemann and Gilles Civario},
  title   = {There Is No 16-Clue Sudoku: Solving the Sudoku Minimum Number of
             Clues Problem via Hitting Set Enumeration},
  journal = {Experimental Mathematics},
  year    = {2014},
}

@inproceedings{di4c,
  title = 	 {Distillation of Discrete Diffusion through Dimensional Correlations},
  author =       {Hayakawa, Satoshi and Takida, Yuhta and Imaizumi, Masaaki and Wakaki, Hiromi and Mitsufuji, Yuki},
  booktitle = 	 {Proceedings of the 42nd International Conference on Machine Learning},
  year = 	 {2025},
}

@inproceedings{
vadd,
title={Variational Autoencoding Discrete Diffusion with Enhanced Dimensional Correlations Modeling},
author={Tianyu Xie and Shuchen Xue and Zijin Feng and Tianyang Hu and Jiacheng Sun and Zhenguo Li and Cheng Zhang},
booktitle={The Fourteenth International Conference on Learning Representations},
year={2026},
}

@inproceedings{
codd,
title={Breaking the Factorization Barrier in Diffusion Language Models},
author={Ian Li and Zilei Shao and Benjie Wang and Rose Yu and Guy Van den Broeck and Anji Liu},
booktitle={Forty-third International Conference on Machine Learning},
year={2026},
}
